\newtheorem{algorithm}{Algorithm}
\newenvironment{myproof}
{\par\noindent\textit{Proof}\ \enspace\ignorespaces\begin{allowdisplaybreaks}}
{\end{allowdisplaybreaks}\hspace{\stretch{1}}$\square$}
\begin{document}

\title{Running Time Analysis of the (1+1)-EA for OneMax and LeadingOnes under Bit-wise Noise\thanks{A preliminary version of this paper has appeared at GECCO'17~\cite{qian2017noise}.}
}
\subtitle{}

\titlerunning{Running Time Analysis of the (1+1)-EA under Bit-wise Noise}        

\author{Chao Qian$^{1}$         \and
        Chao Bian$^{1}$         \and
        Wu Jiang$^{1}$         \and
        Ke Tang$^{2*}$
}



\institute{1\quad Anhui Province Key Lab of Big Data Analysis and Application, School of Computer Science and Technology, University of Science and Technology of China, Hefei 230027, China\vspace{0.3em}\\
2\quad Shenzhen Key Lab of Computational Intelligence, Department of Computer Science and Engineering, Southern University of Science and Technology, Shenzhen 518055, China\vspace{0.3em}\\
*\quad Corresponding author, tangk3@sustc.edu.cn}

\date{Received: date / Accepted: date}

\maketitle

\begin{abstract}
In many real-world optimization problems, the objective function evaluation is subject to noise, and we cannot obtain the exact objective value. Evolutionary algorithms (EAs), a type of general-purpose randomized optimization algorithm, have been shown to be able to solve noisy optimization problems well. However, previous theoretical analyses of EAs mainly focused on noise-free optimization, which makes the theoretical understanding largely insufficient for the noisy case. Meanwhile, the few existing theoretical studies under noise often considered the one-bit noise model, which flips a randomly chosen bit of a solution before evaluation; while in many realistic applications, several bits of a solution can be changed simultaneously. In this paper, we study a natural extension of one-bit noise, the bit-wise noise model, which independently flips each bit of a solution with some probability. We analyze the running time of the (1+1)-EA solving OneMax and LeadingOnes under bit-wise noise for the first time, and derive the ranges of the noise level for polynomial and super-polynomial running time bounds. The analysis on LeadingOnes under bit-wise noise can be easily transferred to one-bit noise, and improves the previously known results. Since our analysis discloses that the (1+1)-EA can be efficient only under low noise levels, we also study whether the sampling strategy can bring robustness to noise. We prove that using sampling can significantly increase the largest noise level allowing a polynomial running time, that is, sampling is robust to noise.
\keywords{Noisy optimization \and evolutionary algorithms \and sampling \and running time analysis \and computational complexity}
\end{abstract}

\section{Introduction}

In real-world optimization tasks, the exact objective (i.e., fitness) function evaluation of candidate solutions is often impossible, instead we can obtain only a noisy one due to a wide range of uncertainties~\cite{jin2005evolutionary}. For example, in machine learning, a prediction model is evaluated only on a limited amount of data, which makes the estimated performance deviate from the true performance; in product design, the design variables can be subject to perturbations due to manufacturing tolerances, which brings noise.

In the presence of noise, the difficulty of solving an optimization problem may increase. Evolutionary algorithms (EAs)~\cite{back:96}, inspired by natural phenomena, are a type of randomized metaheuristic optimization algorithm. They are likely to be able to handle noise, since the corresponding natural phenomena have been well processed in noisy natural environments. In fact, EAs have been successfully applied to solve many noisy optimization problems~\cite{chang2006new,ma2006evolutionary}.

Compared with the application, the theoretical analysis of EAs is far behind. But in the last two decades, much effort has been devoted to the running time analysis (an essential theoretical aspect) of EAs. Numerous analytical results for EAs solving synthetic problems as well as combinatorial problems have been derived, e.g.,~\cite{auger2011theory,neumann2010bioinspired}. Meanwhile, a few general approaches for running time analysis have been proposed, e.g., drift analysis~\cite{doerr:goldberg:11,doerr:etal:GECCO10,he2001drift}, fitness-level methods~\cite{corus2014level,sudholt2011general}, and switch analysis~\cite{yu2014switch}.

However, previous running time analyses of EAs mainly focused on noise-free optimization, where the fitness evaluation is exact. Only a few pieces of work on noisy evolutionary optimization have been reported, which mainly considered two kinds of noise models, prior and posterior. The prior noise comes from the variation on a solution, e.g., one-bit noise~\cite{droste2004analysis} flips a random bit of a binary solution before evaluation with probability $p$. The posterior noise comes from the variation on the fitness of a solution, e.g., additive noise~\cite{giessen2014robustness} adds a value randomly drawn from some distribution. Droste~\cite{droste2004analysis} first analyzed the (1+1)-EA on the OneMax problem in the presence of one-bit noise and showed that the tight range of the noise probability $p$ allowing a polynomial running time is $O(\log n/n)$, where $n$ is the problem size. Gie{\ss}en and K{\"o}tzing~\cite{giessen2014robustness} recently studied the LeadingOnes problem, and proved that the expected running time is polynomial if $p \leq 1/(6en^2)$ and exponential if $p=1/2$. They also considered additive noise with variance $\sigma^2$, and proved that the (1+1)-EA can solve OneMax and LeadingOnes in polynomial time when $\sigma^2=O(\log n/n)$ and $\sigma^2\leq 1/(12en^2)$, respectively.

For inefficient optimization of the (1+1)-EA under high noise levels, some implicit mechanisms of EAs were proved to be robust to noise. In~\cite{giessen2014robustness}, it was shown that the ($\mu$+1)-EA with a small population of size $\Theta(\log n)$ can solve OneMax in polynomial time even if the probability of one-bit noise reaches 1. The robustness of populations to noise was also proved in the setting of non-elitist EAs~\cite{dang2015efficient,prugel2015run}. However, Friedrich et al.~\cite{friedrich2015benefit} showed the limitation of populations by proving that the ($\mu$+1)-EA needs super-polynomial time for solving OneMax under additive Gaussian noise $\mathcal{N}(0,\sigma^2)$ with $\sigma^2 \geq n^3$. This difficulty can be overcome by the compact genetic algorithm (cGA)~\cite{friedrich2015benefit} and a simple Ant Colony Optimization (ACO) algorithm~\cite{friedrich2015robustness}, both of which find the optimal solution in polynomial time with a high probability. ACO was also shown to be able to efficiently find solutions with reasonable approximations on some instances of the single-destination shortest path problem with edge weights disturbed by noise~\cite{doerr2012ants,feldmann2013optimizing,sudholt2012simple}.

The ability of explicit noise handling strategies was also theoretically studied. Qian et al.~\cite{qian2015noise} proved that the threshold selection strategy is robust to noise: the expected running time of the (1+1)-EA using threshold selection on OneMax under one-bit noise is always polynomial regardless of the noise probability $p$. For the (1+1)-EA solving OneMax under one-bit noise with $p=1$ or additive Gaussian noise $\mathcal{N}(0,\sigma^2)$ with $\sigma^2 \geq 1$, the sampling strategy was shown to be able to reduce the running time from exponential to polynomial~\cite{qian2016sampling}. The robustness of sampling to noise was also proved for the (1+1)-EA solving LeadingOnes under one-bit noise with $p=1/2$ or additive Gaussian noise with $\sigma^2 \geq n^2$. Akimoto et al.~\cite{akimoto2015analysis} proved that sampling with a large sample size can make optimization under additive unbiased noise behave as optimization in a noise-free environment. The interplay between sampling and implicit noise-handling mechanisms (e.g., crossover) has been statistically studied in~\cite{friedrich2017resampling}.

The noise models considered in the studies mentioned above are summarized in Table~\ref{table-noise}. We can observe that for the prior noise model, one-bit noise was mainly considered, which flips a random bit of a solution before evaluation with probability $p$. However, the noise model, which can change several bits of a solution simultaneously, may be more realistic and needs to be studied, as mentioned in the first noisy theoretical work~\cite{droste2004analysis}.

\begin{table*}\centering
\caption{The noise models mainly considered in running time analyses on noisy evolutionary optimization.}\label{table-noise}
\begin{tabular}{ll}
\hline
& \\[-8pt]
\!\!\!References & Noise models \!\!\!\\[2pt]
\hline
&   \\[-8pt]
\!\!\!Droste~\cite{droste2004analysis}, Qian et al.~\cite{qian2015noise} & one-bit noise \!\!\!\\[2pt]
\!\!\!Akimoto et al.~\cite{akimoto2015analysis} & additive noise\!\!\!\\[2pt]
\!\!\!Prugel-Bennett et al.~\cite{prugel2015run}, Friedrich et al.~\cite{friedrich2015robustness,friedrich2015benefit,friedrich2017resampling} & additive Gaussian noise\!\!\!\\[2pt]
\!\!\!Dang and Lehre~\cite{dang2015efficient}, Gie{\ss}en and K{\"o}tzing~\cite{giessen2014robustness} & one-bit noise, additive noise\!\!\!\\[2pt]
\!\!\!Qian et al.~\cite{qian2016sampling} & one-bit noise, additive Gaussian noise\!\!\!\\[2pt]
\!\!\!Doerr et al.~\cite{doerr2012ants}, Feldmann and K{\"o}tzing~\cite{feldmann2013optimizing}, & the single-destination shortest path \\
\!\!\!Sudholt and Thyssen~\cite{sudholt2012simple} & problem with stochastic edge weights\!\!\!\\[2pt]
\hline
& \\[-8pt]
\!\!\!This paper & bit-wise noise \!\!\!\\[2pt]
\hline
\end{tabular}\vspace{-1em}
\end{table*}

In this paper, we study the bit-wise noise model, which is characterized by a pair $(p,q)$ of parameters. It happens with probability $p$, and independently flips each bit of a solution with probability $q$ before evaluation. We analyze the running time of the (1+1)-EA solving OneMax and LeadingOnes under bit-wise noise with two specific parameter settings $(p,\frac{1}{n})$ and $(1,q)$. The ranges of $p$ and $q$ for a polynomial upper bound and a super-polynomial lower bound are derived, as shown in the middle row of Table~\ref{table_runtime}. For the (1+1)-EA on LeadingOnes, we also transfer the running time bounds from bit-wise noise $(p,\frac{1}{n})$ to one-bit noise by using the same proof procedure. As shown in the bottom right of Table~\ref{table_runtime}, our results improve the previously known ones~\cite{giessen2014robustness}.

Note that for the (1+1)-EA on LeadingOnes, the current analysis (as shown in the last column of Table~\ref{table_runtime}) does not cover all the ranges of $p$ and $q$. We thus conduct experiments to estimate the expected running time for the uncovered values of $p$ and $q$. The empirical results show that the currently derived ranges of $p$ and $q$ allowing a polynomial running time are possibly tight.

\begin{table*}\centering
\caption{For the running time of the (1+1)-EA on OneMax and LeadingOnes under prior noise models, the ranges of noise parameters for a polynomial upper bound and a super-polynomial lower bound are shown below.}\label{table_runtime}
\begin{tabular}{l|l|l}
\hline
& & \\[-8pt]
(1+1)-EA & OneMax & LeadingOnes  \\[2pt]
\hline
& &  \\[-8pt]
bit-wise noise $(p,\frac{1}{n})$ & $O(\log n /n)$, $\omega(\log n/n)$  &  $O(\log n /n^2)$, $\omega(\log n/n)$ \\[2pt]
bit-wise noise $(1,q)$ & $O(\log n /n^2)$, $\omega(\log n/n^2)$~\cite{giessen2014robustness}   & $O(\log n /n^3)$, $\omega(\log n/n^2)$
\\[2pt]
\hline
& &  \\[-8pt]
\multirow{2}{*}{one-bit noise} & \multirow{2}{*}{$O(\log n /n)$, $\omega(\log n/n)$~\cite{droste2004analysis}} & $ [0,1/(6en^2)], 1/2$~\cite{giessen2014robustness};\\
& & $O(\log n /n^2)$, $\omega(\log n/n)$\\[2pt]
\hline
\end{tabular}\vspace{-1em}
\end{table*}

\begin{table*}\centering
\caption{For the running time of the (1+1)-EA using sampling on OneMax and LeadingOnes under prior noise models, the ranges of noise parameters for a polynomial upper bound and a super-polynomial lower bound are shown below.}\label{table_runtime_sampling}
\begin{tabular}{l|l|l}
\hline
& & \\[-8pt]
\!\!(1+1)-EA using sampling\! & \!\!OneMax \!&\!\! LeadingOnes \!\! \\[2pt]
\hline
& &  \\[-8pt]
\!\!bit-wise noise $(p,\frac{1}{n})$ \!&\!\! $[0,1]$, $\emptyset$  \!&\!\!  $[0,1]$, $\emptyset$\!\!\\[2pt]
\!\!bit-wise noise $(1,q)$ \!&\!\! $1/2\!-\!1/n^{O(1)}$, $1/2\!-\!1/n^{\omega(1)} \cup [1/2,1]$   \!&\!\! $O(\log n /n)$, $\omega(\log n/n)\!\!$
\\[2pt]
\hline
& &  \\[-8pt]
\!\!one-bit noise \!&\!\! $[0,1]$, $\emptyset$ \!&\!\! $[0,1]$, $\emptyset$\!\!\\[2pt]
\hline
\end{tabular}\vspace{-1em}
\end{table*}

From the results in Table~\ref{table_runtime}, we find that the (1+1)-EA is efficient only under low noise levels. For example, for the (1+1)-EA solving OneMax under bit-wise noise $(p,\frac{1}{n})$, the expected running time is polynomial only when $p=O(\log n/n)$. We then study whether the sampling strategy can bring robustness to noise. Sampling is a popular way to cope with noise in fitness evaluation~\cite{arnold2006general}, which, instead of evaluating the fitness of one solution only once, evaluates the fitness multiple ($m$) times and then uses the average to approximate the true fitness. We analyze the running time of the (1+1)-EA using sampling under both bit-wise noise and one-bit noise. The ranges of $p$ and $q$ for a polynomial upper bound and a super-polynomial lower bound are shown in Table~\ref{table_runtime_sampling}. Our analysis covers all the ranges of $p$ and $q$. Note that for proving a polynomial upper bound, it is sufficient to show that using sampling with a specific sample size $m$ can guarantee a polynomial running time, while for proving a super-polynomial lower bound, we need to prove that using sampling with any polynomially bounded $m$ fails to guarantee a polynomial running time. Compared with the results in Table~\ref{table_runtime}, we find that using sampling significantly improves the noise-tolerance ability. For example, by using sampling with $m=4n^3$, the (1+1)-EA now can always solve OneMax under bit-wise noise $(p,\frac{1}{n})$ in polynomial time.

From the analysis, we also find the reason why sampling is effective or not. Let $f(x)$ and $f^n(x)$ denote the true and noisy fitness of a solution, respectively. For two solutions $x$ and $y$ with $f(x)>f(y)$, when the noise level is high (i.e., the values of $p$ and $q$ are large), the probability $\mathrm{P}(f^n(x) \leq f^n(y))$ (i.e., the true worse solution $y$ appears to be better) becomes large, which will mislead the search direction and then lead to a super-polynomial running time. In such a situation, if the expected gap between $f^n(x)$ and $f^n(y)$ is positive, sampling will increase this trend and make $\mathrm{P}(f^n(x) \leq f^n(y))$ sufficiently small; if it is negative (e.g., on OneMax under bit-wise noise $(1,q)$ with $q\geq 1/2$), sampling will continue to increase $\mathrm{P}(f^n(x) \leq f^n(y))$, and obviously will not work. We also note that if the positive gap between $f^n(x)$ and $f^n(y)$ is too small (e.g., on OneMax under bit-wise noise $(1,q)$ with $q= 1/2-1/n^{\omega(1)}$), a polynomial sample size will be not sufficient and sampling also fails to guarantee a polynomial running time.

This paper extends our preliminary work~\cite{qian2017noise}. Since the theoretical analysis on the LeadingOnes problem is not complete, we add experiments to complement the theoretical results (i.e., Section~\ref{sec-exp}). We also add the robustness analysis of sampling to noise (i.e., Section~\ref{sec-sampling}). Note that the robustness of sampling to one-bit noise has been studied in our previous work~\cite{qian2016sampling}. It was shown that sampling can reduce the running time of the (1+1)-EA from exponential to polynomial on OneMax when the noise probability $p=1$ as well as on LeadingOnes when $p=1/2$. Therefore, our results here are more general. We prove that sampling is effective for any value of $p$, as shown in the last row of Table~\ref{table_runtime_sampling}. Furthermore, we analyze the robustness of sampling to bit-wise noise for the first time.

The rest of this paper is organized as follows. Section~2 introduces some preliminaries. The running time analysis of the (1+1)-EA on OneMax and LeadingOnes under noise is presented in Sections~3 and~4, respectively. Section~5 analyzes the (1+1)-EA using sampling. Section~6 concludes the paper.

\section{Preliminaries}
In this section, we first introduce the optimization problems, noise models and evolutionary algorithms studied in this paper, respectively, then introduce the sampling strategy, and finally present the analysis tools that we use throughout this paper.

\subsection{OneMax and LeadingOnes}

In this paper, we use two well-known pseudo-Boolean functions OneMax and LeadingOnes. The OneMax problem as presented in Definition~\ref{def_onemax} aims to maximize the number of 1-bits of a solution. The LeadingOnes problem as presented in Definition~\ref{def_leadingones} aims to maximize the number of consecutive 1-bits counting from the left of a solution. Their optimal solution is $11\ldots1$ (briefly denoted as $1^n$). It has been shown that the expected running time of the (1+1)-EA on OneMax and LeadingOnes is $\Theta(n \log n)$ and $\Theta(n^2)$, respectively~\cite{droste2002analysis}. In the following analysis, we will use $\mathrm{LO}(x)$ to denote the number of leading 1-bits of a solution $x$.

\begin{definition}[OneMax]\label{def_onemax}
    The OneMax Problem of size $n$ is to find an $n$ bits binary
    string $x^*$ such that\vspace{-0.3em}
    $$
        x^*=\mathop{\arg\max}\nolimits_{x \in \{0,1\}^n} \left( f(x)=\sum\nolimits^{n}_{i=1} x_i\right).
    $$
\end{definition}

\begin{definition}[LeadingOnes]\label{def_leadingones}
    The LeadingOnes Problem of size $n$ is to find an $n$ bits binary
    string $x^*$ such that\vspace{-0.3em}
    $$
        x^*=\mathop{\arg\max}\nolimits_{x \in \{0,1\}^n} \left(f(x)=\sum\nolimits^{n}_{i=1} \prod\nolimits^{i}_{j=1} x_j\right).
    $$
\end{definition}

\subsection{Bit-wise Noise}

There are mainly two kinds of noise models: prior and posterior~\cite{giessen2014robustness,jin2005evolutionary}. Let $f^n(x)$ and $f(x)$ denote the noisy and true fitness of a solution $x$, respectively. The prior noise comes from the variation on a solution, i.e., $f^n(x)=f(x')$, where $x'$ is generated from $x$ by random perturbations. The posterior noise comes from the variation on the fitness of a solution, e.g., additive noise $f^n(x)=f(x)+\delta$ and multiplicative noise $f^n(x)=f(x)\cdot \delta$, where $\delta$ is randomly drawn from some distribution. Previous theoretical analyses involving prior noise~\cite{dang2015efficient,droste2004analysis,giessen2014robustness,qian2016sampling,qian2015noise} often focused on a specific model, one-bit noise. As presented in Definition~\ref{one-bit-noise}, it flips a random bit of a solution before evaluation with probability $p$. However, in many realistic applications, noise can change several bits of a solution simultaneously rather than only one bit. We thus consider the bit-wise noise model. As presented in Definition~\ref{bit-wise-noise}, it happens with probability $p$, and independently flips each bit of a solution with probability $q$ before evaluation. We use bit-wise noise $(p,q)$ to denote the bit-wise noise model with a scenario of $(p,q)$.

\begin{definition}[One-bit Noise]\label{one-bit-noise}
Given a parameter $p \in [0,1]$, let $f^{n}(x)$ and $f(x)$ denote the noisy and true fitness of a solution $x \in \{0,1\}^n$, respectively, then
\begin{align*}
f^n(x)=\begin{cases}
f(x) & \text{with probability $1-p$},\\
f(x') & \text{with probability $p$},
\end{cases}
\end{align*}
where $x'$ is generated by flipping a uniformly randomly chosen bit of $x$.
\end{definition}

\begin{definition}[Bit-wise Noise]\label{bit-wise-noise}
Given parameters $p,q \!\in\! [0,1]$, let $f^{n}(x)$ and $f(x)$ denote the noisy and true fitness of a solution $x \!\in\! \{0,1\}^n$, respectively, then\vspace{-0.6em}
\begin{align*}
f^n(x)=\begin{cases}
f(x) & \text{with probability $1-p$},\\
f(x') & \text{with probability $p$},
\end{cases}
\end{align*}
where $x'$ is generated by independently flipping each bit of $x$ with probability $q$.
\end{definition}

To the best of our knowledge, only bit-wise noise $(1,q)$ has been recently studied. Gie{\ss}en and K{\"o}tzing~\cite{giessen2014robustness} proved that for the (1+1)-EA solving OneMax, the expected running time is polynomial if $q=O(\log n /n^2)$ and super-polynomial if $q=\omega(\log n /n^2)$. Besides bit-wise noise $(1,q)$, we also study another specific model bit-wise noise $(p,\frac{1}{n})$ in this paper. Note that bit-wise noise $(p,\frac{1}{n})$ is a natural extension of one-bit noise; their random behaviors of perturbing a solution correspond to the two common mutation operators, bit-wise mutation and one-bit mutation, respectively.

To investigate whether the performance of the (1+1)-EA for bit-wise noise with two scenarios of $(p,q)$ and $(p',q')$ where $p\cdot q=p'\cdot q'$ can be significantly different, we consider the OneMax problem under bit-wise noise $(1, \frac{\log n}{30n})$ and $(\frac{\log n}{30n}, 1)$. The comparison gives a positive answer. For bit-wise noise $(1, \frac{\log n}{30n})$, we know that the (1+1)-EA needs super-polynomial time to solve OneMax~\cite{giessen2014robustness}, while for bit-wise noise $(\frac{\log n}{30n}, 1)$, we will prove in Theorem~\ref{theo-equal-pq} that the (1+1)-EA can solve OneMax in polynomial time. Thus, the analysis on general bit-wise noise without fixing $p$ or $q$ would be complicated, and $p\cdot q$ may not be the only deciding factor. We leave it as a future work.

\subsection{(1+1) Evolutionary Algorithm}

The (1+1)-EA as described in Algorithm~\ref{(1+1)-EA} is studied in this paper. For noisy optimization, only a noisy fitness value $f^n(x)$ instead of the exact one $f(x)$ can be accessed, and thus line~4 of Algorithm~\ref{(1+1)-EA} is ``if {$f^{n}(x') \geq f^{n}(x)$}" instead of ``if {$f(x') \geq f(x)$}". Note that the reevaluation strategy is used as in~\cite{doerr2012ants,droste2004analysis,giessen2014robustness}. That is, besides evaluating $f^{n}(x')$, $f^{n}(x)$ will be reevaluated in each iteration of the (1+1)-EA. The running time is usually defined as the number of fitness evaluations needed to find an optimal solution w.r.t. the true fitness function $f$ for the first time~\cite{akimoto2015analysis,droste2004analysis,giessen2014robustness}.

\begin{algorithm}[(1+1)-EA]\label{(1+1)-EA} Given a function $f$ over $\{0,1\}^n$ to be maximized, it consists of the following steps:\\
    \begin{tabular}{ll}
    1. & $x:=$ uniformly randomly selected from $\{0,1\}^{n}$.\\
    2. & Repeat until the termination condition is met\\
    3. & \quad $x':=$ flip each bit of $x$ independently with prob. $1/n$. \\
    4. &\quad if {$f^n(x') \geq f^n(x)$} \\
    5. &\quad \quad $x:=x'$.
    \end{tabular}
\end{algorithm}

\subsection{Sampling}

In noisy evolutionary optimization, sampling has often been used to reduce the negative effect of noise~\cite{aizawa1994scheduling,branke2003selection}. As presented in Definition~\ref{sampling}, it approximates the true fitness $f(x)$ using the average of multiple ($m$) independent random evaluations. For the (1+1)-EA using sampling, line~4 of Algorithm~\ref{(1+1)-EA} changes to be ``if {$\hat{f}(x') \geq \hat{f}(x)$}". Its pseudo-code is described in Algorithm~\ref{(1+1)-EA-sample}. Note that the sample size $m=1$ is equivalent to that sampling is not used. The effectiveness of sampling was not theoretically analyzed until recently. Qian et al.~\cite{qian2016sampling} proved that sampling is robust to one-bit noise and additive Gaussian noise. Particularly, under one-bit noise, it was shown that sampling can reduce the running time exponentially for the (1+1)-EA solving OneMax when the noise probability $p=1$ and LeadingOnes when $p=1/2$.

\begin{definition}[Sampling]\label{sampling}
Sampling first evaluates the fitness of a solution $m$ times independently and obtains the noisy fitness values $f^n_1(x),\ldots,f^n_m(x)$, and then outputs their average, i.e.,
$$
 \hat{f}(x)=\frac{1}{m}\sum\nolimits^{m}_{i=1} f^{n}_i(x).
$$
\end{definition}

\begin{algorithm}[(1+1)-EA with sampling]\label{(1+1)-EA-sample} Given a function $f$ over $\{0,1\}^n$ to be maximized, it consists of the following steps:\\
    \begin{tabular}{ll}
    1. & $x:=$ uniformly randomly selected from $\{0,1\}^{n}$.\\
    2. & Repeat until the termination condition is met\\
    3. & \quad $x':=$ flip each bit of $x$ independently with prob. $1/n$. \\
    4. &\quad if {$\hat{f}(x') \geq \hat{f}(x)$} \\
    5. &\quad \quad $x:=x'$.
    \end{tabular}
\end{algorithm}

\subsection{Analysis Tools}

The process of the (1+1)-EA solving any pseudo-Boolean function with one unique global optimum can be directly modeled as a Markov chain $\{\xi_t\}^{+\infty}_{t=0}$. We only need to take the solution space $\{0,1\}^n$ as the chain's state space (i.e., $\xi_t \in \mathcal{X}=\{0,1\}^n$), and take the optimal solution $1^n$ as the chain's optimal state (i.e., $\mathcal{X}^*=\{1^n\}$). Note that we can assume without loss of generality that the optimal solution is $1^n$, because unbiased EAs treat the bits 0 and 1 symmetrically, and thus the 0 bits in an optimal solution can be interpreted as 1 bits without affecting the behavior of EAs. Given a Markov chain $\{\xi_t\}^{+\infty}_{t=0}$ and $\xi_{\hat{t}}=x$, we define its \emph{first hitting time} (FHT) as $\tau=\min\{t \mid \xi_{\hat{t}+t} \in \mathcal{X}^*,t\geq0\}$. The mathematical expectation of $\tau$, $\mathrm{E}(\tau \mid \xi_{\hat{t}}=x)=\sum\nolimits^{+\infty}_{i=0} i\cdot\mathrm{P}(\tau=i \mid \xi_{\hat{t}}=x)$, is called the \emph{expected first hitting time} (EFHT) starting from $\xi_{\hat{t}}=x$. If $\xi_{0}$ is drawn from a distribution $\pi_{0}$, $\mathrm{E}(\tau \mid \xi_{0}\sim \pi_0) = \sum\nolimits_{x\in \mathcal{X}} \pi_{0}(x)\mathrm{E}(\tau \mid \xi_{0}=x)$ is called the EFHT of the Markov chain over the initial distribution $\pi_0$. Thus, the expected running time of the (1+1)-EA starting from $\xi_0 \sim \pi_0$ is equal to $1+2\cdot \mathrm{E}(\tau \mid \xi_{0} \sim \pi_0)$, where the term 1 corresponds to evaluating the initial solution, and the factor 2 corresponds to evaluating the offspring solution $x'$ and reevaluating the parent solution $x$ in each iteration. If using sampling, the expected running time of the (1+1)-EA is $m+2m\cdot \mathrm{E}(\tau \mid \xi_{0} \sim \pi_0)$, since estimating the fitness of a solution needs $m$ number of independent fitness evaluations. Note that we consider the expected running time of the (1+1)-EA starting from a uniform initial distribution in this paper.

In the following, we give three drift theorems that will be used to analyze the EFHT of Markov chains in the paper. The additive drift theorem~\cite{he2001drift} as presented in Theorem~\ref{additive-drift} is used to derive upper bounds on the EFHT of Markov chains. To use it, a function $V(x)$ has to be constructed to measure the distance of a state $x$ to the optimal state space $\mathcal{X}^*$. Then, we need to investigate the progress on the distance to $\mathcal{X}^*$ in each step, i.e., $\mathbb{E}(V(\xi_t)-V(\xi_{t+1}) \mid \xi_t)$. An upper bound on the EFHT can be derived through dividing the initial distance by a lower bound on the progress.

\begin{theorem}[Additive Drift~\cite{he2001drift}]\label{additive-drift}
Given a Markov chain $\{\xi_t\}^{+\infty}_{t=0}$ and a distance function $V(x)$, if for any $t \geq 0$ and any $\xi_t$ with $V(\xi_t) > 0$, there exists a real number $c>0$ such that $$\mathrm{E}(V(\xi_t)-V(\xi_{t+1}) \mid \xi_t) \geq c,$$ then the EFHT satisfies that $\mathrm{E}(\tau \mid \xi_0) \leq V(\xi_0)/c.$
\end{theorem}

The negative drift theorem~\cite{oliveto2011simplified,oliveto2011simplifiedErratum} as presented in Theorem~\ref{simplified-drift} was proposed to prove exponential lower bounds on the EFHT of Markov chains, where $X_t$ is usually represented by a mapping of $\xi_t$. It requires two conditions: a constant negative drift and exponentially decaying probabilities of jumping towards or away from the goal state. To relax the requirement of a constant negative drift, the negative drift theorem with self-loops~\cite{jonathan2014offspring} as presented in Theorem~\ref{simplified-drift-selfloops} has been proposed, which takes into account large self-loop probabilities.

\begin{theorem}[Negative Drift~\cite{oliveto2011simplified,oliveto2011simplifiedErratum}]\label{simplified-drift}
Let $X_t$, $t\geq0$, be real-valued random variables describing a stochastic process. Suppose there exists an interval $[a,b]$ $\subseteq \mathbb{R}$, two constants $\delta,\epsilon>0$ and, possibly depending on $l:=b-a$, a function $r(l)$ satisfying $1\leq r(l)=o(l/\log(l))$ such that for all $t\geq 0$ the following two conditions hold:
\begin{align}
&1. \quad \mathrm{E}(X_t-X_{t+1} \mid a < X_t <b) \leq -\epsilon,\\
&2. \quad \mathrm{P}(|X_{t+1}-X_t| \geq j \mid X_t>a) \leq \frac{r(l)}{(1+\delta)^j} \;\; \text{for} \; j \in \mathbb{N}_0.
\end{align}
Then there is a constant $c>0$ such that for $T:=\min\{t \geq 0: X_t \leq a \mid X_0 \geq b\}$ it holds $\mathrm{P}(T \leq 2^{cl/r(l)})=2^{-\Omega(l/r(l))}$.
\end{theorem}

\begin{theorem}[Negative Drift with Self-loops~\cite{jonathan2014offspring}]\label{simplified-drift-selfloops}
Let $X_t$, $t\geq0$, be real-valued random variables describing a stochastic process. Suppose there exists an interval $[a,b] \subseteq \mathbb{R}$, two constants $\delta,\epsilon>0$ and, possibly depending on $l:=b-a$, a function $r(l)$ satisfying $1\leq r(l)=o(l/\log(l))$ such that for all $t\geq 0$ the following two conditions hold:
\begin{align}
1.&\; \forall a<i<b: \mathrm{E}(X_t-X_{t+1} \mid X_t=i) \leq -\epsilon \cdot \mathrm{P}(X_{t+1} \neq i \mid X_t=i),\\
2.&\; \forall i \!>\!a, j \!\in\! \mathbb{N}_0: \mathrm{P}(|X_{t+1}\!-\!X_t| \!\geq\! j \mid X_t\!=\!i) \leq \frac{r(l)}{(1\!+\!\delta)^j}\cdot \mathrm{P}(X_{t+1} \!\neq\! i \mid X_t\!=\!i).
\end{align}
Then there is a constant $c>0$ such that for $T:=\min\{t \geq 0: X_t \leq a \mid X_0 \geq b\}$ it holds $\mathrm{P}(T \leq 2^{cl/r(l)})=2^{-\Omega(l/r(l))}$.
\end{theorem}

\section{The OneMax problem}

In this section, we analyze the running time of the (1+1)-EA on OneMax under bit-wise noise. Note that for bit-wise noise $(1,q)$, it has been proved that the expected running time is polynomial if and only if $q=O(\log n/n^2)$, as shown in Theorem~\ref{onemax-noise-1}.

\begin{theorem}\cite{giessen2014robustness}\label{onemax-noise-1}
For the (1+1)-EA on OneMax under bit-wise noise $(1,q)$, the expected running time is polynomial if $q =O(\log n/n^2)$ and super-polynomial if $q=\omega(\log n / n^2)$.
\end{theorem}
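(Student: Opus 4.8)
The plan is to reduce everything to a one-dimensional process and to locate a sharp threshold at $q=\Theta(\log n/n^2)$. Since OneMax and the mutation and (noisy) selection operators are all symmetric, the number of zero-bits $V_t := n-f(\xi_t)$ is itself a Markov chain, so I would track only $V_t$. The decisive dynamics happen near the optimum, where $V_t$ behaves like a birth--death chain: from a state with $j$ zeros the dominant one-step moves are $j\to j-1$ (flip one of the $j$ zeros and have the true improvement accepted) and $j\to j+1$ (flip one of the $n-j$ ones and have the worsening accepted through noise). Writing $p_-(j)$ and $p_+(j)$ for these probabilities, the central quantity is the ratio $\rho_j := p_+(j)/p_-(j)$. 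A short computation gives $p_-(j)=\Theta(j/n)$ (the forward acceptance probability is $\Theta(1)$, indeed $1-o(1)$ when $nq=o(1)$) and $p_+(j)=\Theta(nq)$, because a worsening offspring is accepted essentially when the re-evaluation of the parent loses at least one of its $\approx n$ one-bits while the offspring's noise does not, an event of probability $\Theta(nq)$. Hence $\rho_j=\Theta(n^2q/j)$, and the whole theorem hinges on the product $\prod_{i=1}^{j}\rho_i=\Theta\big((n^2q)^j/j!\big)$, which peaks at $j\approx n^2q$ with peak value $\Theta(e^{\Theta(n^2q)})$. This is polynomial in $n$ exactly when $n^2q=O(\log n)$, i.e. $q=O(\log n/n^2)$, which is the threshold in the statement.

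For the upper bound ($q=O(\log n/n^2)$) I would split the run into two phases. In the far phase, $j>C\log n$ for a suitable constant $C$; here $\rho_j<1$ and the forward drift dominates strongly, so additive or multiplicative drift (Lemma~\ref{additive-drift}) on a standard potential yields an $O(n\log n)$ bound on the expected number of steps to reach $j\le C\log n$, the only new point being to absorb the rare ($O(\log n/n)$ per step) backward moves caused by noise into the drift. In the near phase, $j\le C\log n$, the chain may have an adverse bias, so I would use the exact birth--death structure: build a potential $g$ from the ratios $\rho_i$, of birth--death hitting-time type with $g(C\log n)=\Theta\big(\sum_{k\le C\log n}\prod_{i\le k}\rho_i\big)$, verify that its one-step drift toward the optimum is at least a positive constant, and apply Lemma~\ref{additive-drift}. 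Since $g(C\log n)$ is dominated by the peak $\Theta(e^{\Theta(n^2q)})\le\mathrm{poly}(n)$, the expected time to reach $V_t=0$ is polynomial.

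For the super-polynomial lower bound ($q=\omega(\log n/n^2)$) I would apply the negative drift theorem with self-loops (Lemma~\ref{simplified-drift-selfloops}) to $X_t:=V_t$ on the interval $[a,b]$ with $a$ a constant and $b=\lfloor c\,n^2q\rfloor$ (capped below $n/4$ when $q$ is large). Throughout this interval $\rho_j=\Theta(n^2q/j)\ge 1+\delta$, so conditioned on an accepted move the drift is toward more zeros by a positive constant, giving condition~1. Condition~2 (exponentially decaying step sizes) follows because one step changes $V_t$ by the number of flipped mutation bits plus the net effect of noise, and both have exponentially decaying tails. The interval length is $l=\Theta(n^2q)=\omega(\log n)$, so the theorem yields a hitting time of $2^{\Omega(l)}=n^{\omega(1)}$ with overwhelming probability, while a uniformly random initial solution lies above $b$ with high probability.

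The main obstacle is the tight, two-sided estimation of the one-step acceptance probabilities $p_+(j)$ and $p_-(j)$ uniformly over the relevant range of $j$, since these reduce to controlling $\mathrm{P}(A-A'+B'-B\ge 1)$ for differences of nearly equal binomials $A,A'\sim\mathrm{Bin}(\Theta(n),q)$ and $B,B'\sim\mathrm{Bin}(O(\log n),q)$; getting the order $\rho_j=\Theta(n^2q/j)$ right is exactly what pins the threshold at $\log n/n^2$ rather than at some other power of $n$. Secondary difficulties are justifying the birth--death reduction rigorously by bounding the lower-order contribution of multi-level jumps (mutations or noise flipping two or more bits), and checking the preconditions of Lemma~\ref{simplified-drift-selfloops}, in particular the bounded-jump condition and the regime $q=\Omega(1/n)$, where the interval $[a,b]$ must be truncated but the conclusion only becomes stronger.
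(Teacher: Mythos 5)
You should first be aware that the paper contains no proof of Theorem~\ref{onemax-noise-1}: it is imported verbatim from \cite{giessen2014robustness}. The only in-paper material related to it is the pair of comparison-probability lemmas (Lemmas~\ref{onemax-noise-upper} and~\ref{onemax-noise-lower}), which are precisely the tools on which the cited proof is built, and which the paper reuses for bit-wise noise $(p,\frac{1}{n})$ in Theorems~\ref{onemax-noise-2-poly} and~\ref{onemax-noise-2-superpoly}. So the fair comparison is against that machinery.

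Your core mechanism is correct and is morally the same as the cited argument: the wrong-comparison probability between adjacent levels is $\Theta(nq)$ (the parent's re-evaluation loses one of its $\approx n$ one-bits, the offspring's evaluation is clean, and the tie is accepted), the forward acceptance probability is $1-o(1)$ when $nq=o(1)$, hence $\rho_j=\Theta(n^2q/j)$ and the threshold $n^2q=\Theta(\log n)$. The difference is organizational. The lemma-based route plugs two scalar estimates into ready-made theorems: $\mathrm{P}(f^n(x^j)\geq f^n(x^{k+1}))=O(nq)$ for \emph{all} $j\le k$ (this uniformity over non-adjacent levels is required by Eq.~(\ref{eq-upper-cond}) and follows from the same union bound) into Lemma~\ref{onemax-noise-upper} with $l=\Theta(\log n)$, and $\mathrm{P}(f^n(x^k)\geq f^n(x^{k+1}))=\Omega(nq)$ into Lemma~\ref{onemax-noise-lower} with $l=\Theta(n^2q)$. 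Those lemmas internalize exactly the items you list as your main obstacles: the two-phase split, the potential construction, and the bookkeeping for multi-bit jumps (including the relative, not absolute, jump-tail condition demanded by Lemma~\ref{simplified-drift-selfloops}). Your route re-derives that machinery by hand via a birth--death potential; it is self-contained and makes the threshold mechanism $\prod_i\rho_i=\Theta((n^2q)^j/j!)$ explicit, but at the cost of redoing work the available lemmas already encapsulate.

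The one genuine gap is the regime of large $q$. Your estimates $p_+(j)=\Theta(nq)$, $p_-(j)=\Theta(j/n)$, $\rho_j=\Theta(n^2q/j)$ are only meaningful when $nq=o(1)$, but the lower bound must cover every $q=\omega(\log n/n^2)$, including $q=\Omega(1/n)$, constant $q$, and even $q>1/2$. There the claim that $\rho_j\ge 1+\delta$ holds ``throughout this interval'' cannot be obtained from these formulas (they exceed $1$ as probabilities), and your assertion that after truncation ``the conclusion only becomes stronger'' is not automatic: as $q$ grows, the per-level signal $(1-2q)$ shrinks while the noise variance $\Theta(nq(1-q))$ grows, so what one actually proves is a different estimate, namely that the wrong-comparison probability is $\Omega(1)$ (an anti-concentration statement about the difference of two binomial noise terms; for $q\ge 1/2$ the signal is even inverted and the bound is immediate). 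That $\Omega(1)$ estimate then feeds into Lemma~\ref{onemax-noise-lower} with $l=\Theta(n)$ to give exponential time. This is a separate case analysis, not a limiting case of the $\Theta(nq)$ computation, and your proof is incomplete without it.
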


For bit-wise noise $(p,\frac{1}{n})$, we prove in Theorems~\ref{onemax-noise-2-poly} and~\ref{onemax-noise-2-superpoly} that the tight range of $p$ allowing a polynomial running time is $O(\log n/n)$. Instead of using the original drift theorems, we apply the upper and lower bounds of the (1+1)-EA on noisy OneMax in~\cite{giessen2014robustness}. Let $x^k$ denote any solution with $k$ number of 1-bits, and $f^n(x^{k})$ denote its noisy objective value, which is a random variable. Lemma~\ref{onemax-noise-upper} intuitively means that if the probability of recognizing the true better solution by noisy evaluation is large (i.e., Eq.~(\refeq{eq-upper-cond})), the running time can be upper bounded. Particularly, if Eq.~(\refeq{eq-upper-cond}) holds with $l=O(\log n)$, the running time can be polynomially upper bounded. On the contrary, Lemma~\ref{onemax-noise-lower} shows that if the probability of making a right comparison is small (i.e., Eq.~(\refeq{eq-lower-cond})), the running time can be lower bounded. Particularly, if Eq.~(\refeq{eq-lower-cond}) holds with $l=\Omega(n)$, the running time can be exponentially lower bounded. Both Lemmas~\ref{onemax-noise-upper} and~\ref{onemax-noise-lower} are proved by applying standard drift theorems, and can be used to simplify our analysis. Note that in the original upper bound of the (1+1)-EA on noisy OneMax (i.e., Theorem~5 in~\cite{giessen2014robustness}), it requires that Eq.~(\refeq{eq-upper-cond}) holds with only $j=k$, but the proof actually also requires that noisy OneMax satisfies the monotonicity property, i.e., for all $j < k<n$, $\mathrm{P}(f^n(x^k)<f^n(x^{k+1})) \leq \mathrm{P}(f^n(x^j)<f^n(x^{k+1}))$. We have combined these two conditions in Lemma~\ref{onemax-noise-upper} by requiring Eq.~(\refeq{eq-upper-cond}) to hold with any $j \leq k$ instead of only $j=k$.

\begin{lemma}\cite{giessen2014robustness}\label{onemax-noise-upper}
Suppose there is a positive constant $c \leq 1/15$ and some $2 < l \leq n/2$ such that
\begin{equation}
\begin{aligned}\label{eq-upper-cond}
& \forall j\leq k<n: \mathrm{P}(f^n(x^j) < f^n(x^{k+1})) \geq 1-\frac{l}{n};\\
& \forall j \leq k<n-l: \mathrm{P}(f^n(x^j) < f^n(x^{k+1})) \geq 1-c\frac{n-k}{n},
\end{aligned}
\end{equation}
then the (1+1)-EA optimizes $f$ in expectation in $O(n \log n) +n2^{O(l)}$ iterations.
\end{lemma}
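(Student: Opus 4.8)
The plan is to track the distance $i:=n-k$, the number of $0$-bits of the current search point (so $k$ is its number of $1$-bits), and to bound the expected first hitting time of the optimal state $i=0$ by a single application of the Additive Drift Theorem (Lemma~\ref{additive-drift}) to a carefully shaped distance function $V$. First I would translate the two probabilistic hypotheses~\eqref{eq-upper-cond} into one-step transition bounds for the (1+1)-EA. A clean forward step --- flipping exactly one of the $i$ zero-bits and no other bit --- occurs with probability at least $\frac{i}{en}$ and produces an offspring with $k+1$ ones; by~\eqref{eq-upper-cond} with $j=k$ it is accepted with probability at least $1-c\frac{n-k}{n}$ when $i>l$ and at least $1-\frac{l}{n}$ always. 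Conversely, any accepted step that strictly decreases the number of ones, from $k$ to some $m<k$, requires $f^n(x^m)\geq f^n(x^k)$; here the monotonicity (i.e.\ the ``for all $j\leq k$'') form of~\eqref{eq-upper-cond}, applied with $j=m$, bounds this acceptance probability by $c\frac{n-k+1}{n}$ (far) or $\frac{l}{n}$ (near) \emph{uniformly over all $m<k$}, that is, independently of the jump length. This uniformity is exactly what the diagonal $j=k$ version fails to give, and is the reason the authors reinstate the full hypothesis.

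With these bounds in hand, I would choose $V$ with positive increments $d_i:=V(i)-V(i-1)$, $V(0)=0$, and verify the drift condition $\mathrm{E}(V(X_t)-V(X_{t+1})\mid X_t=i)\geq c'$ for a positive constant $c'$ in two regimes. In the far regime $i>l$ the drift on $i$ is already favourable, so harmonic increments $d_i=\Theta(n/i)$ suffice: the forward contribution is at least $\frac{i}{en}(1-c)\cdot\Theta(n/i)=\Theta(1)$, while the backward contribution (summed over all jump lengths, using that the expected number of flipped one-bits is at most $k/n\leq 1$) is at most $\frac{c(i+1)}{n}\cdot\Theta(n/i)\cdot O(1)=\Theta(c)$; since $c\leq 1/15$, the forward part dominates with a constant margin. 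These increments telescope to $V(n)-V(l)=\Theta(n\ln(n/l))=O(n\log n)$.

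In the near regime $i\leq l$ the drift on $i$ may point away from the optimum, so Lemma~\ref{additive-drift} forces $V$ to be concave there, with geometrically shrinking increments that make one forward step worth far more than one backward step. Solving the balance $\frac{i}{2en}d_i\geq \Theta(\frac{l}{n})\,d_{i+1}+c'$ (using $l\leq n/2$, so the forward acceptance is at least $\tfrac12$) gives a recursion $d_{i+1}=\Theta(i/l)\,d_i$, so that $d_i/d_1$ shrinks factorially; imposing the drift lower bound at the hardest state $i=l$ and invoking Stirling's formula fixes $d_1=n\,2^{O(l)}$. Because the increments decrease, $V(l)=\sum_{i\leq l}d_i=\Theta(d_1)=n\,2^{O(l)}$. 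Adding the two regimes, $V(X_0)\leq V(n)=O(n\log n)+n\,2^{O(l)}$, and Lemma~\ref{additive-drift} yields the claimed bound $O(n\log n)+n\,2^{O(l)}$ on the expected number of iterations.

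The main obstacle I anticipate is the backward-drift control rather than the forward progress: one must bound, uniformly in the arbitrary number of simultaneously flipped bits, the probability that a strictly worse offspring is accepted, and then show that even with the geometrically large potential increments near the optimum these backward contributions stay a constant factor below the forward ones --- this is precisely where the full ``for all $j\leq k$'' monotonicity and the smallness of $c$ are needed. A secondary point of care is matching the two increment profiles across the boundary $i=l$ so that the drift bound $\geq c'$ holds at every state simultaneously, and checking that the $1/i$ factors in the near regime produce a factorial denominator, so that the crossing cost is $2^{O(l)}$ rather than $2^{O(l\log l)}$.
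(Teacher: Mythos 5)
There is an important mismatch of expectations here: the paper does \emph{not} prove Lemma~\ref{onemax-noise-upper}. It is imported verbatim (in strengthened form) from \cite{giessen2014robustness}, and the paper's only own contribution is the remark preceding it --- that the original statement required Eq.~(\refeq{eq-upper-cond}) only for $j=k$, while the original proof implicitly needs the monotone form for all $j\leq k$, which is why the hypothesis is restated this way. So there is no in-paper proof to compare your attempt against; it can only be judged on its own merits and against that remark.

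On that basis, your sketch is sound and is, in spirit, the argument the cited source uses (the paper itself says the lemma ``is proved by applying standard drift theorems''). Your translation of the hypotheses is correct: with $j=k$ you get the forward acceptance bounds, and the full ``for all $j\leq k$'' form is used exactly where you say it is --- to bound, \emph{uniformly over the jump length} $k-m$, the probability $c\frac{n-k+1}{n}$ (far) or $\frac{l}{n}$ (near) of accepting an offspring $x^m$ with $m<k$ ones; the diagonal form cannot do this, which is precisely the gap the paper's remark addresses. The quantitative skeleton also checks out: with harmonic increments $d_i=\Theta(n/i)$ the forward drift is $\Theta(1)\cdot(1-c)$ against a backward contribution $O(c)$ (using that the expected number of flipped 1-bits is at most $1$ and that the increments are non-increasing), so $c\leq 1/15$ leaves a constant margin, and the far-regime potential telescopes to $O(n\log n)$; in the near regime the recursion $d_i=\Theta(l/i)\,d_{i+1}$ anchored at $d_{l+1}=\Theta(n/l)$ unrolls to $d_1=\Theta\bigl(\frac{n}{l}\bigr)\cdot\frac{(\Theta(l))^{l}}{l!}=n2^{O(l)}$ by Stirling --- your observation that the factor $i/n$ in the forward probability is what cancels $l^l$ down to a constant power, giving $2^{O(l)}$ rather than $2^{O(l\log l)}$, is the right one --- and the weakest state is indeed $i=l$ since $i\,d_i$ is decreasing in $i$. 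The only caveats are those of any plan: the constants and the matching of the two increment profiles at $i=l$ are described rather than executed (one must check the concatenated increment sequence is non-increasing across the boundary so that backward jumps from the near regime into the far regime are still charged at most $d_{i+1}$ per flipped 1-bit), but I see no step that would fail when this is carried out.
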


\begin{lemma}\cite{giessen2014robustness}\label{onemax-noise-lower}
Suppose there is some $l \leq n/4$ and a constant $c \geq 16$ such that
\begin{equation}
\begin{aligned}\label{eq-lower-cond}
&\forall n-l \leq k < n: \mathrm{P}(f^n(x^k) < f^n(x^{k+1})) \leq 1-c\frac{n-k}{n},
\end{aligned}
\end{equation}
then the (1+1)-EA optimizes $f$ in $2^{\Omega(l)}$ iterations with a high probability.
\end{lemma}

First, we apply Lemma~\ref{onemax-noise-upper} to show that the expected running time is polynomially upper bounded for bit-wise noise $(p,\frac{1}{n})$ with $p=O(\log n/n)$.

\begin{theorem}\label{onemax-noise-2-poly}
For the (1+1)-EA on OneMax under bit-wise noise $(p,\frac{1}{n})$, the expected running time is polynomial if $p =O(\log n/n)$.
\end{theorem}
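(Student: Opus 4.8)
The plan is to apply Lemma~\ref{onemax-noise-upper}: it suffices to produce a positive constant $c \le 1/15$ together with some $2 < l \le n/2$ satisfying $l = O(\log n)$ for which both inequalities of~\eqref{eq-upper-cond} hold, since the guaranteed bound $O(n\log n) + n 2^{O(l)}$ then collapses to $n^{O(1)}$. Everything therefore reduces to a single uniform estimate: for all $j \le k < n$, an upper bound on the probability $\mathrm{P}(f^n(x^j) \ge f^n(x^{k+1}))$ that the truly worse solution $x^j$ wins (or ties) the noisy comparison against $x^{k+1}$.

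To obtain this estimate I would write, under bit-wise noise $(p,\tfrac1n)$, $f^n(x^j) = j + \Delta_j$ and $f^n(x^{k+1}) = (k+1) + \Delta_{k+1}$, where the two displacements are independent, each equals $0$ with probability $1-p$, and with probability $p$ equals (number of zero-bits that flip) minus (number of one-bits that flip), these counts being binomial with parameters $(n-j,\tfrac1n)$, $(j,\tfrac1n)$ and $(n-k-1,\tfrac1n)$, $(k+1,\tfrac1n)$ respectively. Setting $g := (k+1)-j \ge 1$, the bad event $\{\Delta_j - \Delta_{k+1} \ge g\}$ is contained in $\{\Delta_j \ge 1\} \cup \{\Delta_{k+1} \le -1\}$, because if $\Delta_j \le 0$ and $\Delta_{k+1} \ge 0$ the difference is nonpositive. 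A union bound then gives $\mathrm{P}(f^n(x^j) \ge f^n(x^{k+1})) \le \mathrm{P}(\Delta_j \ge 1) + \mathrm{P}(\Delta_{k+1}\le -1)$. A displacement can be positive only if at least one of the $n-j$ zero-bits of $x^j$ flips, and negative only if at least one of the $k+1$ one-bits of $x^{k+1}$ flips; weighting by the noise probability $p$ and bounding each ``at least one flip'' probability by the union bound $\le (\text{count})/n$ yields the clean inequality $\mathrm{P}(f^n(x^j) \ge f^n(x^{k+1})) \le p\tfrac{n-j}{n} + p\tfrac{k+1}{n}$. Routing everything through the unconditional set inclusion, rather than a four-way split over which solutions are hit by noise, absorbs the both-noise case automatically.

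It then remains to feed this bound into~\eqref{eq-upper-cond}. For the first (weaker) requirement, the bound $p\tfrac{n-j}{n} + p\tfrac{k+1}{n}$ is maximized at $j=0,\ k=n-1$, where it equals $2p$, so $l/n \ge 2p$, i.e. $l \ge 2pn$, is enough; since $p = O(\log n/n)$ we have $pn = O(\log n)$, so $l = \Theta(\log n)$ works. For the second (stronger) requirement I need $p(n-j) + p(k+1) \le c(n-k)$ for all $j \le k < n-l$. The loss term $p(k+1)$, which does not shrink as $k \to n$, is the crux; fixing $k$ and taking the worst case $j=0$ reduces the inequality to $pn + p(k+1) \le c(n-k)$, whose hardest instance is the largest admissible $k = n-l-1$, where it reads essentially $2pn \lesssim c\,l$. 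Choosing $c = 1/15$ and $l = \Theta(\log n)$ with a sufficiently large constant (at least about $2pn/c$, which is still $\Theta(\log n)$) makes this hold for all large $n$ while keeping $2 < l \le n/2$, and polynomiality follows.

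I expect the main obstacle to be exactly this second condition: its right-hand side must decay like $c(n-k)/n$ as one approaches the optimum, whereas the ``$x^{k+1}$ loses a one-bit'' contribution $p(k+1)/n$ stays of order $p$, so controlling it forces $l$ to grow like $pn$. The entire argument hinges on $p = O(\log n/n)$ making $pn = O(\log n)$, so that the induced $l = \Theta(\log n)$ still produces a polynomial $n 2^{O(l)}$; for larger $p$ the required $l$ would become $\omega(\log n)$ and the bound would no longer be polynomial, which is consistent with the matching lower bound in Theorem~\ref{onemax-noise-2-superpoly}. A secondary point of care is that~\eqref{eq-upper-cond} is demanded for every $j \le k$ (the lemma having already absorbed the monotonicity property), so I must verify the estimate uniformly in $j$ and not merely at $j=k$; the union-bound form derived above accomplishes this at no extra cost.
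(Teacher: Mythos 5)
Your proposal is correct and follows essentially the same route as the paper: both apply Lemma~\ref{onemax-noise-upper} with constant $c$ and $l=\Theta(\log n)$, after bounding $\mathrm{P}(f^n(x^j)\geq f^n(x^{k+1}))$ by a union bound over the two ways noise can corrupt the comparison. The only difference is cosmetic --- the paper bounds each of the two bad events simply by $p$ (noise must occur on that solution), giving the uniform bound $2p$, whereas you retain the extra bit-flip factors $\frac{n-j}{n}$ and $\frac{k+1}{n}$; since your bound is at most $2p$, the parameter choices and conclusion coincide.
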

\begin{myproof}
We prove it by using Lemma~\ref{onemax-noise-upper}. For some positive constant $b$, suppose that $p \leq b\log n/n$. We set the two parameters in Lemma~\ref{onemax-noise-upper} as $c=\min\{\frac{1}{15},b\}$ and $l=\frac{2b\log n}{c} \in (2,\frac{n}{2}]$.

For any $j \leq k<n$, $f^n(x^j) \geq f^n(x^{k+1})$ implies that $f^n(x^j) \geq k+1$ or $f^n(x^{k+1}) \leq k$, either of which happens with probability at most $p$. By the union bound, we get $\forall j \leq k<n$,
$$
 \mathrm{P}(f^n(x^j) \geq f^n(x^{k+1})) \leq 2p \leq \frac{2b\log n}{n}=\frac{lc}{n}\leq \frac{l}{n}.
$$
For any $j \leq k < n-l$, we easily get
$$
\mathrm{P}(f^n(x^j) \geq f^n(x^{k+1})) \leq \frac{lc}{n}< c\frac{n-k}{n}.
$$

By Lemma~\ref{onemax-noise-upper}, we know that the expected running time is $O(n \log n)+n2^{O(2b\log n/c)}$, i.e., polynomial.\vspace{0.8em}
\end{myproof}

Next we apply Lemma~\ref{onemax-noise-lower} to show that the expected running time is super-polynomial for bit-wise noise $(p,\frac{1}{n})$ with $p=\omega(\log n/n)$. Note that for $p=1-O(\log n/n)$, we actually give a stronger result that the expected running time is exponential.

\begin{theorem}\label{onemax-noise-2-superpoly}
For the (1+1)-EA on OneMax under bit-wise noise $(p,\frac{1}{n})$, the expected running time is super-polynomial if $p =\omega(\log n/n) \cap 1-\omega(\log n/n)$ and exponential if $p =1-O(\log n/n)$.
\end{theorem}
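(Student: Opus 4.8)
The plan is to prove both statements as running-time lower bounds through Lemma~\ref{onemax-noise-lower}, so the whole task reduces to exhibiting a window length $l \le n/4$ and a constant $c \ge 16$ for which $\mathrm{P}(f^n(x^k) < f^n(x^{k+1})) \le 1 - c\frac{n-k}{n}$ holds for every $k$ with $n-l \le k < n$. Writing $m = n-k \in [1,l]$, this is equivalent to a uniform lower bound on the probability $\mathrm{P}(f^n(x^k) \ge f^n(x^{k+1}))$ that noise makes the truly worse string $x^k$ look at least as good as $x^{k+1}$. Since $l \le n/4$ forces $k \ge 3n/4$ throughout the window, both strings are near-optimal and almost all their bits are ones, which is precisely the regime where bit-wise noise is harmful.

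The heart of the argument is a single scenario that works for every value of $p$. Using that the noise on $x^k$ and on $x^{k+1}$ is drawn independently (reevaluation), I would lower-bound $\mathrm{P}(f^n(x^k) \ge f^n(x^{k+1}))$ by the intersection of two independent events: (i) $x^{k+1}$ is disturbed and its mutation flips exactly one of its $k+1$ one-bits and no other bit, so that $f^n(x^{k+1}) = k$; and (ii) the evaluation of $x^k$ does not decrease its number of ones, so $f^n(x^k) \ge k$. Event (i) has probability $p\,(k+1)\frac1n(1-\frac1n)^{n-1} \ge c_3 p$ for a constant $c_3>0$, because $k+1 \ge 3n/4$ and $(1-\frac1n)^{n-1}$ is bounded below by a positive constant. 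Event (ii) holds whenever $x^k$ is noise-free or is disturbed but loses no one-bit, so $\mathrm{P}(f^n(x^k)\ge k) \ge (1-p) + p(1-\frac1n)^{k} \ge (1-\frac1n)^{k} \ge c_2$ for a constant $c_2>0$ (note this covers $p=1$). Since the two events are independent and together give $f^n(x^k) \ge k = f^n(x^{k+1})$, I obtain
\[
\mathrm{P}(f^n(x^k) \ge f^n(x^{k+1})) \;\ge\; c_2 c_3\, p \;=:\; c_4\, p
\]
uniformly for all $k \in [3n/4,n)$, with $c_4>0$ an absolute constant.

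It then remains to feed this estimate into Lemma~\ref{onemax-noise-lower} with the right window. Fixing $c=16$, the bound gives $\mathrm{P}(f^n(x^k) < f^n(x^{k+1})) \le 1 - c_4 p \le 1 - 16\frac{n-k}{n}$ as soon as $n-k \le c_4 p n/16$, so I would set $l = \lfloor \min\{c_4 p n/16,\ n/4\}\rfloor$. For the super-polynomial claim, $p=\omega(\log n/n)$ gives $pn=\omega(\log n)$, hence $l=\omega(\log n)$ while still $l\le n/4$, and the lemma yields $2^{\Omega(l)}=n^{\omega(1)}$ iterations with high probability; for the exponential claim, $p=1-O(\log n/n)\ge 1/2$ for large $n$ forces $l=\Theta(n)$ and thus $2^{\Omega(n)}$ iterations. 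A high-probability lower bound of $g(n)$ iterations gives an expected number of iterations, and hence of fitness evaluations, of at least $(1-o(1))g(n)$, so the expected running time is super-polynomial, respectively exponential. The main obstacle is the uniform estimate of the second paragraph: one must check that the constant-in-$n$ lower bound $c_4 p$ on the wrong-comparison probability holds across the \emph{entire} window $k\in[n-l,n)$ and that its $\Theta(p)$ scaling beats the required slope $16(n-k)/n$ — it is exactly this that fixes the attainable window length $l$ and thereby separates the super-polynomial regime ($l=\omega(\log n)$) from the exponential one ($l=\Theta(n)$). The underlying binomial tail estimates and the high-probability-to-expectation conversion are then routine.
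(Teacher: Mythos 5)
Your proof is correct, and it rests on the same pillars as the paper's own argument: Lemma~\ref{onemax-noise-lower} with $c=16$, and the very same misleading scenario in which noise removes exactly one 1-bit from $x^{k+1}$ (probability $p\,\frac{k+1}{n}(1-\frac1n)^{n-1}=\Omega(p)$) while the evaluation of $x^k$ stays at least $k$. The one genuine difference is how the second event is bounded. The paper splits into two cases: for $p=\omega(\log n/n)\cap 1-\omega(\log n/n)$ it uses $\mathrm{P}(f^n(x^k)=k)\geq 1-p$ (noise absent) with window $l=b\log n$, and for $p=1-O(\log n/n)$ it switches to $\mathrm{P}(f^n(x^k)=k)\geq p(1-\frac1n)^n$ (noise present but flipping nothing) with window $l=\sqrt n$; this case split is forced because the product bound $(1-p)\,p\,\frac{k+1}{en}$ degenerates as $p\to 1$. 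You instead add the two disjoint sub-events, $\mathrm{P}(f^n(x^k)\geq k)\geq (1-p)+p(1-\frac1n)^k\geq (1-\frac1n)^k=\Omega(1)$, which is a constant for \emph{every} $p\in[0,1]$, and then let the window length $l=\Theta(\min\{pn,n\})$ be dictated by the resulting $\Omega(p)$ lower bound on the wrong-comparison probability. This buys you a single uniform argument in place of the paper's two cases, and in fact slightly stronger conclusions: super-polynomial time for all $p=\omega(\log n/n)$ with no upper restriction on $p$, and exponential time whenever $p=\Omega(1)$, both of which subsume the theorem as stated. The remaining bookkeeping in your write-up (independence of the two evaluations, the constraint $k\geq 3n/4$ guaranteed by $l\leq n/4$, and the conversion from a high-probability lower bound to an expectation bound) is sound.
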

\begin{myproof}
We use Lemma~\ref{onemax-noise-lower} to prove it. Let $c = 16$. The case $p =\omega(\log n/n) \cap 1-\omega(\log n/n)$ is first analyzed. For any positive constant $b$, let $l=b\log n$. For any $k \geq n-l$, we get
$$
\mathrm{P}(f^n(x^k) \geq f^{n}(x^{k+1})) \geq \mathrm{P}(f^n(x^k)=k) \cdot \mathrm{P}(f^n(x^{k+1})\leq k).
$$
To make $f^n(x^k)=k$, it is sufficient that the noise does not happen, i.e., $\mathrm{P}(f^n(x^k)=k) \geq 1-p$. To make $f^n(x^{k+1})\leq k$, it is sufficient to flip one 1-bit and keep other bits unchanged by noise, i.e., $\mathrm{P}(f^n(x^{k+1})\leq k) \geq p\cdot \frac{k+1}{n}(1-\frac{1}{n})^{n-1}$. Thus,
$$
\mathrm{P}(f^n(x^k) \geq f^{n}(x^{k+1})) \geq (1-p)\cdot p\frac{k+1}{en}=\omega(\log n/n).
$$
Since $c\frac{n-k}{n} \leq c\frac{l}{n}=\frac{cb\log n}{n}$, the condition of Lemma~\ref{onemax-noise-lower} holds. Thus, the expected running time is $2^{\Omega(b\log n)}$ (where $b$ is any constant), i.e., super-polynomial.

For the case $p =1-O(\log n/n)$, let $l=\sqrt{n}$. We use another lower bound $p(1-\frac{1}{n})^{n}$ for $\mathrm{P}(f^n(x^k)=k)$, since it is sufficient that no bit flips by noise. Thus, we have
$$
\mathrm{P}(f^n(x^k) \geq f^{n}(x^{k+1})) \geq p\left(1-\frac{1}{n}\right)^{n}\cdot p \frac{k+1}{en}=\Omega(1).
$$
Since $c\frac{n-k}{n} \leq \frac{c\sqrt{n}}{n}$, the condition of Lemma~\ref{onemax-noise-lower} holds. Thus, the expected running time is $2^{\Omega(\sqrt{n})}$, i.e., exponential.\vspace{0.8em}
\end{myproof}

To show that the performance of the (1+1)-EA for bit-wise noise with two scenarios $(p,q)$ and $(p',q')$ where $p\cdot q=p'\cdot q'$ can be significantly different, we compare the expected running time of the (1+1)-EA for bit-wise noise $ (1,\frac{\log n}{30n}) $ and $(\frac{\log n}{30n},1)$. For the former case, we know from Theorem~\ref{onemax-noise-1} that the expected running time is super-polynomial, while for the latter case, we prove in the following theorem that the expected running time can be polynomially upper bounded.

\begin{theorem}\label{theo-equal-pq}
For the (1+1)-EA on OneMax under bit-wise noise $(\frac{\log n}{30n},1)$, the expected running time is polynomial.
\end{theorem}
\begin{myproof}
We use Lemma~\ref{onemax-noise-upper} to prove it. For any $ j\le k<n $, $ f^n(x^j)\ge f^n(x^{k+1}) $ implies that the fitness evaluation of $ x^j $ or $ x^{k+1} $ is affected by noise, whose probability is at most $ 2\cdot \frac{\log n}{30n} =\frac{\log n}{15n}$. Thus, we have $ \mathrm{P}(f^n(x^j)<f^n(x^{k+1}))\ge 1-\frac{\log n}{15n}$. It is then easy to verify that the condition of Lemma~\ref{onemax-noise-upper} holds with $ c=\frac{1}{15} $ and $ l=\log n $. Thus, the expected running time is polynomial.
\end{myproof}

\section{The LeadingOnes problem}

In this section, we first analyze the running time of the (1+1)-EA on the LeadingOnes problem under bit-wise noise $(p,\frac{1}{n})$ and bit-wise noise $(1,q)$, respectively. Then, we transfer the analysis from bit-wise noise $(p,\frac{1}{n})$ to one-bit noise; the results are complementary to the known ones recently derived in~\cite{giessen2014robustness}. However, our analysis does not cover all the ranges of $p$ and $q$. For those values of $p$ and $q$ where no theoretical results are known, we conduct experiments to empirically investigate the running time.

\subsection{Bit-wise Noise $(p,\frac{1}{n})$}

For bit-wise noise $(p,\frac{1}{n})$, we first apply the additive drift theorem (i.e., Theorem~\ref{additive-drift}) to prove that the expected running time is polynomial when $p=O(\log n/n^2)$.

\begin{theorem}\label{leadingones-noise-2-poly}
For the (1+1)-EA on LeadingOnes under bit-wise noise $(p,\frac{1}{n})$, the expected running time is polynomial if $p =O(\log n/n^2)$.
\end{theorem}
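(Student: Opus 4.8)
The plan is to apply the additive drift theorem (Lemma~\ref{additive-drift}), but with a carefully weighted distance function rather than the naive choice $V(x)=n-\mathrm{LO}(x)$, where $\mathrm{LO}(x)$ denotes the number of leading ones of $x$. A preliminary analysis of why the naive choice fails dictates the whole design, so I would carry that out first.

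I would isolate the two effects that drive the one-step drift from a state with $\mathrm{LO}(x)=k<n$. For \emph{progress}, it suffices that mutation flips the first $0$-bit (position $k+1$) while keeping the leading block $1,\dots,k$, and that neither the offspring nor the reevaluated parent is disturbed by noise; this event has probability at least $\tfrac{1}{en}(1-p)^2$ and raises $\mathrm{LO}$ by at least one. For a \emph{setback} to a level $k'<k$, the offspring must lose leading ones down to exactly $k'$ (probability $(1-\tfrac1n)^{k'}\tfrac1n$) and the noisy comparison must still accept it; since two noise-free evaluations would reject, acceptance needs noise, and the dominant contribution comes from noise on the reevaluated parent driving its apparent value $\mathrm{LO}(\tilde x)\le k'$, which occurs with probability at most $p\,(1-(1-\tfrac1n)^{k'+1})$ (the mechanism that instead inflates the offspring is of strictly lower order and I would only need to check that it is dominated).

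The key observation — and the reason the naive distance is hopeless — is that weighting $(k-k')$ against these setback probabilities yields an expected decrease of order $p\cdot n=\Theta(\log n/n)$, which dominates the progress of order $1/n$: the rare-but-large backward jumps win. To repair this I would use the exponentially weighted distance $V(x)=\sum_{i=\mathrm{LO}(x)}^{n-1}(1+\gamma)^{i}$ with $\gamma=\Theta(\log n/n)$ chosen so that $\gamma\ge 2epn$ (concretely $\gamma=2eb\log n/n$ when $p\le b\log n/n^2$). A one-step progress is then worth at least $(1+\gamma)^k$, whereas a setback to $k'$ costs $\sum_{j=k'}^{k-1}(1+\gamma)^{j}\le (1+\gamma)^k/\gamma$. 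Weighting the setback cost by its probability and exchanging the order of summation, the total backward drift is bounded by $\tfrac{p}{n}\sum_{j<k}(1+\gamma)^{j}S(j)$, where $S(j)=\sum_{k'\le j}(1-\tfrac1n)^{k'}(1-(1-\tfrac1n)^{k'+1})=O(n)$; this telescopes to $O(p/\gamma)\,(1+\gamma)^k$, which the choice of $\gamma$ makes at most half of the forward drift, leaving a net drift of at least $c=\Omega(1/n)>0$.

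Finally I would conclude: since $(1+\gamma)^{n}\le e^{\gamma n}=n^{O(1)}$, the distance is polynomially bounded, $V(x)\le (1+\gamma)^{n}/\gamma=\mathrm{poly}(n)$, so Lemma~\ref{additive-drift} gives $\mathrm{E}(\tau\mid\xi_0)\le V(\xi_0)/c=\mathrm{poly}(n)$, and the expected running time $1+2\,\mathrm{E}(\tau)$ is polynomial. The main obstacle is the setback analysis: obtaining a sharp enough bound on both the probability and the magnitude of accepting a worse solution, confirming that noise on the parent is the dominant mechanism, and pinning down the constant in $\gamma$ so that the exponentially amplified backward drift is provably strictly smaller than the forward drift. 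The remaining estimates (the geometric sums, the bound $S(j)=O(n)$, and the polynomial bound on $V$) are routine.
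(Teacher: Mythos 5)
Your proposal is correct and follows essentially the same route as the paper's proof: additive drift (Lemma~\ref{additive-drift}) with an exponentially weighted distance --- your $V(x)=\sum_{i=\mathrm{LO}(x)}^{n-1}(1+\gamma)^i$ is exactly the paper's $\bigl((1+c/n)^n-(1+c/n)^{\mathrm{LO}(x)}\bigr)/\gamma$ with $\gamma=c/n=\Theta(\log n/n)$ --- together with the same forward/backward drift decomposition, where accepting a worse offspring requires noise (probability $O(p\cdot\mathrm{LO}/n)$) so that the backward drift is $O(p/\gamma)\,(1+\gamma)^{\mathrm{LO}}$ and is dominated by the forward drift $\Omega(1/n)\,(1+\gamma)^{\mathrm{LO}}$ once $\gamma=\Omega(pn)$. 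The only differences are bookkeeping: the paper bounds the setback magnitude and acceptance probability uniformly over all worse levels (rather than per level with your summation exchange) and lower-bounds the forward acceptance probability by a union bound rather than your $(1-p)^2$, but these choices are equivalent up to constants.
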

\begin{myproof}
We use Theorem~\ref{additive-drift} to prove it. For some positive constant $b$, suppose that $p \leq b\log n /n^2$. Let $\theta \in (0,1)$ be some constant close to 0. We first construct a distance function $V(x)$ as, for any $x$ with $\mathrm{LO}(x)=i$,
\begin{align}\label{eq-distance-1}
V(x)=\left(1+\frac{c}{n}\right)^n-\left(1+\frac{c}{n}\right)^i,
\end{align}
where $c=\frac{2b\log n}{1-\theta}+1$. It is easy to verify that $V(x \in \mathcal{X}^*=\{1^n\})=0$ and $V(x \notin \mathcal{X}^*)>0$.

Then, we investigate $\mathrm{E}(V(\xi_t)-V(\xi_{t+1}) \mid \xi_t=x)$ for any $x$ with $\mathrm{LO}(x)<n$ (i.e., $x \notin \mathcal{X^*}$). Assume that currently $\mathrm{LO}(x)=i$, where $0\leq i \leq n-1$. Let $\mathrm{P}_{mut}(x,x')$ denote the probability of generating $x'$ by mutation on $x$. We divide the drift into two parts: positive $\mathrm{E}^+$ and negative $\mathrm{E}^-$. That is, $$\mathrm{E}(V(\xi_t)-V(\xi_{t+1}) \mid \xi_t=x)=\mathrm{E}^+-\mathrm{E}^-,$$ where
\begin{equation}
\begin{aligned}\label{eq-positive-drift}
&\mathrm{E}^+=\sum_{x': \mathrm{LO}(x')>i}\mathrm{P}_{mut}(x,x')\cdot \mathrm{P}(f^n(x') \geq f^n(x)) \cdot (V(x)-V(x')),
\end{aligned}
\end{equation}
\begin{equation}
\begin{aligned}\label{eq-negative-drift}
&\mathrm{E}^-=\sum_{x': \mathrm{LO}(x')<i}\mathrm{P}_{mut}(x,x')\cdot \mathrm{P}(f^n(x') \geq f^n(x))\cdot (V(x')-V(x)).
\end{aligned}
\end{equation}

For the positive drift $\mathrm{E}^+$, we need to consider that the number of leading 1-bits is increased. By mutation, we have
\begin{equation}
\begin{aligned}\label{eq-mut-1}
&\sum_{x': \mathrm{LO}(x')>i}\mathrm{P}_{mut}(x,x')=\mathrm{P}(\mathrm{LO}(x') \geq i+1)=\left(1-\frac{1}{n}\right)^i\frac{1}{n},
\end{aligned}
\end{equation}
since it needs to flip the $(i+1)$-th bit (which must be 0) of $x$ and keep the $i$ leading 1-bits unchanged. For any $x'$ with $\mathrm{LO}(x') \geq i+1$, $f^n(x') < f^n(x)$ implies that $f^n(x') \leq i-1$ or $f^n(x) \geq i+1$. Note that,
\begin{equation}
\begin{aligned}\label{eq-bit-wise-1}
&\mathrm{P}(f^n(x') \leq i-1)=p\left(1-\left(1-\frac{1}{n}\right)^i\right),
\end{aligned}
\end{equation}
since at least one of the first $i$ leading 1-bits of $x'$ needs to be flipped by noise;
\begin{equation}
\begin{aligned}\label{eq-bit-wise-2}
&\mathrm{P}(f^n(x) \geq i+1)=p\left(1-\frac{1}{n}\right)^i\frac{1}{n},
\end{aligned}
\end{equation}
since it needs to flip the first 0-bit of $x$ and keep the leading 1-bits unchanged by noise. By the union bound, we get
\begin{equation}
\begin{aligned}\label{eq-bit-wise-s1}
&\mathrm{P}(f^n(x') \geq f^n(x)) = 1- \mathrm{P}(f^n(x') < f^n(x))\\
&\geq 1-p\left(1-\left(1-\frac{1}{n}\right)^{i+1}\right) \geq 1-p\frac{i+1}{n} \geq 1-p \geq 1-\theta,
\end{aligned}
\end{equation}
where the last inequality holds with sufficiently large $n$, since $p=O(\log n/n^2)$ and $\theta \in (0,1)$ is some constant close to 0. Furthermore, for any $x'$ with $V(x')\geq i+1$,
\begin{equation}
\begin{aligned}\label{eq-drift-1}
&V(x)-V(x') \geq \left(1+\frac{c}{n}\right)^{i+1}-\left(1+\frac{c}{n}\right)^i=\frac{c}{n}\left(1+\frac{c}{n}\right)^i.
\end{aligned}
\end{equation}
By combining Eqs.~(\refeq{eq-mut-1}),~(\refeq{eq-bit-wise-s1}) and~(\refeq{eq-drift-1}), we have
\begin{equation}
\begin{aligned}
&\mathrm{E}^+ \geq \left(1-\frac{1}{n}\right)^i\frac{1}{n} \cdot (1-\theta) \cdot \frac{c}{n}\left(1+\frac{c}{n}\right)^i \geq \frac{(1-\theta)c}{3n^2}\left(1+\frac{c}{n}\right)^i,
\end{aligned}
\end{equation}
where the last inequality is by $(1-\frac{1}{n})^{i} \geq (1-\frac{1}{n})^{n-1} \geq \frac{1}{e}\geq \frac{1}{3}$.

For the negative drift $\mathrm{E}^-$, we need to consider that the number of leading 1-bits is decreased. By mutation, we have
\begin{equation}
\begin{aligned}\label{eq-mut-2}
&\sum_{x': \mathrm{LO}(x')<i}\mathrm{P}_{mut}(x,x')=\mathrm{P}(\mathrm{LO}(x') \leq i-1)=1-\left(1-\frac{1}{n}\right)^i,
\end{aligned}
\end{equation}
since it needs to flip at least one leading 1-bit of $x$. For any $x'$ with $\mathrm{LO}(x') \leq i-1$ (where $i \geq 1$), $f^n(x') \geq f^n(x)$ implies that $f^n(x') \geq i$ or $f^n(x) \leq i-1$. Note that,
\begin{align}\label{eq-bit-wise-3}
&\mathrm{P}(f^n(x') \geq i)\leq p\left(1-\frac{1}{n}\right)^{i-1}\frac{1}{n},
\end{align}
since for the first $i$ bits of $x'$, it needs to flip the 0-bits (whose number is at least 1) and keep the 1-bits unchanged by noise;
\begin{align}\label{eq-bit-wise-4}
&\mathrm{P}(f^n(x) \leq i-1)=p\left(1-\left(1-\frac{1}{n}\right)^i\right),
\end{align}
since at least one leading 1-bit of $x$ needs to be flipped by noise. By the union bound, we get
\begin{align}\label{eq-bit-wise-s2}
&\mathrm{P}(f^n(x') \geq f^n(x)) \leq p-p\left(1-\frac{2}{n}\right)\left(1-\frac{1}{n}\right)^{i-1} \leq p\frac{i+1}{n}.
\end{align}
Furthermore, according to the definition of the distance function (i.e., Eq.~(\refeq{eq-distance-1})), we have for any $x'$ with $0 \leq \mathrm{LO}(x') \leq i-1$,
\begin{align}\label{eq-drift-2}
&V(x')-V(x) =\left(1+\frac{c}{n}\right)^i-\left(1+\frac{c}{n}\right)^{\mathrm{LO}(x')}\leq \left(1+\frac{c}{n}\right)^i-1.
\end{align}
By combining Eqs.~(\refeq{eq-mut-2}),~(\refeq{eq-bit-wise-s2}) and~(\refeq{eq-drift-2}), we have
\begin{align}
\mathrm{E}^- &\leq \left(1-\left(1-\frac{1}{n}\right)^i\right)\cdot p\frac{i+1}{n} \cdot \left(\left(1+\frac{c}{n}\right)^i-1\right)  \\
&\leq \left(1-\frac{1}{e}\right) \cdot p \cdot \left(1+\frac{c}{n}\right)^i\leq \frac{2p}{3}\left(1+\frac{c}{n}\right)^i.
\end{align}

Thus, by subtracting $\mathrm{E}^-$ from $\mathrm{E}^+$, we have
\begin{align}\label{eq-sum-1}
&\mathrm{E}(V(\xi_t)-V(\xi_{t+1}) \mid \xi_t=x)\geq  \left(1+\frac{c}{n}\right)^i  \left(\frac{(1-\theta)c}{3n^2}-\frac{2p}{3}\right)\\
& \geq \left(1+\frac{c}{n}\right)^i \left(\frac{2b\log n+1-\theta}{3n^2}-\frac{2b \log n}{3n^2}\right) \geq \frac{1-\theta}{3n^2},
\end{align}
where the second inequality is by $c=\frac{2b\log n}{1-\theta}+1$ and $p \leq b \log n/n^2$. Note that $V(x)\leq (1+\frac{c}{n})^n \leq e^c=e^{\frac{2b\log n}{1-\theta}+1}=en^{\frac{2b}{1-\theta}}$. By Theorem~\ref{additive-drift}, we get
$$
\mathrm{E}(\tau \mid \xi_0) \leq \frac{3n^2}{1-\theta} \cdot en^{\frac{2b}{1-\theta}}=O\left(n^{\frac{2b}{1-\theta}+2}\right),
$$
i.e., the expected running time is polynomial.\vspace{0.8em}
\end{myproof}

Next we use the negative drift with self-loops theorem (i.e., Theorem~\ref{simplified-drift-selfloops}) to show that the expected running time is super-polynomial for bit-wise noise $(p,\frac{1}{n})$ with $p=\omega(\log n/n) \cap o(1)$.

\begin{theorem}\label{leadingones-noise-2-superpoly}
For the (1+1)-EA on LeadingOnes under bit-wise noise $(p,\frac{1}{n})$, if $p =\omega(\log n/n) \cap o(1)$, the expected running time is super-polynomial.
\end{theorem}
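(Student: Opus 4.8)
The plan is to apply the simplified drift theorem with self-loops (Lemma~\ref{simplified-drift-selfloops}) to the process $X_t = n-\mathrm{LO}(\xi_t)$, where $\mathrm{LO}(x)$ denotes the true number of leading $1$-bits, so that the optimum $1^n$ corresponds to $X_t=0$ and ``reaching the optimum'' becomes the event $X_t\le a$. The mechanism driving the negative drift is the reevaluation of the parent: when $\mathrm{LO}(x)=i$ is $\Theta(n)$, in each iteration the reevaluated parent has, with probability $p\bigl(1-(1-\frac{1}{n})^i\bigr)=\Theta(p)$, at least one of its leading $1$-bits flipped by noise, collapsing $f^n(x)$ to a small value, after which an offspring with \emph{fewer} leading ones is accepted. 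Since $p=\omega(\log n/n)$, this corrupting event is far more likely than the $\Theta(1/n)$ probability of a genuine improvement (flipping the first $0$-bit and keeping all leading ones intact under both mutation and reevaluation), so $\mathrm{LO}$ tends to decrease. I would fix an interval $[a,b]$ with $a$ a small constant and $l:=b-a=\omega(\log n)\cap o(n)$; the hypothesis $p=o(1)$ is exactly what permits $l=o(n)$, which keeps $\mathrm{LO}=\Theta(n)$ throughout the window (so the drift stays uniformly negative) while forcing the resulting bound $2^{\Omega(l/r(l))}$ to be super-polynomial rather than exponential.

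For the drift condition, I would lower-bound the expected one-step decrease of $\mathrm{LO}$ by summing over offspring values $\mathrm{LO}(x')=i-d$ the product of the mutation probability of producing that value (first flipped leading bit at position $i-d+1$, about $(1-\frac{1}{n})^{i-d}/n$) and the acceptance probability (dominated by $f^n(x)\le i-d$, about $p\bigl(1-(1-\frac{1}{n})^{i-d+1}\bigr)$). This gives $\mathrm{E}(\mathrm{LO}(\xi_{t+1})-\mathrm{LO}(\xi_t)\mid X_t=i)=-\Theta(pn)$, while the probability of a non-self-loop step is $\mathrm{P}(X_{t+1}\neq i\mid X_t=i)=\Theta(p)$ (accepted worsening moves dominate). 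Hence $\mathrm{E}(X_t-X_{t+1}\mid X_t=i)=-\Theta(pn)\le-\epsilon\,\mathrm{P}(X_{t+1}\neq i\mid X_t=i)$ holds comfortably for any constant $\epsilon$, establishing the first condition.

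The hard part will be the jump condition. In the improving direction the tails are benign: flipping the first $0$-bit triggers a free ride of geometrically distributed length, so $\mathrm{P}(\mathrm{LO}\text{ increases by}\ge j)=2^{-\Omega(j)}$. The obstacle is the worsening direction. The estimate above shows that an accepted decrease lands the offspring roughly uniformly across $\mathrm{LO}\in[0,i]$, so the typical accepted worsening move has size $\Theta(n)$ and, crucially, $\mathrm{P}(|X_{t+1}-X_t|\ge j\mid X_t=i)=\Theta(p)=\Theta(\mathrm{P}(X_{t+1}\neq i))$ for every $j$ up to $\Theta(n)$; there is no envelope $r(l)(1+\delta)^{-j}$ with $r(l)=o(l/\log l)$ that dominates such a flat, heavy profile. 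These large jumps only push $X_t$ \emph{away} from the target $a$ and so cannot accelerate hitting it, so the intended remedy is to run the drift argument only on window excursions, treating each large worsening move as an ejection to the upper end $X_t=b$; the coupled, truncated chain satisfies $X'_t\le X_t$ pointwise, hence reaches $a$ no later than the true chain, and a super-polynomial lower bound for it transfers. Carrying this out so that the truncated chain genuinely meets the exponential-tail jump condition of Lemma~\ref{simplified-drift-selfloops}---rather than merely having its jump range bounded by $l$, which alone does not suffice---is the delicate point I expect to be the main obstacle, and is where the choice of $r(l)$ must be made to reconcile $r(l)=o(l/\log l)$ with $l=\omega(\log n)$.
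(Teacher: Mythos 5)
Your drift computation for $X_t = n-\mathrm{LO}(\xi_t)$ is fine, and you have correctly diagnosed the fatal feature of this potential: an accepted worsening step flips a single leading 1-bit at a roughly uniform position, so the jump-size profile is essentially flat up to $\Theta(n)$, and no envelope $r(l)(1+\delta)^{-j}$ with $r(l)=o(l/\log l)$ can dominate it. But the remedy you sketch does not close this gap, and in fact cannot: if you eject every over-long worsening move to the upper boundary $b$, the truncated chain makes a jump of size $l=b-X_t$ with probability $\Theta(p)$ (for $\mathrm{LO}(x)=k=\Theta(n)$, the accepted moves whose LO-drop exceeds $l$ have total probability $\sum_{j \le k-l}\frac{1}{en}\cdot\Theta\bigl(\frac{pj}{n}\bigr) = \Theta(p)$), while the non-self-loop probability is also $\Theta(p)$; condition (2) of Lemma~\ref{simplified-drift-selfloops} at $j=l$ would then force $r(l) \geq (1+\delta)^{l}$, contradicting $r(l)=o(l/\log l)$. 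So the truncation device leaves you exactly where you started, and you concede as much; the proposal is therefore incomplete at precisely its decisive step.

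The missing idea is not a refinement of the window argument but a different potential: the paper takes $X_t = |x|_0$, the number of 0-bits (Hamming distance to $1^n$), over the interval $[0, c\log n]$. This makes the jump condition essentially trivial, because $|X_{t+1}-X_t| \geq j$ requires mutation to actually flip at least $j$ bits; after matching the conditional bounds against a lower bound on the non-self-loop probability, the paper gets condition (2) with $\delta=1$ and $r(l)=144$. Crucially, the negative drift survives this change of potential: a noise-induced acceptance of an offspring whose LO value collapses still increases $|x|_0$ by only $1$ (one flipped 1-bit), and such acceptances occur with probability $\Theta\bigl(\frac{pk^2}{n^2} + \frac{n-i-k}{n}\bigr)$ per iteration, where $k=\mathrm{LO}(x)$ and $i=|x|_0$. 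Since $i \leq c\log n$, this always dominates the positive drift $i/n$: when $k \geq n-np$ it is $\Omega(p)=\omega(\log n/n)$, and when $k < n-np$ it is $\Omega\bigl(\frac{n-k}{n}\bigr)=\omega(\log n/n)$. In short, measuring distance by 0-bits converts your heavy-tailed LO-collapses into unit steps of the potential while preserving their frequency (which is what drives the drift downward), and that single substitution is the idea your proposal is missing.
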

\begin{myproof}
We use Theorem~\ref{simplified-drift-selfloops} to prove it. Let $X_t=|x|_0$ be the number of 0-bits of the solution $x$ after $t$ iterations of the (1+1)-EA. Let $c$ be any positive constant. We consider the interval $[0,c \log n]$, i.e., the parameters $a=0$ (i.e., the global optimum) and $b=c \log n$ in Theorem~\ref{simplified-drift-selfloops}.

Then, we analyze the drift $\mathrm{E}(X_t-X_{t+1} \mid X_t=i)$ for $1\leq i<c \log n$. As in the proof of Theorem~\ref{leadingones-noise-2-poly}, we divide the drift into two parts: positive $\mathrm{E}^+$ and negative $\mathrm{E}^-$. That is, $$\mathrm{E}(X_t-X_{t+1} \mid X_t=i)=\mathrm{E}^+-\mathrm{E}^-,$$ where
\begin{equation}
\begin{aligned}
&\mathrm{E}^+=\sum_{x': |x'|_0<i}\mathrm{P}_{mut}(x,x')\cdot \mathrm{P}(f^n(x') \geq f^n(x)) \cdot (i-|x'|_0),\\
&\mathrm{E}^-=\sum_{x': |x'|_0>i}\mathrm{P}_{mut}(x,x')\cdot \mathrm{P}(f^n(x') \geq f^n(x))\cdot (|x'|_0-i).
\end{aligned}
\end{equation}
Note that the drift here depends on the number of 0-bits due to the definition of $X_t$. It is different from that in the proof of Theorem~\ref{leadingones-noise-2-poly}, which depends on the number of leading 1-bits due to the definition of the distance function (i.e., Eq.~(\refeq{eq-distance-1})).

For the positive drift $\mathrm{E}^+$, we need to consider that the number of 0-bits is decreased. For mutation on $x$ (where $|x|_0=i$), let $X$ and $Y$ denote the number of flipped 0-bits and 1-bits, respectively. Then, $X \sim B(i,\frac{1}{n})$ and $Y \sim B(n-i,\frac{1}{n})$, where $B(\cdot,\cdot)$ is the binomial distribution. To estimate an upper bound on $\mathrm{E}^+$, we assume that the offspring solution $x'$ with $|x'|_0 <i$ is always accepted. Thus, we have
\begin{align}\label{eq-positive-drift-1}
\mathrm{E}^+& \leq \sum_{x':|x'|_0 <i} \mathrm{P}_{mut}(x,x')(i-|x'|_0)=\sum^i_{k=1} k \cdot \mathrm{P}(X-Y=k)\\
&=\sum\nolimits^i_{k=1} k \cdot \sum\nolimits^i_{j=k} \mathrm{P}(X=j) \cdot \mathrm{P}(Y=j-k)\\
&=\sum\nolimits^i_{j=1} \sum\nolimits^j_{k=1} k \cdot \mathrm{P}(X=j) \cdot \mathrm{P}(Y=j-k)\\
&\leq \sum\nolimits^i_{j=1}  j \cdot \mathrm{P}(X=j)=\frac{i}{n}.
\end{align}

For the negative drift $\mathrm{E}^-$, we need to consider that the number of 0-bits is increased. We analyze the $n-i$ cases where only one 1-bit is flipped (i.e., $|x'|_0=i+1$), which happens with probability $\frac{1}{n}(1-\frac{1}{n})^{n-1}$. Assume that $\mathrm{LO}(x)=k \leq n-i$. If the $j$-th (where $1 \leq j \leq k$) leading 1-bit is flipped, the offspring solution $x'$ will be accepted (i.e., $f^n(x') \geq f^n(x)$) if $f^n(x') \geq j-1$ and $f^n(x) \leq j-1$. Note that,
\begin{equation}
\begin{aligned}\label{eq-bit-wise-5}
\mathrm{P}(f^n(x') \geq j-1) = 1-p+p\left(1-\frac{1}{n}\right)^{j-1}\geq 1-p\frac{j-1}{n} \geq \frac{1}{2},
\end{aligned}
\end{equation}
where the equality is since it needs to keep the $j-1$ leading 1-bits of $x'$ unchanged, and the last inequality is by $p=o(1)$;
\begin{align}\label{eq-bit-wise-6}
&\mathrm{P}(f^n(x) \leq j-1)=p\left(1-\left(1-\frac{1}{n}\right)^{j}\right)\\
&=p\left(1-\frac{1}{n}\right)^j\left(\left(1+\frac{1}{n-1}\right)^j-1\right) \geq \frac{p}{e} \cdot \frac{j}{n-1} \geq \frac{pj}{3n},
\end{align}
where the equality is since at least one of the first $j$ leading 1-bits of $x$ needs to be flipped by noise. Thus, we get
\begin{align}\label{eq-bit-wise-s3}
&\mathrm{P}(f^n(x') \geq f^n(x)) \geq \frac{pj}{6n}.
\end{align}
If one of the $n-i-k$ non-leading 1-bits is flipped, $\mathrm{LO}(x')=\mathrm{LO}(x)=k$. We can use the same analysis procedure as Eq.~(\refeq{eq-bit-wise-s1}) in the proof of Theorem~\ref{leadingones-noise-2-poly} to derive that
\begin{align}\label{eq-bit-wise-s4}
&\mathrm{P}(f^n(x') \geq f^n(x)) \geq 1-p\frac{k+1}{n} \geq \frac{1}{2},
\end{align}
where the last inequality is by $p=o(1)$. Combining all the $n-i$ cases, we get
\begin{align}\label{eq-drift-3}
\mathrm{E}^- &\geq \frac{1}{n}\left(1-\frac{1}{n}\right)^{n-1} \cdot \left(\sum^k_{j=1} \frac{pj}{6n}+\frac{n-i-k}{2}\right) \cdot (i+1-i)\\
&\geq \frac{1}{en}\left(\frac{pk(k+1)}{12n}+\frac{n-i-k}{2}\right) \geq \frac{pk^2}{36n^2}+\frac{n-i-k}{6n}.
\end{align}

By subtracting $\mathrm{E}^-$ from $\mathrm{E}^+$, we get
$$\mathrm{E}(X_t-X_{t+1} \mid X_t=i)\leq \frac{i}{n}-\frac{pk^2}{36n^2}-\frac{n-i-k}{6n}.$$ To investigate condition (1) of Theorem~\ref{simplified-drift-selfloops}, we also need to analyze the probability $\mathrm{P}(X_{t+1} \neq i \mid X_t=i)$. For $X_{t+1}\neq i$, it is necessary that at least one bit of $x$ is flipped and the offspring $x'$ is accepted. We consider two cases: (1) at least one of the $k$ leading 1-bits of $x$ is flipped; (2) the $k$ leading 1-bits of $x$ are not flipped and at least one of the last $n-k$ bits is flipped. For case~(1), the mutation probability is $1-(1-\frac{1}{n})^k$ and the acceptance probability is at most $p\frac{k+1}{n}$ by Eq.~(\refeq{eq-bit-wise-s2}). For case~(2), the mutation probability is $(1-\frac{1}{n})^{k}(1-(1-\frac{1}{n})^{n-k}) \leq \frac{n-k}{n}$ and the acceptance probability is at most $1$. Thus, we have
\begin{align}\label{eq-mid1}
&\mathrm{P}(X_{t+1} \neq i \mid X_t=i) \leq p+\frac{n-k}{n}.
\end{align}
When $k<n-np$, we have
\begin{align}\label{eq-mid2}
&\mathrm{E}(X_t-X_{t+1} \mid X_t=i)\leq  \frac{i}{n}-\frac{n-i-k}{6n}\\
&\leq -\frac{n-k}{12n}-\frac{np/2-7c\log n}{6n}\leq -\frac{n-k}{12n} \leq  -\frac{1}{24}\left(p+\frac{n-k}{n}\right),
\end{align}
where the second inequality is by $n-k>np$ and $i< c \log n$, the third inequality is by $p =\omega(\log n/n)$ and the last is by $n-k>np$. When $k\geq n-np$, we have
\begin{align}\label{eq-mid3}
&\mathrm{E}(X_t-X_{t+1} \mid X_t=i)\leq  \frac{i}{n}-\frac{pk^2}{36n^2}\\
& \leq \frac{c\log n}{n}-\frac{p}{144}\leq -\frac{p}{288} \leq -\frac{1}{576}\left(p+\frac{n-k}{n}\right),
\end{align}
where the second inequality is by $p=o(1)$ and $i< c \log n$, the third is by $p=\omega(\log n/n)$ and the last is by $n-k \leq np$. Combining Eqs.~(\refeq{eq-mid1}),~(\refeq{eq-mid2}) and~(\refeq{eq-mid3}), we get that condition (1) of Theorem~\ref{simplified-drift-selfloops} holds with $\epsilon =\frac{1}{576}$.

For condition (2) of Theorem~\ref{simplified-drift-selfloops}, we need to show $\mathrm{P}(|X_{t+1}-X_{t}|\geq j \mid X_t =i) \leq \frac{r(l)}{(1+\delta)^j}\cdot \mathrm{P}(X_{t+1} \neq i \mid X_t=i)$ for $i \geq 1$. For $\mathrm{P}(X_{t+1} \neq i \mid X_t=i)$, we analyze the $n$ cases where only one bit is flipped. Using the similar analysis procedure as $\mathrm{E}^-$, except that flipping any bit rather than only 1-bit is considered here, we easily get
\begin{align}\label{eq-mid4}
&\mathrm{P}(X_{t+1} \neq i \mid X_t=i) \geq \frac{pk(k+1)}{36n^2}+\frac{n-k}{6n}.
\end{align}
For $|X_{t+1}-X_{t}|\geq j$, it is necessary that at least $j$ bits of $x$ are flipped and the offspring solution $x'$ is accepted. We consider two cases: (1) at least one of the $k$ leading 1-bits is flipped; (2) the $k$ leading 1-bits are not flipped. For case~(1), the mutation probability is at most $\frac{k}{n}\binom{n-1}{j-1}\frac{1}{n^{j-1}}$ and the acceptance probability is at most $p\frac{k+1}{n}$ by Eq.~(\refeq{eq-bit-wise-s2}). For case~(2), the mutation probability is at most $(1-\frac{1}{n})^k\binom{n-k}{j}\frac{1}{n^{j}}$ and the acceptance probability is at most 1. Thus, we have
\begin{align}\label{eq-mid5}
&\mathrm{P}(|X_{t+1}-X_{t}|\geq j \mid X_t =i) \\
&\leq \frac{k}{n}\binom{n-1}{j-1}\frac{1}{n^{j-1}} \cdot p\frac{k+1}{n} +\left(1-\frac{1}{n}\right)^k\binom{n-k}{j}\frac{1}{n^{j}}\\
& \leq \frac{pk(k+1)}{n^2} \cdot \frac{4}{2^j}+\frac{n-k}{n}\cdot \frac{2}{2^j} \leq \left(\frac{pk(k+1)}{36n^2}+\frac{n-k}{6n}\right)\cdot \frac{144}{2^j}.
\end{align}
By combining Eq.~(\refeq{eq-mid4}) with Eq.~(\refeq{eq-mid5}), we get that condition (2) of Theorem~\ref{simplified-drift-selfloops} holds with $\delta=1$ and $r(l)=144$.

Note that $l=b-a=c \log n$. By Theorem~\ref{simplified-drift-selfloops}, the expected running time is $2^{\Omega(c \log n)}$, where $c$ is any positive constant. Thus, the expected running time is super-polynomial.\vspace{0.8em}
\end{myproof}

For $p=\Omega(1)$, we can use the negative drift theorem (i.e., Theorem~\ref{simplified-drift}) to derive a stronger result that the expected running time is exponentially lower bounded.

\begin{theorem}\label{leadingones-noise-2-exp}
For the (1+1)-EA on LeadingOnes under bit-wise noise $(p,\frac{1}{n})$, the expected running time is exponential if $p=\Omega(1)$.
\end{theorem}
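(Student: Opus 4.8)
The plan is to apply the simplified drift theorem (Lemma~\ref{simplified-drift}) to the process $X_t=|x|_0$, exactly as in the proof of Theorem~\ref{leadingones-noise-2-superpoly}, but now over an interval of \emph{linear} length $[0,\gamma n]$ for a sufficiently small constant $\gamma>0$ (so $a=0$, $b=\gamma n$, $l=\gamma n$). If I can verify the two drift conditions on this interval, the theorem yields $T=2^{\Omega(n)}$ with probability $1-2^{-\Omega(n)}$; since a uniformly random initial solution has $|x|_0\approx n/2\geq \gamma n$ with overwhelming probability, this gives an expected running time of $2^{\Omega(n)}$, i.e.\ exponential.

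For the drift $\mathrm{E}(X_t-X_{t+1}\mid X_t=i)=\mathrm{E}^+-\mathrm{E}^-$ with a state $x$ satisfying $0<i=|x|_0<\gamma n$, the positive part still satisfies $\mathrm{E}^+\leq i/n\leq\gamma$, since that bound (from the proof of Theorem~\ref{leadingones-noise-2-superpoly}) depends only on mutation. For the negative part I again restrict to the $n-i$ single-1-bit flips. Writing $k=\mathrm{LO}(x)\leq n-i$, flipping the $j$-th leading 1-bit ($1\leq j\leq k$) contributes acceptance probability $\Omega(pj/n)$, and summing over $j$ yields a contribution $\Omega(pk^2/n^2)$; flipping any of the $n-i-k$ non-leading 1-bits keeps $\mathrm{LO}(x')=k$ and contributes a constant acceptance probability, giving $\Omega((n-i-k)/n)$. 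Hence $\mathrm{E}^-=\Omega(pk^2/n^2)+\Omega((n-i-k)/n)$. Combining the two parts and splitting on $k$ then closes condition~(1): if $k\leq n/2$ then $n-i-k\geq(1/2-\gamma)n$, so the second term of $\mathrm{E}^-$ is a positive constant dominating $\mathrm{E}^+\leq\gamma$; if $k>n/2$ then $pk^2/n^2=\Omega(p)=\Omega(1)$ (as $p=\Omega(1)$), so the first term dominates. In either case $\mathrm{E}(X_t-X_{t+1}\mid X_t=i)\leq-\epsilon$ for a constant $\epsilon>0$ once $\gamma$ is small. Condition~(2) is immediate: $|X_{t+1}-X_t|\geq j$ forces at least $j$ bits of $x$ to flip, which has probability at most $\binom{n}{j}n^{-j}\leq 1/j!\leq 2/2^j$, so it holds with $\delta=1$ and $r(l)=2=o(l/\log l)$.

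The main obstacle is that the acceptance-probability lower bounds used in Theorem~\ref{leadingones-noise-2-superpoly} relied on $p=o(1)$ to reach the clean value $1/2$, and these break down when $p=\Omega(1)$ (indeed $p$ may approach $1$), so I expect to re-derive them with constants that do not depend on $p$. For a leading-bit flip this is routine: the bound $\mathrm{P}(f^n(x')\geq j-1)\geq 1-p\bigl(1-(1-1/n)^{j-1}\bigr)\geq 1/e$ replaces the $1/2$ bound, while $\mathrm{P}(f^n(x)\leq j-1)=\Omega(pj/n)$ is unchanged, leaving the $\Omega(pj/n)$ acceptance intact. The delicate case is a non-leading-bit flip, where $x$ and $x'$ agree on their first $k+1$ bits, namely $1^k0$, and differ only in one later bit. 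Conditioned on neither noisy evaluation turning bit $k+1$ into a leading $1$ (an event of probability $\geq 1-2p/n$), the values $f^n(x)$ and $f^n(x')$ are determined solely by the independent noise acting on the identical first $k$ bits, hence are i.i.d., so $\mathrm{P}(f^n(x')\geq f^n(x))\geq (1-2p/n)/2\geq 1/3$. This symmetry argument supplies the constant acceptance probability needed in the $k\leq n/2$ case and is the crux of the whole proof; once it is in place, the case split above and the generic jump bound deliver the two drift conditions and therefore the $2^{\Omega(n)}$ lower bound.
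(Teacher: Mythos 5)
Your proof is correct, and it shares its skeleton with the paper's: the simplified drift theorem (Lemma~\ref{simplified-drift}) applied to $X_t=|x|_0$, the mutation-only bound $\mathrm{E}^+\leq i/n$, a negative drift built from single 1-bit flips split into leading and non-leading flips, and the generic jump bound $\binom{n}{j}n^{-j}\leq 2/2^j$ for condition~(2). The genuine difference lies in how the acceptance probabilities are freed from the assumption $p=o(1)$, and in what that buys. The paper keeps $p$-dependent bounds: for a leading flip it uses $\mathrm{P}(f^n(x')\geq j-1)\geq p(1-\frac{j-1}{n})$, and for a non-leading flip it requires noise to occur in \emph{both} evaluations and to preserve the first $k+1$ bits of both strings, giving $p^2(1-\frac{1}{n})^{2(k+1)}\geq p^2(1-\frac{k+1}{n})^2$; plugging these into $\mathrm{E}^-$ yields $\frac{p^2}{en}\bigl(\sum_{j=1}^k\frac{j(n-j+1)}{3n^2}+\frac{(n-i-k)(n-1-k)^2}{n^2}\bigr)=\Omega(1)$ on the interval $[0,n^{1/2}]$, hence a $2^{\Omega(\sqrt{n})}$ bound. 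You instead derive $p$-free constants: $\mathrm{P}(f^n(x')\geq j-1)\geq(1-1/n)^{j-1}\geq 1/e$ for leading flips, and, for non-leading flips, the conditioning/symmetry observation that once bit $k+1$ survives both (independent) noisy evaluations, $f^n(x)$ and $f^n(x')$ are i.i.d.\ functions of identically distributed noise on the common prefix $1^k0$, so the acceptance probability is at least $(1-2p/n)/2\geq 1/3$. This argument is sound — the conditioning events live on independent noise draws, and it is essentially the same device the paper itself employs later in the proof of Theorem~\ref{theo-leadingones-sampling-lower1} — and because your bounds degrade neither as $p\to 1$ nor as $k\to n$, you can run the drift argument on a linear interval $[0,\gamma n]$ (with $\gamma$ chosen after the constant implicit in $p=\Omega(1)$) and obtain the stronger conclusion $2^{\Omega(n)}$ versus the paper's $2^{\Omega(\sqrt{n})}$; your explicit handling of the initial condition $X_0\geq\gamma n$ is also a detail the paper leaves implicit.
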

\begin{myproof}
We use Theorem~\ref{simplified-drift} to prove it. Let $X_t=i$ be the number of 0-bits of the solution $x$ after $t$ iterations of the (1+1)-EA. We consider the interval $i \in [0,n^{1/2}]$. To analyze the drift $\mathrm{E}(X_t-X_{t+1} \mid X_t=i)=\mathrm{E}^+-\mathrm{E}^-$, we use the same analysis procedure as Theorem~\ref{leadingones-noise-2-superpoly}. For the positive drift, we have $\mathrm{E}^+ \leq \frac{i}{n}=o(1)$. For the negative drift, we re-analyze Eqs.~(\ref{eq-bit-wise-s3}) and~(\ref{eq-bit-wise-s4}). From Eqs.~(\ref{eq-bit-wise-5}) and~(\ref{eq-bit-wise-6}), we get that $\mathrm{P}(f^n(x') \geq j-1) \geq p(1-\frac{j-1}{n})$ and $\mathrm{P}(f^n(x) \leq j-1) \geq \frac{pj}{3n}$. Thus, Eq.~(\ref{eq-bit-wise-s3}) becomes
\begin{align}\label{eq-bit-wise-s5}
&\mathrm{P}(f^n(x') \geq f^n(x)) \geq \frac{p^2j}{3n}\left(1-\frac{j-1}{n}\right).
\end{align}
For Eq.~(\ref{eq-bit-wise-s4}), we need to analyze the acceptance probability for $\mathrm{LO}(x')=\mathrm{LO}(x)=k$. Since it is sufficient to keep the first $(k+1)$ bits of $x$ and $x'$ unchanged in noise, Eq.~(\ref{eq-bit-wise-s4}) becomes
\begin{align}\label{eq-bit-wise-s6}
&\mathrm{P}(f^n(x') \geq f^n(x)) \geq p^2\left(1-\frac{1}{n}\right)^{2(k+1)}\geq p^2\left(1-\frac{k+1}{n}\right)^{2}.
\end{align}
By applying the above two inequalities to Eq.~(\ref{eq-drift-3}), we have
$$\mathrm{E}^- \geq \frac{p^2}{en} \left(\sum^k_{j=1} \frac{j(n-j+1)}{3n^2}+\frac{(n-i-k)(n-1-k)^2}{n^2}\right)=\Omega(1),$$
where the equality is by $p=\Omega(1)$. Thus, $\mathrm{E}^+-\mathrm{E}^-=-\Omega(1)$. That is, condition (1) of Theorem~\ref{simplified-drift} holds.

Since it is necessary to flip at least $j$ bits of $x$, we have
$$
\mathrm{P}(|X_{t+1}-X_{t}|\geq j \mid X_t \geq 1) \leq \binom{n}{j}\frac{1}{n^j} \leq \frac{1}{j!}\leq 2 \cdot \frac{1}{2^j},
$$
which implies that condition (2) of Theorem~\ref{simplified-drift} holds with $\delta=1$ and $r(l)=2$. Note that $l=n^{1/2}$. Thus, by Theorem~\ref{simplified-drift}, the expected running time is exponential.
\end{myproof}

\subsection{Bit-wise Noise $(1,q)$}

For bit-wise noise $(1,q)$, the proof idea is similar to that for bit-wise noise $(p,\frac{1}{n})$. The main difference led by the change of noise is the probability of accepting the offspring solution, i.e., $\mathrm{P}(f^n(x') \geq f^n(x))$. We first prove that the expected running time is polynomial when $q =O(\log n/n^3)$.

\begin{theorem}\label{leadingones-noise-1-poly}
For the (1+1)-EA on LeadingOnes under bit-wise noise $(1,q)$, the expected running time is polynomial if $q =O(\log n/n^3)$.
\end{theorem}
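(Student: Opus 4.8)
The plan is to apply the additive drift theorem (Lemma~\ref{additive-drift}) with exactly the same distance function used in the proof of Theorem~\ref{leadingones-noise-2-poly}. Writing $q \le b\log n/n^3$ for an arbitrary positive constant $b$, and letting $\mathrm{LO}(x)=i$, I would take $V(x)=(1+\frac{c}{n})^n-(1+\frac{c}{n})^i$ with $c=4b\log n+1$. The entire skeleton of that earlier proof then carries over: I split the one-step drift into a positive part $\mathrm{E}^+$ (the leading-ones count increases) and a negative part $\mathrm{E}^-$ (it decreases), bound the two mutation probabilities by $(1-\frac1n)^i\frac1n$ and $1-(1-\frac1n)^i$ as in Eqs.~\eqref{eq-mut-1} and~\eqref{eq-mut-2}, and reuse the distance-gap estimates $V(x)-V(x')\ge\frac{c}{n}(1+\frac{c}{n})^i$ and $V(x')-V(x)\le(1+\frac{c}{n})^i-1$ from Eqs.~\eqref{eq-drift-1} and~\eqref{eq-drift-2}. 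The only quantities that must be recomputed are the acceptance probabilities $\mathrm{P}(f^n(x')\ge f^n(x))$, since these are the only place the noise model enters.

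For these I would redo the union-bound arguments of Eqs.~\eqref{eq-bit-wise-s1} and~\eqref{eq-bit-wise-s2} under bit-wise noise $(1,q)$. On the positive side, for $x'$ with $\mathrm{LO}(x')\ge i+1$, the event $f^n(x')<f^n(x)$ forces either $f^n(x')\le i-1$ (at least one of the first $i$ one-bits of $x'$ is flipped, probability $1-(1-q)^i$) or $f^n(x)\ge i+1$ (the first $i$ bits of $x$ survive while its $(i+1)$-th $0$-bit flips, probability $q(1-q)^i$); the union bound then gives $\mathrm{P}(f^n(x')\ge f^n(x))\ge(1-q)^{i+1}\ge 1-(i+1)q\ge \frac12$, where the last step uses $i+1\le n$ and $q=O(\log n/n^3)$. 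On the negative side, for $x'$ with $\mathrm{LO}(x')\le i-1$, I would bound $\mathrm{P}(f^n(x')\ge i)\le q(1-q)^{i-1}$ (the most favourable case has a single $0$ among the first $i$ bits, which must flip to $1$ while the remaining $i-1$ one-bits survive) together with $\mathrm{P}(f^n(x)\le i-1)=1-(1-q)^i$, yielding $\mathrm{P}(f^n(x')\ge f^n(x))\le (i+1)q$.

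With these two bounds the remaining calculation is mechanical and mirrors Theorem~\ref{leadingones-noise-2-poly} almost verbatim, with the quantity $nq$ now playing the role previously played by $p$. I would obtain $\mathrm{E}^+\ge\frac{c}{6n^2}(1+\frac{c}{n})^i$ and $\mathrm{E}^-\le\frac{2}{3}\,nq\,(1+\frac{c}{n})^i$, and since $nq\le b\log n/n^2$ the choice $c=4b\log n+1$ makes $\mathrm{E}(V(\xi_t)-V(\xi_{t+1})\mid\xi_t=x)\ge\frac{1}{6n^2}$. As $V(x)\le(1+\frac{c}{n})^n\le e^c=en^{4b}$, Lemma~\ref{additive-drift} then gives an EFHT of $O(n^{4b+2})$, i.e.\ polynomial. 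The main thing to get right is the recomputation of the negative-side acceptance probability: I must check that bounding $\mathrm{P}(f^n(x')\ge i)$ uniformly over all $x'$ with $\mathrm{LO}(x')\le i-1$ (not merely the nearest one) still produces the clean factor $(i+1)q$, and confirm that the substitution $p\mapsto nq$ is tight enough that the original constant $c=4b\log n+1$ keeps the net drift positive. Everything downstream is then identical to the earlier theorem.
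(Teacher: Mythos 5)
Your proposal is correct and follows essentially the same route as the paper's own proof: reuse the drift argument of Theorem~\ref{leadingones-noise-2-poly} with the same distance function and constant $c=4b\log n+1$, and recompute only the two acceptance probabilities under bit-wise noise $(1,q)$, obtaining exactly the bounds $(1-q)^{i+1}\ge 1/2$ and $(i+1)q$ that the paper derives in Eqs.~(\ref{eq-bit-wise-v1}) and~(\ref{eq-bit-wise-v2}). Your uniform bound $\mathrm{P}(f^n(x')\ge i)\le q(1-q)^{i-1}$ over all $x'$ with $\mathrm{LO}(x')\le i-1$ is valid (any such $x'$ has $z\ge 1$ zeros among its first $i$ bits, and $q^z(1-q)^{i-z}\le q(1-q)^{i-1}$ for $q\le 1/2$), so the final drift computation with $nq\le b\log n/n^2$ in place of $p$ goes through exactly as in the paper.
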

\begin{myproof}
The proof is very similar to that of Theorem~\ref{leadingones-noise-2-poly}. The change of noise only affects the probability of accepting the offspring solution in the analysis. For some positive constant $b$, suppose that $q \leq b \log n /n^3$.

For the positive drift $\mathrm{E}^+$, we need to re-analyze $\mathrm{P}(f^n(x') \geq f^n(x))$ (i.e., Eq.~(\ref{eq-bit-wise-s1}) in the proof of Theorem~\ref{leadingones-noise-2-poly}) for the parent $x$ with $\mathrm{LO}(x)=i$ and the offspring $x'$ with $\mathrm{LO}(x') \geq i+1$. By bit-wise noise $(1,q)$, Eqs.~(\refeq{eq-bit-wise-1}) and~(\refeq{eq-bit-wise-2}) change to
\begin{align}
&\mathrm{P}(f^n(x') \leq i-1)=1-(1-q)^i; \;\; \mathrm{P}(f^n(x) \geq i+1)=(1-q)^iq.
\end{align}
Thus, by the union bound, Eq.~(\ref{eq-bit-wise-s1}) becomes
\begin{align}\label{eq-bit-wise-v1}
&\mathrm{P}(f^n(x') \geq f^n(x))\geq 1-(1-(1-q)^i+(1-q)^iq) \\
&=(1-q)^{i+1}\geq 1-q(i+1) \geq 1-\theta,
\end{align}
where the last inequality holds with sufficiently large $n$, since $q=O(\log n/n^3)$ and $\theta \in (0,1)$ is some constant close to 0.

For the negative drift $\mathrm{E}^-$, we need to re-analyze $\mathrm{P}(f^n(x') \geq f^n(x))$ (i.e., Eq.~(\ref{eq-bit-wise-s2}) in the proof of Theorem~\ref{leadingones-noise-2-poly}) for the parent $x$ with $\mathrm{LO}(x)=i$ (where $i \geq 1$) and the offspring $x'$ with $\mathrm{LO}(x') \leq i-1$. By bit-wise noise $(1,q)$, Eqs.~(\refeq{eq-bit-wise-3}) and~(\refeq{eq-bit-wise-4}) change to
\begin{align}
&\mathrm{P}(f^n(x') \geq i)\leq q(1-q)^{i-1},\quad \mathrm{P}(f^n(x) \leq i-1)=1-(1-q)^i.
\end{align}
Thus, by the union bound, Eq.~(\ref{eq-bit-wise-s2}) becomes
\begin{align}\label{eq-bit-wise-v2}
&\mathrm{P}(f^n(x') \geq f^n(x))\leq q(1-q)^{i-1}+1-(1-q)^i\\
&=1-(1-q)^{i-1}(1-2q)\leq 1-(1-(i-1)q)(1-2q)\leq (i+1)q,
\end{align}
where the second inequality is by $(1-q)^{i-1} \geq 1-(i-1)q$ and $1-2q >0$ for $q=O(\log n/n^3)$.

By applying Eq.~(\refeq{eq-bit-wise-v1}) and Eq.~(\refeq{eq-bit-wise-v2}) to $\mathrm{E}^+$ and $\mathrm{E}^-$, respectively, Eq.~(\refeq{eq-sum-1}) changes to
\begin{align}
&\mathrm{E}(V(\xi_t)-V(\xi_{t+1}) \mid \xi_t=x)\geq  \left(1+\frac{c}{n}\right)^i \left(\frac{(1-\theta)c}{3n^2}-\frac{2q(i+1)}{3}\right)\\
& \geq \left(1+\frac{c}{n}\right)^i \left(\frac{2b\log n+1-\theta}{3n^2}-\frac{2b n\log n }{3n^3}\right) \geq \frac{1-\theta}{3n^2}.
\end{align}
That is, the condition of Theorem~\ref{additive-drift} still holds with $\frac{1-\theta}{3n^2}$. Thus, the expected running time is polynomial.\vspace{0.8em}
\end{myproof}

Next we prove that the expected running time is super-polynomial when $q$ is in the range of $\omega(\log n/n^2) \cap o(1/n)$.

\begin{theorem}\label{leadingones-noise-1-superpoly}
For the (1+1)-EA on LeadingOnes under bit-wise noise $(1,q)$, if $q =\omega(\log n/n^2) \cap o(1/n)$, the expected running time is super-polynomial.
\end{theorem}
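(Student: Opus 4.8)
The plan is to mirror the proof of Theorem~\ref{leadingones-noise-2-superpoly} and apply the simplified drift theorem with self-loops (Lemma~\ref{simplified-drift-selfloops}), taking $X_t=|x|_0$ and working on the interval $[0,c\log n]$ for an arbitrary positive constant $c$. As the authors already note when passing from bit-wise noise $(p,\frac{1}{n})$ to $(1,q)$, the only quantity that changes is the acceptance probability $\mathrm{P}(f^n(x')\geq f^n(x))$; the purely mutation-based bound $\mathrm{E}^+\leq i/n$ (Eq.~\eqref{eq-positive-drift-1}) carries over unchanged. So first I would recompute the relevant acceptance probabilities under $(1,q)$, exactly in the spirit of the passage from Theorem~\ref{leadingones-noise-2-poly} to Theorem~\ref{leadingones-noise-1-poly}, using $q=o(1/n)$ to guarantee that every ``keep the relevant prefix unchanged'' event has probability $(1-q)^{O(n)}\geq 1/2$.

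Concretely, for the negative drift I would again restrict to the $n-i$ single-$1$-bit flips and set $k:=\mathrm{LO}(x)$. When the $j$-th leading one is flipped ($1\leq j\leq k$), I expect $\mathrm{P}(f^n(x')\geq j-1)\geq (1-q)^{j-1}\geq 1/2$ and $\mathrm{P}(f^n(x)\leq j-1)=1-(1-q)^j\geq jq(1-q)^{j-1}\geq jq/2$, so the acceptance probability is $\Omega(jq)$; when a non-leading one is flipped, both $x$ and $x'$ keep $\mathrm{LO}=k$ and I get $\mathrm{P}(f^n(x')\geq f^n(x))\geq (1-q)^{2(k+1)}\geq 1/2$. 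Summing as in Eq.~\eqref{eq-drift-3} yields $\mathrm{E}^-=\Omega\!\left(\frac{qk^2}{n}+\frac{n-i-k}{n}\right)$. The crucial structural difference from Theorem~\ref{leadingones-noise-2-superpoly} is that the first term is now of order $qk^2/n$ rather than $pk^2/n^2$. In parallel, the bounds on $\mathrm{P}(X_{t+1}\neq i)$ (cf. Eqs.~\eqref{eq-mid1} and~\eqref{eq-mid4}) become $\Theta\!\left(\frac{qk^2}{n}+\frac{n-k}{n}\right)$, using Eq.~\eqref{eq-bit-wise-v2} to bound the acceptance probability when a leading one is flipped.

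The main obstacle is verifying condition~(1) of Lemma~\ref{simplified-drift-selfloops}, namely $\mathrm{E}^+-\mathrm{E}^-\leq -\epsilon\,\mathrm{P}(X_{t+1}\neq i)$, and here the two hypotheses on $q$ do the real work. Since $\mathrm{E}^+\leq i/n\leq c\log n/n$, I need a negative term that dominates $\Theta(\log n/n)$. The key observation I would isolate is that, given $q=\omega(\log n/n^2)$, at least one of $n-k$ and $qk^2$ is always $\omega(\log n)$: if $qk^2=O(\log n)$ then $k=o(n)$ and hence $n-k=\Theta(n)$, while if $n-k=O(\log n)$ then $k=n-O(\log n)$ and $qk^2=\Theta(qn^2)=\omega(\log n)$. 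This makes the case split clean. When $n-k\geq qk^2$ the $\frac{n-i-k}{n}$ part of $\mathrm{E}^-$ dominates: $n-k=\omega(\log n)$ forces $n-i-k\geq (n-k)/2$, which outweighs both $i/n$ and $\epsilon\,\mathrm{P}(X_{t+1}\neq i)\leq 2\epsilon(n-k)/n$. When $qk^2>n-k$ the $\frac{qk^2}{n}$ part dominates: $qk^2=\omega(\log n)$ makes it outweigh $i/n$, and also $\epsilon\,\mathrm{P}(X_{t+1}\neq i)\leq 2\epsilon qk^2/n$. Either way condition~(1) holds for a sufficiently small constant $\epsilon$.

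Finally, condition~(2) of Lemma~\ref{simplified-drift-selfloops} is essentially identical to the corresponding step in Theorem~\ref{leadingones-noise-2-superpoly} (Eqs.~\eqref{eq-mid4}--\eqref{eq-mid5}): bounding $\mathrm{P}(|X_{t+1}-X_t|\geq j)$ by splitting on whether a leading one is flipped, using the binomial mutation tail $\binom{n}{j}n^{-j}=O(2^{-j})$ together with the acceptance bound from Eq.~\eqref{eq-bit-wise-v2}, gives $\mathrm{P}(|X_{t+1}-X_t|\geq j)\leq r(l)(1+\delta)^{-j}\,\mathrm{P}(X_{t+1}\neq i)$ with $\delta=1$ and $r(l)$ a constant. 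With $l=c\log n$, Lemma~\ref{simplified-drift-selfloops} then gives running time $2^{\Omega(c\log n)}$ for every constant $c$, i.e.\ super-polynomial.
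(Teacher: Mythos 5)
Your proposal is correct and is essentially the paper's own proof: the same simplified drift theorem with self-loops (Lemma~\ref{simplified-drift-selfloops}) applied to $X_t=|x|_0$ on $[0,c\log n]$, the same bound $\mathrm{E}^+\leq i/n$, the same negative-drift estimate $\mathrm{E}^-=\Omega\left(qk^2/n+(n-i-k)/n\right)$ from single 1-bit flips (acceptance probability $\Omega(jq)$ when a leading one is flipped, $\Omega(1)$ otherwise), and the same verification of condition (2) with $\delta=1$ and constant $r(l)$. Your only deviations are cosmetic: you get the $\Omega(jq)$ acceptance bound from the product $\mathrm{P}(f^n(x')\geq j-1)\cdot\mathrm{P}(f^n(x)\leq j-1)$ rather than the paper's level sum $\sum_{l<j}\mathrm{P}(f^n(x)=l)\,\mathrm{P}(f^n(x')\geq l)$ in Eq.~(\ref{eq-bit-wise-v3}), and your case split on whether $n-k$ or $qk^2$ dominates (noting their maximum is always $\omega(\log n)$) replaces the paper's split at $k= n-n^2q$ — both deliver condition (1) with a constant $\epsilon$.
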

\begin{myproof}
We use the same analysis procedure as Theorem~\ref{leadingones-noise-2-superpoly}. The only difference is the probability of accepting the offspring solution $x'$ due to the change of noise. For the positive drift, we still have $\mathrm{E}^+ \leq \frac{i}{n}$, since we optimistically assume that $x'$ is always accepted in the proof of Theorem~\ref{leadingones-noise-2-superpoly}.

For the negative drift, we need to re-analyze $\mathrm{P}(f^n(x') \geq f^n(x))$ for the parent solution $x$ with $\mathrm{LO}(x)=k$ and the offspring solution $x'$ with $\mathrm{LO}(x')=j-1$ (where $1 \leq j \leq k+1$). For $j\leq k$, to derive a lower bound on $\mathrm{P}(f^n(x') \geq f^n(x))$, we consider the $j$ cases where $f^n(x)=l$ and $f^n(x') \geq l$ for $0 \leq l \leq j-1$. Since $\mathrm{P}(f^n(x)=l)=(1-q)^lq$ and $\mathrm{P}(f^n(x') \geq l)=(1-q)^l$, Eq.~(\refeq{eq-bit-wise-s3}) changes to
\begin{align}\label{eq-bit-wise-v3}
&\mathrm{P}(f^n(x') \geq f^n(x)) \geq \sum^{j-1}_{l=0} (1-q)^lq\cdot (1-q)^l\geq \frac{1-(1-q)^{2j}}{2}\\
&=\frac{1}{2} (1-q)^{2j}\left(\left(1+\frac{q}{1-q}\right)^{2j}-1\right) \geq (1-q)^{2j} \frac{qj}{1-q} \geq \frac{qj}{2},
\end{align}
where the last inequality is by $(1-q)^{2j} \geq 1-2qj \geq 1/2$ since $q =o(1/n)$. For $j=k+1$ (i.e., $\mathrm{LO}(x')=\mathrm{LO}(x)=k$), we can use the same analysis as Eq.~(\refeq{eq-bit-wise-v1}) to derive a lower bound $1-q(k+1) \geq 1/2$, where the inequality is by $q=o(1/n)$. Thus, Eq.~(\refeq{eq-bit-wise-s4}) also holds here, i.e.,
\begin{align}\label{eq-bit-wise-v4}
&\mathrm{P}(f^n(x') \geq f^n(x)) \geq \frac{1}{2}.
\end{align}
By applying Eqs.~(\refeq{eq-bit-wise-v3}) and~(\refeq{eq-bit-wise-v4}) to $\mathrm{E}^-$, Eq.~(\refeq{eq-drift-3}) changes to
\begin{align}
&\mathrm{E}^- \geq \frac{qk^2}{12n}+\frac{n-i-k}{6n}.
\end{align}

Thus, we have
$$\mathrm{E}(X_t-X_{t+1} \mid X_t=i)=\mathrm{E}^+-\mathrm{E}^-\leq \frac{i}{n}-\frac{qk^2}{12n}-\frac{n-i-k}{6n}.$$
For the upper bound analysis of $\mathrm{P}(X_{t+1} \neq i \mid X_t=i)$ in the proof of Theorem~\ref{leadingones-noise-2-superpoly}, we only need to replace the acceptance probability $p\frac{k+1}{n}$ in the case of $\mathrm{LO}(x') < \mathrm{LO}(x)$ with $(k+1)q$ (i.e., Eq.~(\refeq{eq-bit-wise-v2})). Thus, Eq.~(\ref{eq-mid1}) changes to
$$\mathrm{P}(X_{t+1} \neq i \mid X_t=i) \leq (k+1)q+\frac{n-k}{n} \leq nq+\frac{n-k}{n}.
$$
To compare $\mathrm{E}(X_t-X_{t+1} \mid X_t=i)$ with $\mathrm{P}(X_{t+1} \neq i \mid X_t=i)$, we consider two cases: $k < n-n^2q$ and $k \geq n-n^2q$. By using $q = \omega(\log n /n^2)$ and applying the same analysis procedure as Eqs.~(\ref{eq-mid2}) and~(\ref{eq-mid3}), we can derive that condition (1) of Theorem~\ref{simplified-drift-selfloops} holds with $\epsilon =\frac{1}{192}$.

For the lower bound analysis of $\mathrm{P}(X_{t+1} \neq i \mid X_t=i)$, by applying Eqs.~(\refeq{eq-bit-wise-v3}) and~(\refeq{eq-bit-wise-v4}), Eq.~(\ref{eq-mid4}) changes to
$$
\mathrm{P}(X_{t+1} \neq i \mid X_t=i) \geq \frac{qk(k+1)}{12n}+\frac{n-k}{6n}.
$$
For the analysis of $|X_{t+1}-X_{t}|\geq j$, by replacing the acceptance probability $p\frac{k+1}{n}$ in the case of $\mathrm{LO}(x') < \mathrm{LO}(x)$ with $(k+1)q$, Eq.~(\ref{eq-mid5}) changes to
\begin{align}
\mathrm{P}(|X_{t+1}-X_{t}|\geq j \mid X_t =i) &\leq \frac{qk(k+1)}{n} \cdot \frac{4}{2^j}+\frac{n-k}{n}\cdot \frac{2}{2^j} \\
&\leq \left(\frac{qk(k+1)}{12n}+\frac{n-k}{6n}\right)\cdot \frac{48}{2^j}.
\end{align}
That is, condition (2) of Theorem~\ref{simplified-drift-selfloops} holds with $\delta=1, r(l)=48$. Thus, the expected running time is super-polynomial.\vspace{0.8em}
\end{myproof}

For $q=\Omega(1/n)$, we prove a stronger result that the expected running time is exponentially lower bounded.

\begin{theorem}\label{leadingones-noise-1-exp}
For the (1+1)-EA on LeadingOnes under bit-wise noise $(1,q)$, the expected running time is exponential if $q=\Omega(1/n)$.
\end{theorem}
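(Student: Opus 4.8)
The plan is to follow exactly the route of Theorem~\ref{leadingones-noise-2-exp}: apply the Simplified Drift theorem (Lemma~\ref{simplified-drift}) to the potential $X_t=|x|_0$ on the interval $[0,n^{1/2}]$ (so $a=0$, $b=n^{1/2}$, $l=n^{1/2}$), and show that the number of $0$-bits drifts \emph{away} from the optimum by a constant, forcing an exponential hitting time for $X_t=0$. The positive part of the drift is inherited verbatim: optimistically accepting every improving move gives $\mathrm{E}^+\le i/n=o(1)$ as in Theorems~\ref{leadingones-noise-2-superpoly} and~\ref{leadingones-noise-1-superpoly}, since that bound only uses mutation probabilities and is independent of the noise model. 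The entire difficulty is thus to prove $\mathrm{E}^-=\Omega(1)$ across the whole range $q=\Omega(1/n)$, after which condition~(1) of Lemma~\ref{simplified-drift} holds with a constant $\epsilon$.

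For $\mathrm{E}^-$ I would reuse the single-bit-flip decomposition of Theorem~\ref{leadingones-noise-1-superpoly} (parent $x$ with $\mathrm{LO}(x)=k$, flipping one of the $n-i$ one-bits), but re-derive the acceptance probabilities \emph{without} the $q=o(1/n)$ simplifications used there. For a flip of the $j$-th leading $1$-bit ($\mathrm{LO}(x')=j-1$, $1\le j\le k$) I keep the exact geometric estimate behind Eq.~(\ref{eq-bit-wise-v3}), namely $\mathrm{P}(f^n(x')\ge f^n(x))\ge \sum_{l=0}^{j-1}(1-q)^{2l}q\ge \tfrac12(1-(1-q)^{2j})$; for a non-leading flip ($\mathrm{LO}(x')=\mathrm{LO}(x)=k$) the same computation, now keeping the $l=k$ term, yields $\mathrm{P}(f^n(x')\ge f^n(x))\ge \tfrac12+(1-q)^{2k}(\tfrac12-q)$, which is $\ge\tfrac12$ for $q\le\tfrac12$ and $\ge\tfrac38$ for $q>\tfrac12,\ k\ge1$. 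Writing $P_{nl}$ for the non-leading bound, $\mathrm{E}^-\ge \frac{1}{en}(\sum_{j=1}^{k}\tfrac12(1-(1-q)^{2j})+(n-i-k)P_{nl})$, and the telescoping identity $\sum_{j=1}^{k}\tfrac12(1-(1-q)^{2j})+\tfrac12(n-i-k)=\tfrac12((n-i)-\sum_{j=1}^{k}(1-q)^{2j})$ reduces matters to controlling the geometric tail $\sum_{j=1}^{k}(1-q)^{2j}\le \min\{k,\,1/(q(2-q))\}$. For $q\le\tfrac12$ this tail is $o(n)$ except in the single corner where both $k$ and $1/q$ are $\Theta(n)$; there I would instead bound $\sum_{j=1}^{k}\tfrac12(1-(1-q)^{2j})\ge \tfrac12\sum_{j=1}^{\min\{k,1/(2q)\}}qj=\Omega(n)$ using $1-(1-q)^{2j}\ge qj$ for $j\le 1/(2q)$. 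For $q>\tfrac12$ and $k\ge1$ it is immediate, since each summand is $\ge\tfrac38$, giving $\tfrac38 k+\tfrac38(n-i-k)=\tfrac38(n-i)=\Omega(n)$. In every case the bracketed quantity is $\Omega(n)$, so $\mathrm{E}^-=\Omega(1)$.

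The main obstacle is the regime $q>\tfrac12$ together with a very small $k$, in particular $k=0$ (a solution whose first bit is already $0$): then there are no leading-bit flips, and the crude ``keep a prefix unchanged'' estimate for the non-leading flips degrades to $\mathrm{P}(f^n(x')\ge f^n(x))\ge 1-q$, which tends to $0$ as $q\to1$. I would resolve this through the complement symmetry of bit-wise noise: flipping each bit of $x$ with probability $q$ produces the same distribution of noisy strings as flipping each bit of the complement $\bar x$ with probability $1-q$, and since the two reevaluations are independent the whole pair transforms jointly. Hence $\mathrm{P}(f^n(x')\ge f^n(x))$ equals the acceptance probability of $\bar{x'}$ against $\bar x$ under noise level $1-q<\tfrac12$. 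A non-leading $1$-bit flip of $x$ becomes a $0\to1$ flip of $\bar x$, so $\mathrm{LO}(\bar{x'})\ge\mathrm{LO}(\bar x)$ in the true objective; for all but one of the $n-i$ such flips the true value is unchanged, and the $q\le\tfrac12$ non-leading bound applied to the complements gives acceptance $\ge\tfrac12$. This yields $\mathrm{E}^-\ge\frac{1}{en}(n-i-1)\cdot\tfrac12=\Omega(1)$ in this case as well.

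Finally, condition~(2) of Lemma~\ref{simplified-drift} is identical to Theorem~\ref{leadingones-noise-2-exp}: changing $X_t$ by at least $j$ requires flipping at least $j$ bits, so $\mathrm{P}(|X_{t+1}-X_t|\ge j\mid X_t\ge1)\le \binom{n}{j}n^{-j}\le 1/j!\le 2\cdot 2^{-j}$, i.e.\ $\delta=1$ and $r(l)=2$. With $l=n^{1/2}$, Lemma~\ref{simplified-drift} then gives $\mathrm{P}(T\le 2^{\Omega(n^{1/2})})=2^{-\Omega(n^{1/2})}$, so the expected running time is exponential.
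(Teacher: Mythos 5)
Your proposal is correct, and at the framework level it coincides with the paper's proof: Lemma~\ref{simplified-drift} applied to $X_t=|x|_0$ on $[0,n^{1/2}]$, the noise-independent bound $\mathrm{E}^+\le i/n=o(1)$ inherited from Theorem~\ref{leadingones-noise-2-superpoly}, and condition~(2) verified with $\delta=1$, $r(l)=2$, so that everything reduces to showing $\mathrm{E}^-=\Omega(1)$ for all $q=\Omega(1/n)$. Where you genuinely differ is in how that bound is organized. The paper splits by noise magnitude: for $q=\Omega(1/n)\cap o(1)$ it lower-bounds the acceptance probability of a flip of the $j$-th leading $1$-bit by $\min\{1/4,\,qj/2\}$ and of a non-leading flip by $1/2$, and then distinguishes $k\ge n/2$ from $k<n/2$; for $q=\Omega(1)$ it uses a one-line bound, namely acceptance probability at least $q$, because noise flipping the first leading $1$-bit of $x$ forces $f^n(x)=0$ so that any offspring is accepted. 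Your decomposition ($q\le 1/2$ handled by the telescoping identity plus the geometric-tail and corner analysis; $q>1/2$ with $k\ge1$ handled by uniform $3/8$ bounds; $q>1/2$ with $k=0$ handled by complement symmetry) is heavier in the mid-range of $q$, where the paper's argument is slicker, but it buys something real at the extreme: the paper's $q=\Omega(1)$ justification tacitly requires $k\ge 1$ (there must be a leading $1$-bit for noise to flip), and for $k=0$ the natural analogue of that estimate degrades to $1-q$, which vanishes as $q\to 1$; this corner is not addressed in the paper's proof. Your observation that bit-wise noise with parameter $q$ on $x$ has the same distribution as noise with parameter $1-q$ on $\bar{x}$, jointly over the two independent evaluations, closes exactly this gap and yields acceptance probability at least $1/2$ for $n-i-1$ of the single-bit flips, hence $\mathrm{E}^-=\Omega(1)$ there too. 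So your proof is valid and, in this one corner, more careful than the paper's own argument.
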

\begin{myproof}
We use Theorem~\ref{simplified-drift} to prove it. Let $X_t=i$ be the number of 0-bits of the solution $x$ after $t$ iterations of the (1+1)-EA. We consider the interval $i \in [0,n^{1/2}]$. To analyze the drift $\mathrm{E}(X_t-X_{t+1} \mid X_t=i)$, we use the same analysis procedure as the proof of Theorem~\ref{leadingones-noise-2-superpoly}.

We first consider $q = \Omega(1/n) \cap o(1)$. We need to analyze the probability $\mathrm{P}(f^n(x') \geq f^n(x))$, where the offspring solution $x'$ is generated by flipping only one 1-bit of $x$. Let $\mathrm{LO}(x)=k$. For the case where the $j$-th (where $1 \leq j \leq k$) leading 1-bit is flipped, as the analysis of Eq.~(\ref{eq-bit-wise-v3}), we get
$$\mathrm{P}(f^n(x') \geq f^n(x)) \geq \frac{1-(1-q)^{2j}}{2} \geq (1-q)^{2j} \frac{qj}{1-q}.
$$
If $(1-q)^{2j}<\frac{1}{2}$, $\frac{1-(1-q)^{2j}}{2} \geq \frac{1}{4}$; otherwise, $(1-q)^{2j} \frac{qj}{1-q} \geq \frac{qj}{2}$. Thus, we have
$$\mathrm{P}(f^n(x') \geq f^n(x)) \geq \min\{1/4,qj/2\}.
$$
For the case that flips one non-leading 1-bit (i.e., $\mathrm{LO}(x')=\mathrm{LO}(x)=k$), to derive a lower bound on $\mathrm{P}(f^n(x') \geq f^n(x))$, we consider $f^{n}(x)=l$ and $f^{n}(x') \geq l$ for $0 \leq l \leq k$. Thus,
\begin{equation}
\begin{aligned}
&\mathrm{P}(f^n(x') \geq f^n(x)) \geq \sum^{k-1}_{l=0} (1-q)^lq\cdot (1-q)^l+(1-q)^{k+1} \cdot (1-q)^{k}\\
& \geq \frac{1-(1-q)^{2k}}{2}+(1-q)^{2k+1}=\frac{1}{2}+(1-q)^{2k}\left(\frac{1}{2}-q\right)\geq \frac{1}{2},
\end{aligned}
\end{equation}
where the last inequality is by $q=o(1)$. By applying the above two inequalities to Eq.~(\ref{eq-drift-3}), we get
\begin{align}
&\mathrm{E}^- \geq \frac{1}{en} \left(\sum^{k}_{j=1}\min\left\{\frac{1}{4},\frac{qj}{2}\right\}+\frac{n-i-k}{2}\right).
\end{align}
If $k \geq \frac{n}{2}$, $\sum^{k}_{j=1}\min\{\frac{1}{4},\frac{qj}{2}\}=\Omega(n)$ since $q=\Omega(1/n)$. If $k <\frac{n}{2}$, $\frac{n-i-k}{2}=\Omega(n)$ since $i \leq \sqrt{n}$. Thus, $\mathrm{E}^{-}=\Omega(1)$.

For $q=\Omega(1)$, we use the trivial lower bound $q$ for the probability of accepting the offspring solution $x'$, since it is sufficient to flip the first leading 1-bit of $x$ by noise. Then,
$$
\mathrm{E}^- \geq \frac{1}{en} (k q +(n-i-k) q)=\frac{(n-i)q}{en}=\Omega(1).
$$

Thus, for $q=\Omega(1/n)$, we have
$$
\mathrm{E}(X_t-X_{t+1} \mid X_t=i) =\mathrm{E}^+-\mathrm{E}^-\leq \frac{i}{n}-\Omega(1)=-\Omega(1).
$$
That is, condition (1) of Theorem~\ref{simplified-drift} holds. Its condition (2) trivially holds with $\delta=1$ and $r(l)=2$. Thus, the expected running time is exponential.
\end{myproof}

\subsection{One-bit Noise}

For the (1+1)-EA on LeadingOnes under one-bit noise, it has been known that the expected running time is polynomial if $p \leq 1/(6en^2)$ and exponential if $p=1/2$~\cite{giessen2014robustness}. We extend this result by proving in Theorem~\ref{leadingones-one-bit-noise} that the expected running time is polynomial if $p =O(\log n/n^2)$ and super-polynomial if $p=\omega(\log n/n)$. The proof can be accomplished in the same way as that of Theorems~\ref{leadingones-noise-2-poly},~\ref{leadingones-noise-2-superpoly} and~\ref{leadingones-noise-2-exp} for bit-wise noise $(p,\frac{1}{n})$. This is because although the probabilities $\mathrm{P}(f^n(x') \geq f^n(x))$ of accepting the offspring solution are different, their bounds used in the proofs for bit-wise noise $(p,\frac{1}{n})$ still hold for one-bit noise.

\begin{theorem}\label{leadingones-one-bit-noise}
For the (1+1)-EA on LeadingOnes under one-bit noise, the expected running time is polynomial if $p =O(\log n/n^2)$, super-polynomial if $p=\omega(\log n/n) \cap o(1)$ and exponential if $p=\Omega(1)$.
\end{theorem}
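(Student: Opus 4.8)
The plan is to re-use, essentially verbatim, the three proof structures already developed for bit-wise noise $(p,\frac{1}{n})$: the additive-drift argument of Theorem~\ref{leadingones-noise-2-poly} for the polynomial case, the self-loop simplified-drift argument of Theorem~\ref{leadingones-noise-2-superpoly} for the super-polynomial case, and the simplified-drift argument of Theorem~\ref{leadingones-noise-2-exp} for the exponential case. In each of those proofs the potential/distance function, the mutation probabilities, and the final drift bookkeeping depend only on the structure of LeadingOnes and of the (1+1)-EA, not on the noise model; the noise enters solely through the acceptance probability $\mathrm{P}(f^n(x') \ge f^n(x))$. Hence the whole task reduces to re-deriving this one quantity under one-bit noise and confirming that the handful of inequalities on it that are actually invoked still hold with the same right-hand sides.

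To this end I would first record the elementary one-bit-noise facts for a solution $x$ with $\mathrm{LO}(x)=i$: a downward error occurs exactly when noise flips one of the $i$ leading 1-bits, so $\mathrm{P}(f^n(x)\le i-1)=p\cdot i/n$, while an upward error requires flipping the first 0-bit, so $\mathrm{P}(f^n(x)\ge i+1)\le p/n$. With these in hand the union-bound arguments transfer. For the positive drift in the polynomial case I would split the bad event $f^n(x')<f^n(x)$ (with $\mathrm{LO}(x')\ge i+1$, $\mathrm{LO}(x)=i$) into $\{f^n(x')\le i-1\}\cup\{f^n(x)\ge i+1\}$ and bound its probability by $p\cdot i/n + p/n = p(i+1)/n$, recovering $\mathrm{P}(f^n(x')\ge f^n(x))\ge 1-p(i+1)/n\ge 1/2$; for the negative drift I would bound acceptance above by $\mathrm{P}(f^n(x')\ge i)+\mathrm{P}(f^n(x)\le i-1)\le p/n + p\cdot i/n = p(i+1)/n$. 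These are precisely the bounds $\tfrac12$ and $p(i+1)/n$ used in Theorem~\ref{leadingones-noise-2-poly}, so the additive-drift computation carries over unchanged.

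For the super-polynomial case I would verify the two acceptance bounds the self-loop argument relies on: when the offspring flips the $j$-th leading 1-bit, independence of the two reevaluations gives $\mathrm{P}(f^n(x')\ge f^n(x))\ge \mathrm{P}(f^n(x')\ge j-1)\,\mathrm{P}(f^n(x)\le j-1)\ge (1-p(j-1)/n)(p\,j/n)\ge pj/(2n)$, and when a non-leading 1-bit is flipped the union bound above gives $\ge 1-p(k+1)/n\ge 1/2$; since $p=o(1)$ both lower bounds dominate those in Theorem~\ref{leadingones-noise-2-superpoly}, and the upper bounds needed for condition~(2) follow from $\mathrm{P}(\text{accept})\le p(k+1)/n$ as before. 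For the exponential case $p=\Omega(1)$ I would use, for $1\le j\le n/2$, $\mathrm{P}(f^n(x')\ge f^n(x))\ge (1-p(j-1)/n)(p\,j/n)\ge pj/(2n)$ on the interval $i\in[0,\sqrt n]$, so that keeping only these terms gives $\mathrm{E}^-\ge \frac{1}{en}\big(\sum_{j=1}^{\min\{k,\lfloor n/2\rfloor\}} pj/(2n)+(n-i-k)/2\big)=\Omega(1)$, the first term dominating when $k\ge n/2$ and the second when $k<n/2$, which drives the simplified-drift theorem exactly as in Theorem~\ref{leadingones-noise-2-exp}.

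The main obstacle I anticipate is bookkeeping rather than conceptual: one must check each invoked inequality individually, because the one-bit probabilities are not uniformly smaller or larger than their bit-wise $(p,\frac1n)$ counterparts. For instance $\mathrm{P}(f^n(x)\le i-1)=p\,i/n$ is larger than the bit-wise value $p(1-(1-1/n)^i)$, whereas $\mathrm{P}(f^n(x')\ge i)$ is of the same order, so a blanket ``replace $q$ by $1/n$'' substitution is not valid. The saving grace is that the proofs only ever use the combined union-bound quantity $p(i+1)/n$ as an upper bound, or the product lower bound $\ge pj/(2n)$; these coincide with the bit-wise bounds, so once the correspondence is confirmed term by term, no drift computation needs to be redone.
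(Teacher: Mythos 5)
Your polynomial and super-polynomial arguments are correct and are essentially the paper's own proof: the paper likewise reduces the whole task to re-deriving the acceptance probability under one-bit noise and checking that the bounds invoked in the proofs of Theorems~\ref{leadingones-noise-2-poly} and~\ref{leadingones-noise-2-superpoly} still hold, and your elementary facts ($\mathrm{P}(f^n(x)\le i-1)=p\,i/n$, $\mathrm{P}(f^n(x)\ge i+1)=p/n$) and the resulting bounds ($\ge 1-p(i+1)/n\ge 1/2$ for the positive drift, $\le p(i+1)/n$ for the negative drift, and the product bound $\ge pj/(2n)$, which dominates the paper's $pj/(6n)$) match what the paper verifies term by term.

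The one genuine gap is in the exponential case. The term $(n-i-k)/2$ in your displayed lower bound on $\mathrm{E}^-$ presupposes that an offspring created by flipping a non-leading 1-bit is accepted with probability at least $1/2$, but the only justification for that constant anywhere in your proposal is the union bound $1-p(k+1)/n$ combined with $p=o(1)$ --- which is exactly what you do not have here, since now $p=\Omega(1)$. For $p$ close to $1$ and $k\ge n/2$ this union bound degenerates (e.g.\ $p=1$ and $k=n-\sqrt{n}$ give roughly $1/\sqrt{n}$), so the displayed inequality for $\mathrm{E}^-$ is not established over the whole range of $k$; the acceptance probability in that regime is in fact still about $1/2$, but proving it needs a finer argument than anything you wrote down. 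The repair is the one already implicit in your dominance remark, but it must be stated: the second term is only needed when $k<n/2$, and in that regime $1-p(k+1)/n\ge 1-(k+1)/n\ge 1/2-1/n$ for \emph{every} $p\le 1$, so the (essentially) $1/2$ acceptance bound holds with no assumption on $p$; when $k\ge n/2$, drop the second term and let the truncated sum $\sum_{j\le n/2} pj/(2n)=\Omega(pn)$ carry the $\Omega(1)$ drift alone. For comparison, the paper avoids this case split by using, for one-bit noise, the acceptance lower bounds $\frac{p^2j}{3n}\bigl(1-\frac{j-1}{n}\bigr)$ for leading-bit flips and $p^2\bigl(1-\frac{k+1}{n}\bigr)^2$ for non-leading flips (both evaluations keep the first $k+1$ bits intact, via $1-p\frac{k+1}{n}\ge p\bigl(1-\frac{k+1}{n}\bigr)$), i.e.\ it checks that Eqs.~(\ref{eq-bit-wise-s5}) and~(\ref{eq-bit-wise-s6}) survive, and then verifies that the full sum $\sum_{j=1}^{k}\frac{j(n-j+1)}{3n^2}+\frac{(n-i-k)(n-1-k)^2}{n^2}$ is $\Omega(n)$ uniformly in $k$; your variant is equally valid once the missing observation for $k<n/2$ is added.
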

\begin{myproof}
We re-analyze $\mathrm{P}(f^n(x') \geq f^n(x))$ for one-bit noise, and show that the bounds on $\mathrm{P}(f^n(x') \geq f^n(x))$ used in the proofs for bit-wise noise $(p,\frac{1}{n})$ still hold for one-bit noise.

For the proof of Theorem~\ref{leadingones-noise-2-poly}, Eqs.~(\ref{eq-bit-wise-1}) and~(\ref{eq-bit-wise-2}) change to
\begin{align}
&\mathrm{P}(f^n(x') \leq i-1)=p\frac{i}{n}, \qquad \mathrm{P}(f^n(x) \geq i+1)=p\frac{1}{n},
\end{align}
and thus Eq.~(\ref{eq-bit-wise-s1}) still holds; Eqs.~(\ref{eq-bit-wise-3}) and~(\ref{eq-bit-wise-4}) change to
\begin{align}
&\mathrm{P}(f^n(x') \geq i)\leq p\frac{1}{n}, \qquad \mathrm{P}(f^n(x) \leq i-1)=p\frac{i}{n},
\end{align}
and thus Eq.~(\ref{eq-bit-wise-s2}) still holds.

For the proof of Theorem~\ref{leadingones-noise-2-superpoly}, Eqs.~(\refeq{eq-bit-wise-5}) and~(\refeq{eq-bit-wise-6}) change to
\begin{align}
& \mathrm{P}(f^n(x') \geq j-1) = 1-p\frac{j-1}{n}, \quad \mathrm{P}(f^n(x) \leq j-1)=p\frac{j}{n},
\end{align}
and thus Eq.~(\ref{eq-bit-wise-s3}) still holds.

For the proof of Theorem~\ref{leadingones-noise-2-exp}, Eq.~(\refeq{eq-bit-wise-s5}) still holds by the above two equalities; Eq.~(\refeq{eq-bit-wise-s6}) still holds since the probability of keeping the first $(k+1)$ bits of a solution unchanged in one-bit noise is $1-p\frac{k+1}{n} \geq p(1-\frac{k+1}{n})$.
\end{myproof}

\subsection{Experiments}\label{sec-exp}

\begin{figure*}[t!]\centering
\begin{minipage}[c]{0.33\linewidth}\centering
        \includegraphics[width=0.9\linewidth, height=0.7\linewidth]{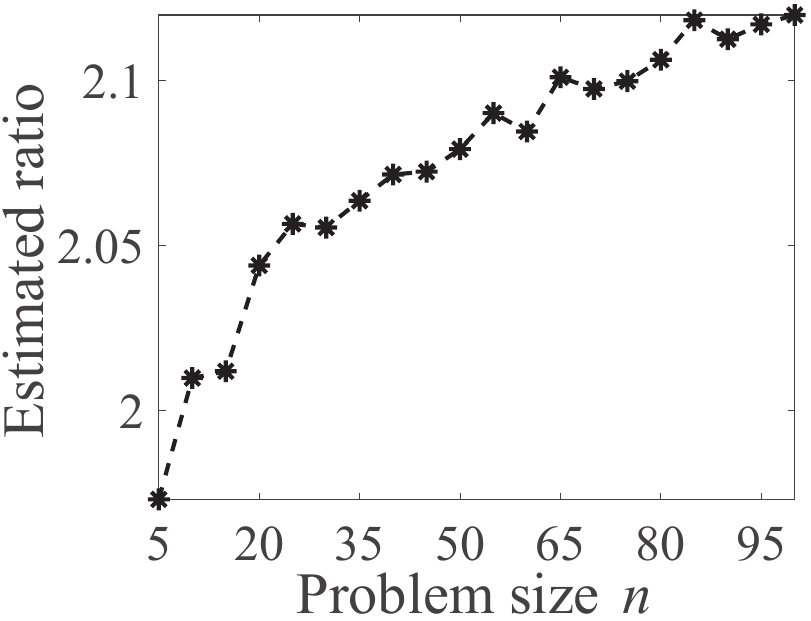}
\end{minipage}
\begin{minipage}[c]{0.33\linewidth}\centering
        \includegraphics[width=0.9\linewidth, height=0.7\linewidth]{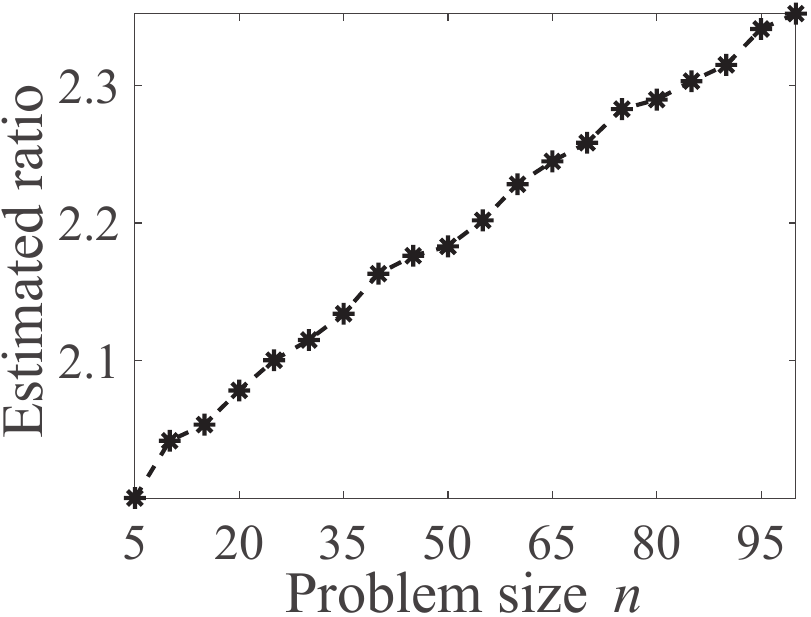}
\end{minipage}
\begin{minipage}[c]{0.33\linewidth}\centering
        \includegraphics[width=0.9\linewidth, height=0.7\linewidth]{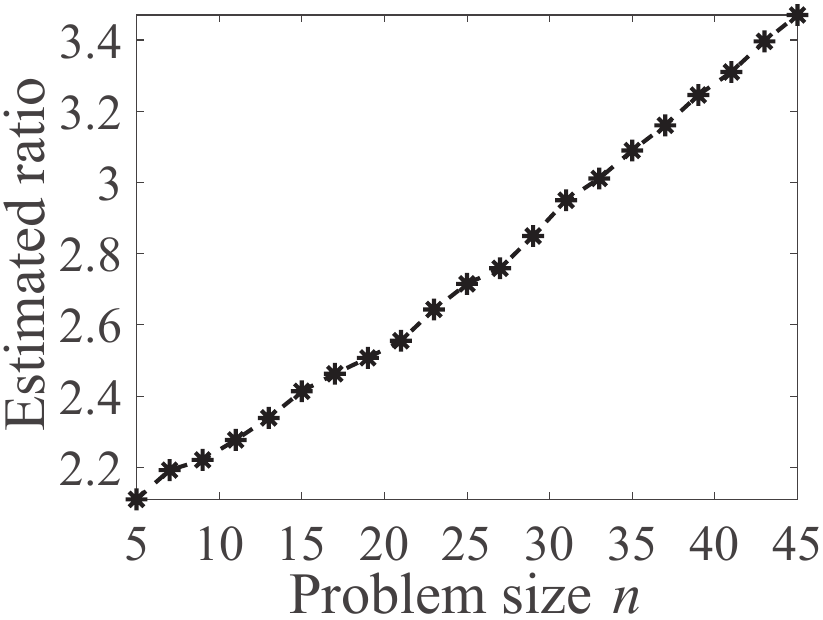}
\end{minipage}\\\vspace{0.3em}
\begin{minipage}[c]{0.33\linewidth}\centering
    \small(a) $p=(\log n/n)^2$
\end{minipage}
\begin{minipage}[c]{0.33\linewidth}\centering
    \small(b) $p=\log n/n^{3/2}$
\end{minipage}
\begin{minipage}[c]{0.33\linewidth}\centering
    \small(c) $p=\log n/n$
\end{minipage}
\caption{Estimated expected running time for the (1+1)-EA on LeadingOnes under bit-wise noise $(p,\frac{1}{n})$, where the $y$-axis is (the logarithm of estimated expected running time) divided by $\log n$.}\label{fig-leadingones-bitwise1}\vspace{-0.7em}
\end{figure*}

\begin{figure*}[t!]\centering
\begin{minipage}[c]{0.33\linewidth}\centering
        \includegraphics[width=0.9\linewidth, height=0.7\linewidth]{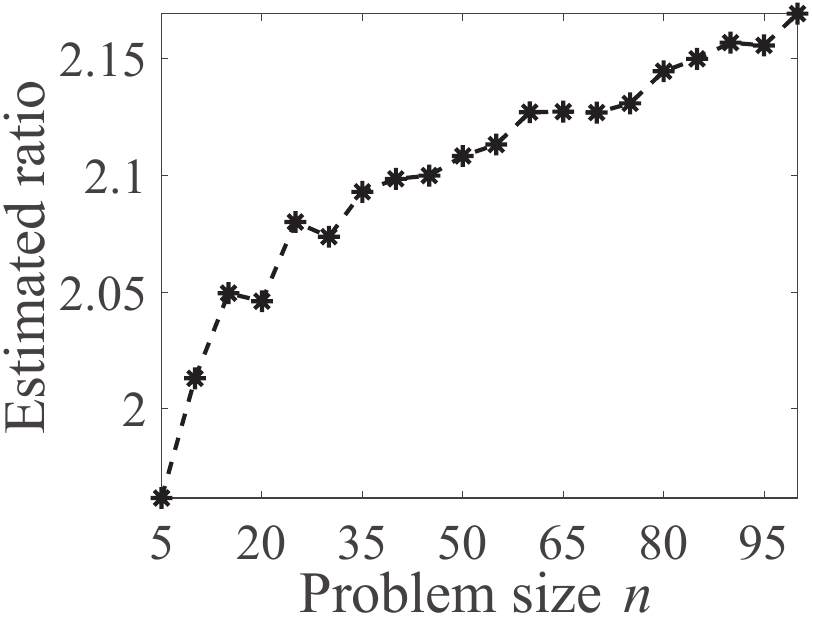}
\end{minipage}
\begin{minipage}[c]{0.33\linewidth}\centering
        \includegraphics[width=0.9\linewidth, height=0.7\linewidth]{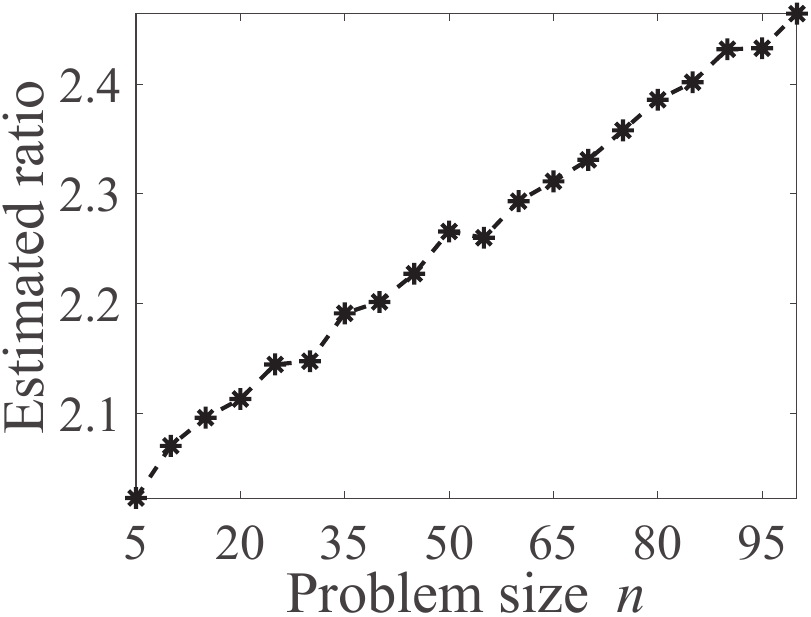}
\end{minipage}
\begin{minipage}[c]{0.33\linewidth}\centering
        \includegraphics[width=0.9\linewidth, height=0.7\linewidth]{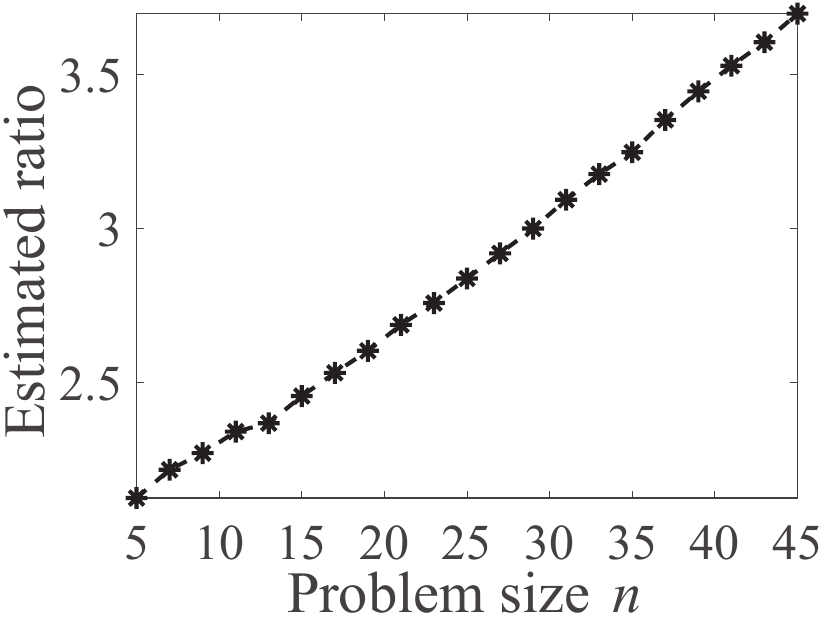}
\end{minipage}\\\vspace{0.3em}
\begin{minipage}[c]{0.33\linewidth}\centering
    \small(a) $q=(\log n)^2/n^3$
\end{minipage}
\begin{minipage}[c]{0.33\linewidth}\centering
    \small(b) $q=\log n/n^{5/2}$
\end{minipage}
\begin{minipage}[c]{0.33\linewidth}\centering
    \small(c) $q=\log n/n^2$
\end{minipage}
\caption{Estimated expected running time for the (1+1)-EA on LeadingOnes under bit-wise noise $(1,q)$, where the $y$-axis is (the logarithm of estimated expected running time) divided by $\log n$.}\label{fig-leadingones-bitwise2}\vspace{-0.7em}
\end{figure*}

\begin{figure*}[t!]\centering
\begin{minipage}[c]{0.33\linewidth}\centering
        \includegraphics[width=0.9\linewidth, height=0.7\linewidth]{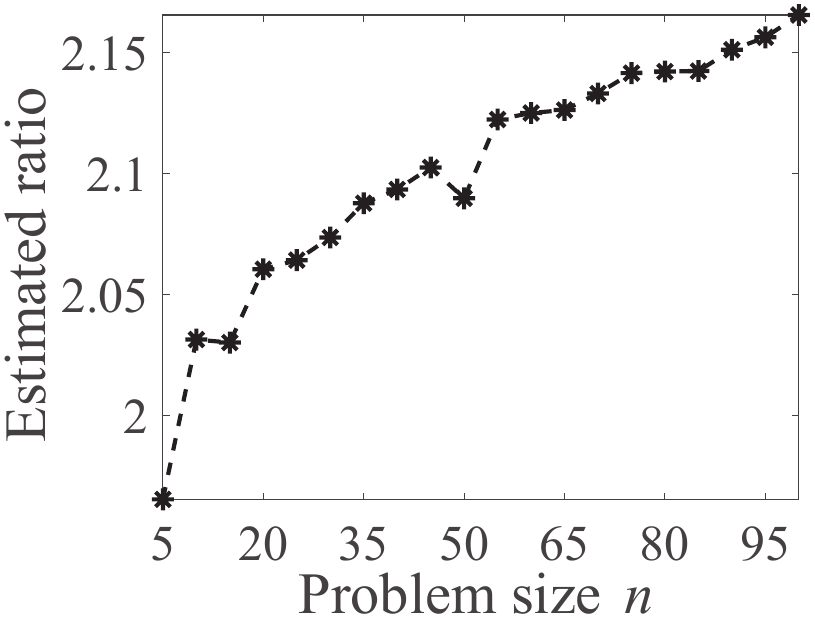}
\end{minipage}
\begin{minipage}[c]{0.33\linewidth}\centering
        \includegraphics[width=0.9\linewidth, height=0.7\linewidth]{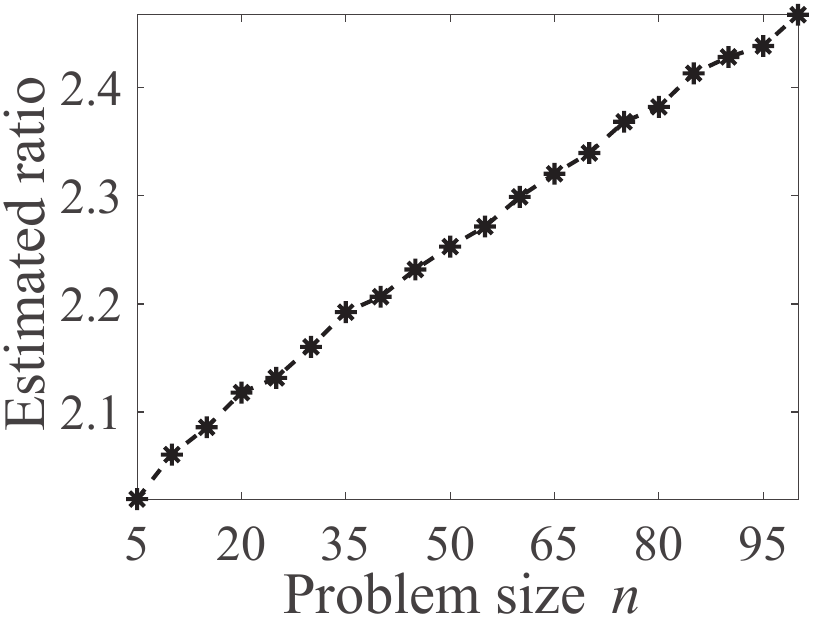}
\end{minipage}
\begin{minipage}[c]{0.33\linewidth}\centering
        \includegraphics[width=0.9\linewidth, height=0.7\linewidth]{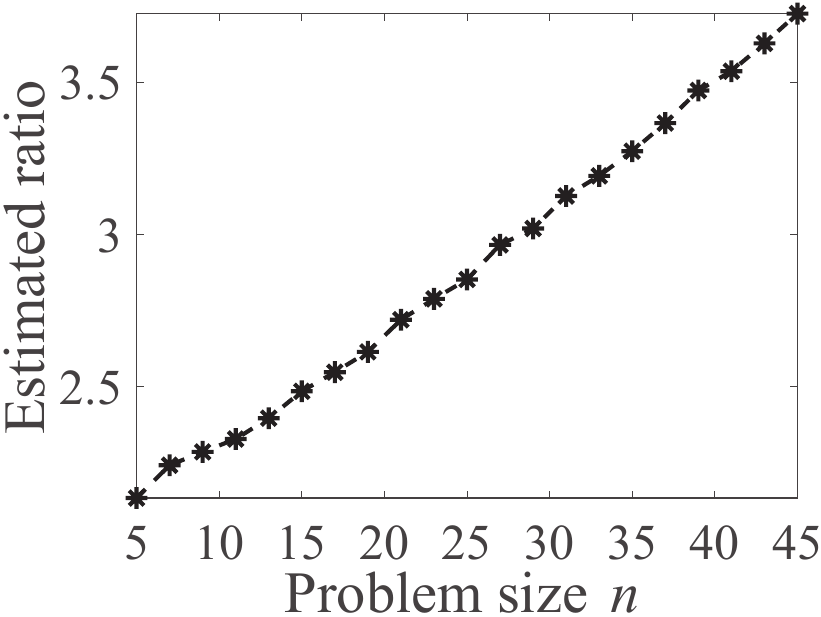}
\end{minipage}\\\vspace{0.3em}
\begin{minipage}[c]{0.33\linewidth}\centering
    \small(a) $p=(\log n/n)^2$
\end{minipage}
\begin{minipage}[c]{0.33\linewidth}\centering
    \small(b) $p=\log n/n^{3/2}$
\end{minipage}
\begin{minipage}[c]{0.33\linewidth}\centering
    \small(c) $p=\log n/n$
\end{minipage}
\caption{Estimated expected running time for the (1+1)-EA on LeadingOnes under one-bit noise, where the $y$-axis is (the logarithm of estimated expected running time) divided by $\log n$.}\label{fig-leadingones-onebit}\vspace{-1em}
\end{figure*}

In the previous three subsections, we have proved that for the (1+1)-EA solving the LeadingOnes problem, if under bit-wise noise $(p,\frac{1}{n})$, the expected running time is polynomial when $p=O(\log n /n^2)$ and super-polynomial when $p=\omega(\log n /n)$; if under bit-wise noise $(1,q)$, the expected running time is polynomial when $q=O(\log n /n^3)$ and super-polynomial when $q=\omega(\log n /n^2)$; if under one-bit noise, the expected running time is polynomial when $p=O(\log n /n^2)$ and super-polynomial when $p=\omega(\log n /n)$. However, the current analysis does not cover all the ranges of $p$ and $q$. We thus have conducted experiments to complement the theoretical results.

For bit-wise noise $(p,\frac{1}{n})$, we do not know whether the running time is polynomial or super-polynomial when $p=\omega(\log n/n^2) \cap O(\log n/n)$. We empirically estimate the expected running time for $p=(\log n/n)^2$, $\log n/n^{3/2}$ and $\log n/n$. On each problem size $n$, we run the (1+1)-EA 1000 times independently. In each run, we record the number of fitness evaluations until an optimal solution w.r.t. the true fitness function is found for the first time. Then the total number of evaluations of the 1000 runs are averaged as the estimation of the expected running time. To show the relationship between the estimated expected running time and the problem size $n$ clearly, we plot the curve of $\log (\text{estimated expected running time})/\log n$, as shown in Figure~\ref{fig-leadingones-bitwise1}. Note that in subfigures (a) and (b), the problem size $n$ is in the range from 5 to 100, while in subfigure (c), $n$ is from 5 to 45. This is because the expected running time with $n > 45$ in subfigure (c) is too large to be estimated. We can observe that all the curves continue to rise as $n$ increases, which suggests that the expected running time for the three tested $p$ values is all $n^{\omega(1)}$, i.e., super-polynomial. For bit-wise noise $(1,q)$ and one-bit noise, we also empirically estimate the expected running time for the values of $q$ and $p$, which are uncovered by our theoretical analysis. The results are plotted in Figures~\ref{fig-leadingones-bitwise2} and~\ref{fig-leadingones-onebit}, respectively, which are similar to that observed for bit-wise noise $(p,\frac{1}{n})$.

Therefore, these empirical results suggest that the expected running time is super-polynomial for the uncovered ranges of $p$ and $q$ in theoretical analysis, and thus the currently derived ranges of $p$ and $q$ allowing a polynomial running time might be tight. The rigorous analysis is not easy. We may need to analyze transition probabilities between fitness levels more precisely, and design an ingenious distance function or use more advanced analysis tools. We leave it as a future work.

Since the theoretical results are all asymptotic, we also empirically compare the expected running time to see when the asymptotic behaviors can be clearly distinguished. For each problem and each kind of noise, we estimate the expected running time for the largest noise level (denoted by $a$) allowing a polynomial running time derived in our theoretical analysis, and then compare it with the estimated expected running time for one relatively smaller noise level $a/\log n$, and two relatively larger noise levels $a\cdot \log n$ and $a \cdot \sqrt{n}$. For example, for the OneMax problem under bit-wise noise $(p,\frac{1}{n})$, the largest noise level allowing a polynomial running time is $O(\log n/n)$; thus we compare the estimated expected running time for $p=1/n$, $\log n/n$, $(\log n)^2/n$ and $\log n/n^{1/2}$. The results are plotted in Figures~\ref{fig-onemax} and~\ref{fig-leadingones}. We can observe that for the OneMax and LeadingOnes problems, the asymptotic polynomial behaviors (i.e., `green $\circ$' and `red $\times$') can be distinguished when the problem size $n$ reaches nearly 200; while the asymptotic behaviors of polynomial (i.e., `green $\circ$' and `red $\times$') and super-polynomial (i.e., `blue $\bigtriangleup$' and `black $\ast$') can be clearly distinguished when $n$ reaches 50 and 100, respectively.

\section{The Robustness of Sampling to Noise}\label{sec-sampling}

From the derived results in the above two sections, we can observe that the (1+1)-EA is efficient for solving OneMax and LeadingOnes only under low noise levels. For example, for the (1+1)-EA solving OneMax under bit-wise noise $(p,\frac{1}{n})$, the optimal solution can be found in polynomial time only when $p=O(\log n/n)$. In this section, we analyze the robustness of the sampling strategy to noise. Sampling as presented in Definition~\ref{sampling} evaluates the fitness of a solution multiple ($m$) times independently and then uses the average to approximate the true fitness. We show that using sampling can significantly increase the largest noise level allowing a polynomial running time. For example, if using sampling with $m=4n^3$, the (1+1)-EA can always solve OneMax under bit-wise noise $(p,\frac{1}{n})$ in polynomial time, regardless of the value of $p$.

\begin{figure*}[t!]\centering
\begin{minipage}[c]{0.33\linewidth}\centering
        \includegraphics[width=0.9\linewidth, height=0.7\linewidth]{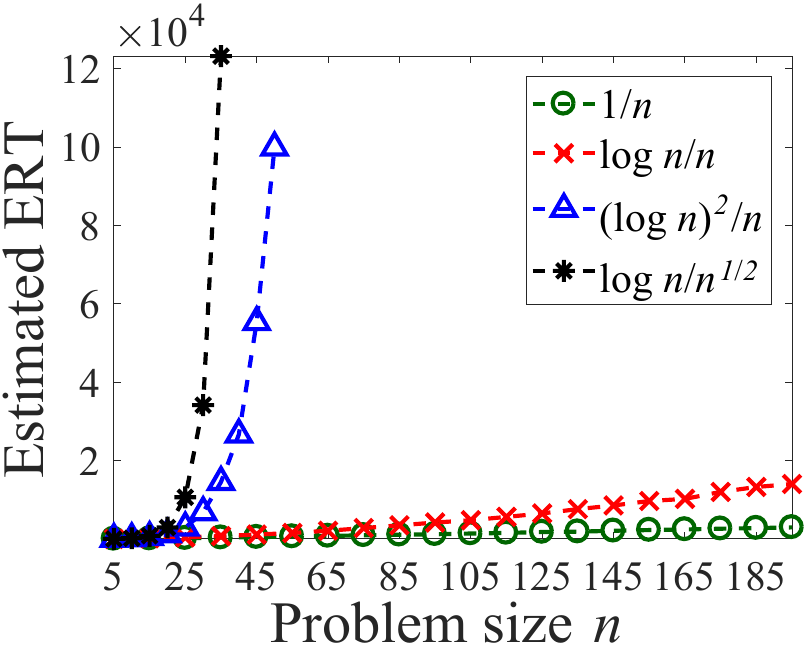}
\end{minipage}
\begin{minipage}[c]{0.33\linewidth}\centering
        \includegraphics[width=0.9\linewidth, height=0.7\linewidth]{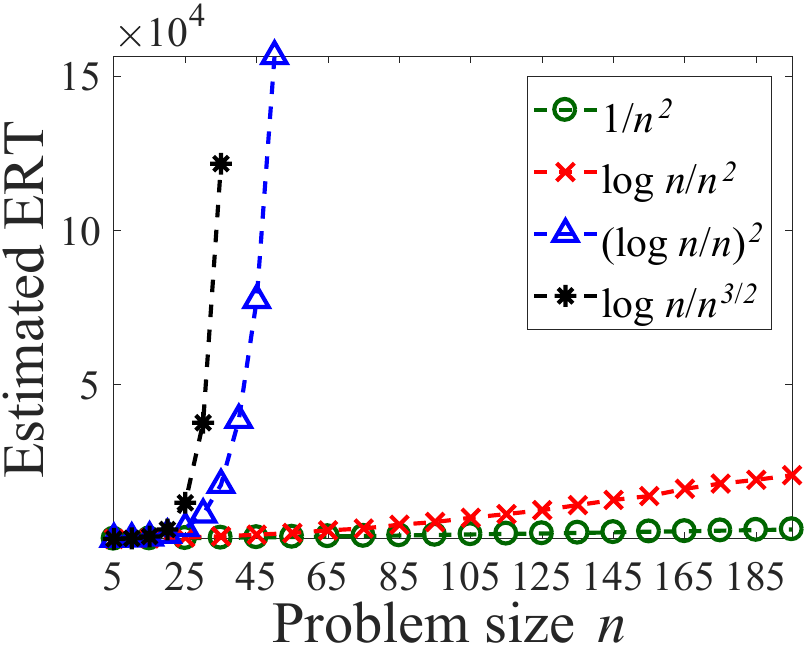}
\end{minipage}
\begin{minipage}[c]{0.33\linewidth}\centering
        \includegraphics[width=0.9\linewidth, height=0.7\linewidth]{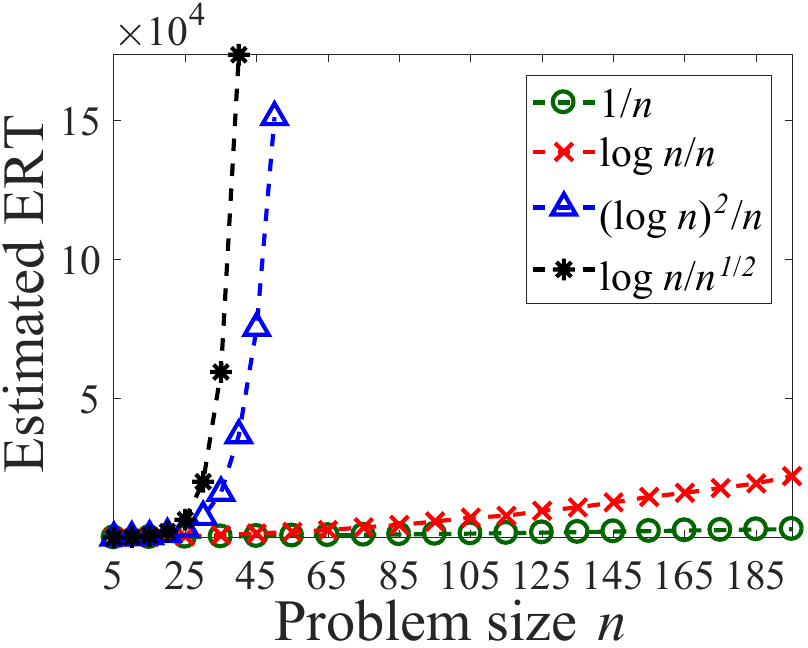}
\end{minipage}\\\vspace{0.3em}
\begin{minipage}[c]{0.33\linewidth}\centering
    \small(a) bit-wise noise $(p,\frac{1}{n})$
\end{minipage}
\begin{minipage}[c]{0.33\linewidth}\centering
    \small(b) bit-wise noise $(1,q)$
\end{minipage}
\begin{minipage}[c]{0.33\linewidth}\centering
    \small(c) one-bit noise
\end{minipage}
\caption{Estimated expected running time (ERT) for the (1+1)-EA on OneMax under noise. Note that for the three studied noise models, the largest noise levels allowing a polynomial running time derived in theoretical analysis are $O(\log n/n)$, $O(\log n/n^2)$ and $O(\log n/n)$, respectively.}\label{fig-onemax}\vspace{-0.7em}
\end{figure*}

\begin{figure*}[t!]\centering
\begin{minipage}[c]{0.33\linewidth}\centering
        \includegraphics[width=0.9\linewidth, height=0.7\linewidth]{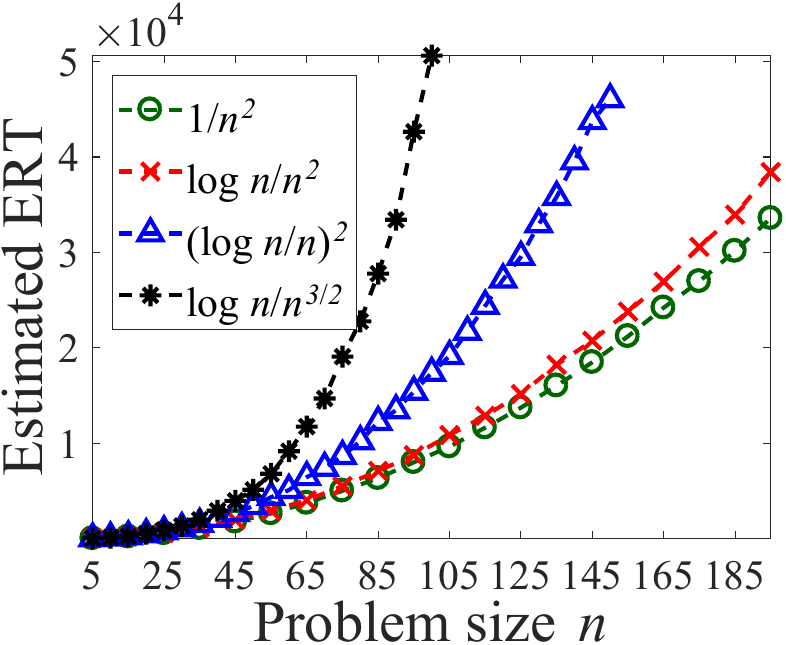}
\end{minipage}
\begin{minipage}[c]{0.33\linewidth}\centering
        \includegraphics[width=0.9\linewidth, height=0.7\linewidth]{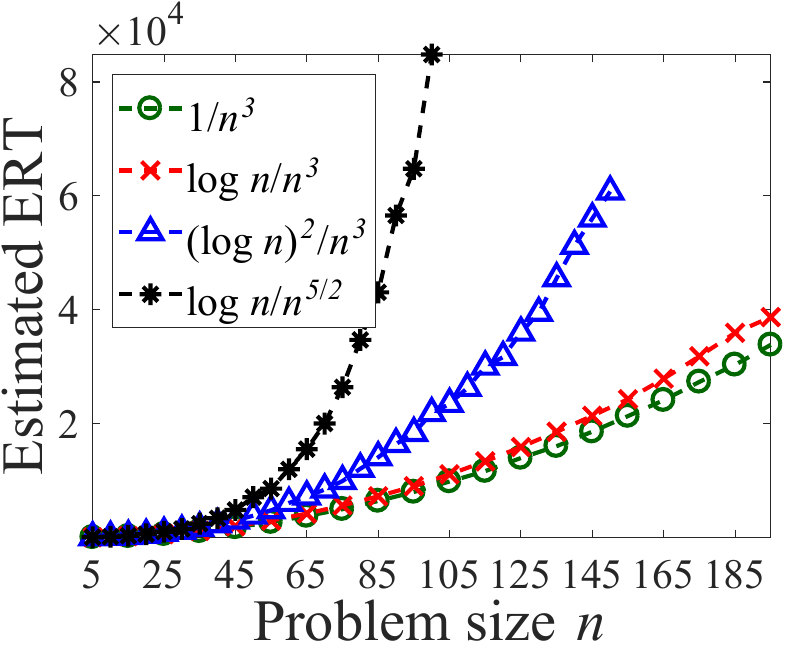}
\end{minipage}
\begin{minipage}[c]{0.33\linewidth}\centering
        \includegraphics[width=0.9\linewidth, height=0.7\linewidth]{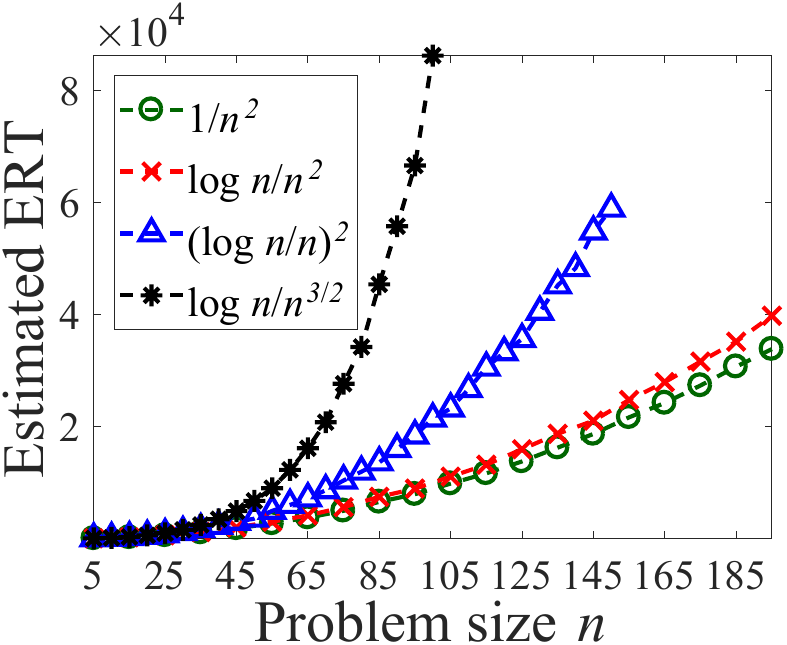}
\end{minipage}\\\vspace{0.3em}
\begin{minipage}[c]{0.33\linewidth}\centering
    \small(a) bit-wise noise $(p,\frac{1}{n})$
\end{minipage}
\begin{minipage}[c]{0.33\linewidth}\centering
    \small(b) bit-wise noise $(1,q)$
\end{minipage}
\begin{minipage}[c]{0.33\linewidth}\centering
    \small(c) one-bit noise
\end{minipage}
\caption{Estimated expected running time (ERT) for the (1+1)-EA on LeadingOnes under noise. Note that for the three studied noise models, the largest noise levels allowing a polynomial running time derived in theoretical analysis are $O(\log n/n^2)$, $O(\log n/n^3)$ and $O(\log n/n^2)$, respectively.}\label{fig-leadingones}\vspace{-1em}
\end{figure*}

\subsection{The OneMax Problem}

We prove in Theorems~\ref{theo-onemax-sampling-upper1} and~\ref{theo-onemax-sampling-upper3} that under bit-wise noise $(p,\frac{1}{n})$ or one-bit noise, the (1+1)-EA can always solve OneMax in polynomial time by using sampling. For bit-wise noise $(1,q)$, the tight range of $q$ allowing a polynomial running time is $1/2-1/n^{O(1)}$, as shown in Theorems~\ref{theo-onemax-sampling-upper2} and~\ref{theo-onemax-sampling-lower}. Let $x^k$ denote any solution with $k$ number of 1-bits, and $f^n(x^{k})$ denote its noisy objective value. For proving polynomial upper bounds, we use Lemma~\ref{onemax-noise-upper}, which gives a sufficient condition based on the probability $\mathrm{P}(f^n(x^j) < f^{n}(x^{k+1}))$ for $j \leq k$. But for the (1+1)-EA using sampling, the probability changes to be $\mathrm{P}(\hat{f}(x^j) < \hat{f}(x^{k+1}))$, where $\hat{f}(x^j)=\frac{1}{m}\sum^{m}_{i=1}f^n_i(x^j)$ as shown in Definition~\ref{sampling}. Lemma~\ref{onemax-noise-upper} requires a lower bound on $\mathrm{P}(\hat{f}(x^j) < \hat{f}(x^{k+1}))$. Our proof idea as presented in Lemma~\ref{onemax-upper-midtheo} is to derive a lower bound on the expectation of $f^{n}(x^{k+1})-f^n(x^j)$ and then apply Chebyshev's inequality. We will directly use Lemma~\ref{onemax-upper-midtheo} in the following proofs. For proving super-polynomial lower bounds, we use Lemma~\ref{onemax-noise-lower} by replacing $\mathrm{P}(f^n(x^k) < f^{n}(x^{k+1}))$ with $\mathrm{P}(\hat{f}(x^k) < \hat{f}(x^{k+1}))$. Let $poly(n)$ indicate any polynomial of $n$.

\begin{lemma}\label{onemax-upper-midtheo}
  Suppose there exists a real number $\delta > 0$ such that
  \begin{equation}
  \forall j\leq k<n:
  \mathrm{E}(f^n(x^{k+1})-f^n(x^j))\ge \delta,
  \end{equation}
then the (1+1)-EA using sampling with $m=n^3/\delta^2$ needs polynomial number of iterations in expectation for solving noisy OneMax.
\end{lemma}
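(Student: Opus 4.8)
The plan is to invoke the sampling version of Lemma~\ref{onemax-noise-upper}, which only requires a lower bound on the probability $\mathrm{P}(\hat{f}(x^j) < \hat{f}(x^{k+1}))$ that the estimated fitness correctly ranks a solution with more $1$-bits above any solution with fewer $1$-bits. Thus the whole task reduces to converting the given \emph{expectation} gap $\mathrm{E}(f^n(x^{k+1})-f^n(x^j))\ge\delta$ into a \emph{high-probability} correct-comparison statement, uniformly over all $j\le k<n$. The natural engine for this is Chebyshev's inequality applied to the sample mean, exactly as announced in the surrounding text.

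First I would fix any $j\le k<n$ and set $\bar Z := \hat{f}(x^{k+1})-\hat{f}(x^j) = \frac{1}{m}\sum_{i=1}^m(f^n_i(x^{k+1})-f^n_i(x^j))$. Since the sample mean preserves expectation, $\mathrm{E}(\bar Z)=\mathrm{E}(f^n(x^{k+1})-f^n(x^j))\ge\delta$. For the variance I would use that every noisy OneMax value lies in $[0,n]$, so a single noisy evaluation has variance at most $n^2/4$; averaging $m$ independent copies and adding the two independent terms gives $\mathrm{Var}(\bar Z)\le n^2/(2m)\le n^2/m$. A wrong comparison $\hat{f}(x^j)\ge\hat{f}(x^{k+1})$ is the event $\bar Z\le 0$, which forces $|\bar Z-\mathrm{E}(\bar Z)|\ge\delta$; hence by Chebyshev
\[
\mathrm{P}(\hat{f}(x^j)\ge\hat{f}(x^{k+1}))\le \frac{\mathrm{Var}(\bar Z)}{\delta^2}\le\frac{n^2}{m\delta^2}=\frac1n,
\]
using $m=n^3/\delta^2$. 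Therefore $\mathrm{P}(\hat{f}(x^j)<\hat{f}(x^{k+1}))\ge 1-\frac1n$ for all $j\le k<n$.

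Finally I would feed this uniform bound into Lemma~\ref{onemax-noise-upper} with a constant choice $c=1/15$ and a constant $l$ satisfying $cl\ge 1$ (e.g.\ $l=15$, which is valid once $n\ge 30$; smaller $n$ are trivially polynomial). The first condition $\mathrm{P}\ge 1-l/n$ is immediate from $1-\frac1n\ge 1-\frac{l}{n}$. For the second condition I would note that $k<n-l$ gives $n-k\ge l+1$, so $c(n-k)\ge c(l+1)>1$ and hence $1-c\frac{n-k}{n}<1-\frac1n\le \mathrm{P}(\hat{f}(x^j)<\hat{f}(x^{k+1}))$. Since both conditions hold with a constant $l$, Lemma~\ref{onemax-noise-upper} yields $O(n\log n)+n2^{O(l)}=O(n\log n)$ iterations in expectation, i.e.\ polynomially many.

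The routine parts are the variance estimate and the constant bookkeeping in the last step; the only place needing care --- the ``hard part'' --- is making the two conditions of Lemma~\ref{onemax-noise-upper} compatible with a \emph{single} uniform bound $1-\frac1n$. This works precisely because the demanded accuracy $1-c\frac{n-k}{n}$ is weakest near $k=n-l$, and choosing a constant $l$ with $cl\ge1$ keeps it below our uniform guarantee. Note also that the statement concerns the number of \emph{iterations}, so it holds for every $\delta>0$ regardless of whether $m=n^3/\delta^2$ is itself polynomial.
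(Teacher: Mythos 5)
Your proposal is correct and follows essentially the same route as the paper's proof: Chebyshev's inequality applied to the sample mean $\hat{f}(x^{k+1})-\hat{f}(x^j)$ converts the expectation gap $\delta$ into a bound $\mathrm{P}(\hat{f}(x^j)\ge\hat{f}(x^{k+1}))=O(1/n)$, which is then fed into Lemma~\ref{onemax-noise-upper}. The only differences are cosmetic --- the paper measures the deviation by $\mu_{k,j}/2$ (getting $4/n$ instead of your $1/n$) and chooses $l=\log n$ rather than your constant $l=15$ --- but both parameter choices satisfy the lemma's conditions and yield a polynomial bound on the expected number of iterations.
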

\begin{myproof}
We use Lemma~\ref{onemax-noise-upper} to prove it. For any $j\leq k<n$, let $Y_{k,j}=f^n(x^{k+1})-f^n(x^j)$ and $\hat{Y}_{k,j}=\hat{f}(x^{k+1})-\hat{f}(x^j)$. We then need to analyze the probability $\mathrm{P}(\hat{f}(x^j)<\hat{f}(x^{k+1}))=\mathrm{P}(\hat{Y}_{k,j}>0)$.

Denote the expectation $\mathrm{E}(Y_{k,j})$ as $\mu_{k,j}$ and the variance $\mathrm{Var}(Y_{k,j})$ as $\sigma^2_{k,j}$. It is easy to verify that $\mathrm{E}(\hat{Y}_{k,j})=\mu_{k,j}$ and $\mathrm{Var}(\hat{Y}_{k,j})=\sigma^2_{k,j}/m$. By Chebyshev's inequality, we have
\begin{align}
\mathrm{P}(\hat{Y}_{k,j} \leq 0) \leq \mathrm{P}(|\hat{Y}_{k,j} -\mu_{k,j}|\geq \mu_{k,j}/2)
\leq 4\sigma^2_{k,j}/(m\mu^2_{k,j}).
\end{align}
Since $\mu_{k,j}\geq \delta>0$, $\sigma^2_{k,j}=\mathrm{E}(Y^2_{k,j})-\mu^2_{k,j} \le n^2$ and $m=n^3/\delta^2$, we have
\begin{align}
\mathrm{P}(\hat{Y}_{k,j} \leq 0) \le 4/n \leq \log n/(15n),
\end{align}
where the last inequality holds with sufficiently large $n$. Let $l=\log n$. Then, $\mathrm{P}(\hat{Y}_{k,j} > 0) \geq 1-\frac{\log n}{15n} > 1- \frac{l}{n}$. Let $c=\frac{1}{15}$. For $k<n-l, \mathrm{P}(\hat{Y}_{k,j} >0) \geq  1-c\frac{n-k}{n}$. Thus, the condition of Lemma~\ref{onemax-noise-upper} (i.e., Eq.~(\ref{eq-upper-cond})) holds. We then get that the expected number of iterations is $O(n\log n)+n2^{O(\log n)}=n^{O(1)}$, i.e., polynomial.\vspace{0.8em}
\end{myproof}

For bit-wise noise $(p,\frac{1}{n})$, we apply Lemma~\ref{onemax-upper-midtheo} to prove that the (1+1)-EA using sampling with $m=4n^3$ can always solve OneMax in polynomial time, regardless of the value of $p$.

\begin{theorem}\label{theo-onemax-sampling-upper1}
For the (1+1)-EA on OneMax under bit-wise noise $(p,\frac{1}{n})$, if using sampling with $m=4n^3$, the expected running time is polynomial.
\end{theorem}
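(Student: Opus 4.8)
The plan is to reduce everything to Lemma~\ref{onemax-upper-midtheo}. Observe that the prescribed sample size factors as $m = 4n^3 = n^3/(1/2)^2$, so it suffices to verify the hypothesis of that lemma with $\delta = 1/2$; that is, to show
\begin{equation}
\forall j \le k < n:\quad \mathrm{E}(f^n(x^{k+1}) - f^n(x^j)) \ge \frac{1}{2}.
\end{equation}
Lemma~\ref{onemax-upper-midtheo} then immediately yields a polynomial number of iterations, and since the per-iteration cost is $2m = 8n^3$, the overall running time stays polynomial. So the whole theorem comes down to one expectation estimate.

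First I would compute the expected noisy fitness of an arbitrary solution with $\ell$ ones. Under bit-wise noise $(p,\frac{1}{n})$, with probability $1-p$ no flipping happens and the fitness equals $\ell$; with probability $p$ each bit is flipped independently with probability $\frac{1}{n}$, in which case the expected number of ones is, by linearity of expectation, $\ell(1-\frac{1}{n}) + (n-\ell)\frac{1}{n} = \ell + 1 - \frac{2\ell}{n}$. Combining the two cases,
\begin{equation}
\mathrm{E}(f^n(x^\ell)) = (1-p)\,\ell + p\left(\ell + 1 - \frac{2\ell}{n}\right) = \ell + p\left(1 - \frac{2\ell}{n}\right).
\end{equation}

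The key point is that this is \emph{affine} in $\ell$, so the difference for $j \le k$ collapses into a clean product:
\begin{equation}
\mathrm{E}(f^n(x^{k+1}) - f^n(x^j)) = (k+1-j)\left(1 - \frac{2p}{n}\right).
\end{equation}
Since $j \le k$ forces $k+1-j \ge 1$, and $p \le 1$ gives $1 - \frac{2p}{n} \ge 1 - \frac{2}{n} \ge \frac{1}{2}$ for $n \ge 4$, the desired bound with $\delta = \frac{1}{2}$ holds for all sufficiently large $n$. Applying Lemma~\ref{onemax-upper-midtheo} finishes the proof.

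The calculation is short, and the only subtle point worth flagging is that the bound must hold uniformly over \emph{all} pairs $j \le k$, not merely the diagonal $j=k$; it is precisely the affine-in-$\ell$ form of $\mathrm{E}(f^n(x^\ell))$ that makes the single constant $\delta = \frac{1}{2}$ work across every comparison, since a larger gap $k+1-j$ only increases the expected separation. Thus there is no real obstacle beyond getting the expectation right: the robustness of sampling here stems entirely from the fact that bit-wise noise $(p,\frac{1}{n})$ preserves, in expectation, a positive $\Theta(1)$ fitness gap between a better and a worse search point, which Chebyshev's inequality (inside Lemma~\ref{onemax-upper-midtheo}) then amplifies with a polynomial sample size.
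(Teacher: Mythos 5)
Your proposal is correct and follows essentially the same route as the paper's own proof: both reduce the theorem to Lemma~\ref{onemax-upper-midtheo} with $\delta = 1/2$ (hence $m = n^3/\delta^2 = 4n^3$), compute the same affine expression $\mathrm{E}(f^n(x^\ell)) = (1-\frac{2p}{n})\ell + p$ by linearity of expectation, and conclude via $\mathrm{E}(f^n(x^{k+1})-f^n(x^j)) = (k+1-j)(1-\frac{2p}{n}) \geq 1/2$ for $n \geq 4$. The only cosmetic difference is that you condition on whether noise occurs before applying linearity, while the paper computes the per-bit flip probability $\frac{p}{n}$ directly; the resulting formula and argument are identical.
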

\begin{myproof}
We use Lemma~\ref{onemax-upper-midtheo} to prove it. Since $\mathrm{E}(f^n(x^j))=j(1-\frac{p}{n})+(n-j)\frac{p}{n}=(1-\frac{2p}{n})j+p$, we have, for any $j \leq k < n$,
$$
\mathrm{E}(f^n(x^{k+1})-f^n(x^j))=\left(1-\frac{2p}{n}\right)(k+1-j)\geq 1-\frac{2p}{n} \geq 1/2,
$$
where the last inequality holds with $n \geq 4$. Thus, by Lemma~\ref{onemax-upper-midtheo}, we get that the expected number of iterations of the (1+1)-EA using sampling with $m=4n^3$ is polynomial. Since each iteration takes $2m=8n^3$ number of fitness evaluations, the expected running time is also polynomial.\vspace{0.8em}
\end{myproof}

For bit-wise noise $(1,q)$, we prove in the following two theorems that by using sampling, the tight range of $q$ allowing a polynomial running time is $1/2-1/n^{O(1)}$.

\begin{theorem}\label{theo-onemax-sampling-upper2}
For the (1+1)-EA on OneMax under bit-wise noise $(1,q)$ with $q= 1/2-1/n^{O(1)}$, if using sampling, there exists some $m = O(poly(n))$ such that the expected running time is polynomial.
\end{theorem}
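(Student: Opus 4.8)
The plan is to invoke Lemma~\ref{onemax-upper-midtheo} in exactly the same manner as the proof of Theorem~\ref{theo-onemax-sampling-upper1}. That lemma reduces everything to finding an inverse-polynomial lower bound $\delta$ on the expected one-step fitness gap $\mathrm{E}(f^n(x^{k+1})-f^n(x^j))$ valid for all $j \leq k < n$: once $\delta = 1/poly(n)$, the prescribed sample size $m = n^3/\delta^2$ is itself polynomial, each iteration costs $2m = poly(n)$ evaluations, and a polynomial number of iterations therefore gives a polynomial running time. So the whole task is to compute this expected gap under bit-wise noise $(1,q)$ and show it is only inverse-polynomially small.

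First I would compute the expected noisy fitness. Since bit-wise noise $(1,q)$ always occurs and flips each bit independently with probability $q$, each of the $k$ one-bits of $x^k$ contributes $1$ with probability $1-q$, and each of the $n-k$ zero-bits becomes a one with probability $q$. By linearity of expectation,
\[
\mathrm{E}(f^n(x^k)) = k(1-q) + (n-k)q = (1-2q)k + nq.
\]
Consequently, for any $j \leq k < n$,
\[
\mathrm{E}(f^n(x^{k+1}) - f^n(x^j)) = (1-2q)(k+1-j) \geq 1-2q,
\]
where the inequality uses $k+1-j \geq 1$ together with $1-2q > 0$ (which holds because $q < 1/2$).

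Next I would translate the hypothesis $q = 1/2 - 1/n^{O(1)}$ into an explicit bound. Writing $q = 1/2 - 1/n^{a}$ for some fixed constant $a > 0$ yields $1-2q = 2/n^{a}$, so I may take $\delta = 2/n^{a}$, which is $1/poly(n)$. Then the sample size $m = n^3/\delta^2 = n^{3+2a}/4$ is polynomial, the hypothesis of Lemma~\ref{onemax-upper-midtheo} is satisfied, and the lemma gives a polynomial expected number of iterations; multiplying by the per-iteration cost $2m = poly(n)$ completes the proof.

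There is no real obstacle here, as the argument is a direct specialization of Lemma~\ref{onemax-upper-midtheo}; the single point requiring care is the reading of $q = 1/2 - 1/n^{O(1)}$ as $1/2 - 1/n^{a}$ for a \emph{constant} $a$, since this is precisely what keeps the gap $1-2q$ inverse-polynomial rather than inverse-super-polynomial. This is also exactly the threshold that the matching lower bound (Theorem~\ref{theo-onemax-sampling-lower}) shows to be tight: when $q = 1/2 - 1/n^{\omega(1)}$ the positive gap shrinks faster than any polynomial sample size can amplify via Chebyshev's inequality, so sampling can no longer push $\mathrm{P}(\hat{f}(x^j) < \hat{f}(x^{k+1}))$ close to $1$, and the present approach breaks down precisely where it should.
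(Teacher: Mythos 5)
Your proposal is correct and follows essentially the same route as the paper's own proof: both reduce the claim to Lemma~\ref{onemax-upper-midtheo} via the exact computation $\mathrm{E}(f^n(x^j))=(1-2q)j+nq$, lower-bound the gap by $1-2q\geq 2/n^{c}$ for some constant $c$ (the paper reads the hypothesis as $q\leq 1/2-1/n^{c}$, which also covers your equality parametrization), and take $m=n^{3+2c}/4$. No gaps; your closing remark on why the bound is tight against Theorem~\ref{theo-onemax-sampling-lower} matches the paper's discussion as well.
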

\begin{myproof}
We use Lemma~\ref{onemax-upper-midtheo} to prove it. Since $q= 1/2-1/n^{O(1)}$, there exists a positive constant $c$ such that $q\leq 1/2-1/n^c$. It is easy to verify that $\mathrm{E}(f^n(x^j))=j(1-q)+(n-j)q = (1-2q)j+nq$. Thus, for any $j \leq k < n$,
$$
\mathrm{E}(f^n(x^{k+1})-f^n(x^j))=(1-2q)(k+1-j)\geq 1-2q \geq 2/n^{c}.
$$
By Lemma~\ref{onemax-upper-midtheo}, we get that if using sampling with $m=n^{3+2c}/4$, the expected number of iterations is polynomial, and then the expected running time is polynomial. Thus, the theorem holds.
\end{myproof}

\begin{theorem}\label{theo-onemax-sampling-lower}
For the (1+1)-EA on OneMax under bit-wise noise $(1,q)$ with $q=1/2-1/n^{\omega(1)}$ or $q\geq 1/2$, if using sampling with any $m=O(poly(n))$, the expected running time is exponential.
\end{theorem}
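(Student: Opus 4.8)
The plan is to apply Lemma~\ref{onemax-noise-lower} with the true noisy fitness $f^n$ replaced by the sampled fitness $\hat f$, exactly as the surrounding text indicates. It therefore suffices to exhibit a range $n-l\le k<n$ with $l=\Theta(n)$ and a constant $c\ge 16$ on which the probability of the \emph{wrong} comparison, $\mathrm{P}(\hat f(x^k)\ge \hat f(x^{k+1}))$, is at least $c(n-k)/n$. Since $n-k\le l$ on this range, it is enough to prove that $\mathrm{P}(\hat f(x^k)\ge \hat f(x^{k+1}))$ is bounded below by a fixed constant (say $1/4$), uniformly in $k$; then taking $l=n/64$ and $c=16$ makes $c(n-k)/n\le 1/4$ throughout and keeps $l\le n/4$, yielding a running time of $2^{\Omega(n)}$, i.e.\ exponential.

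First I would record the one-sample statistics. Under bit-wise noise $(1,q)$ each of the $n$ bits is flipped independently with probability $q$, so $f^n(x^k)$ has mean $(1-2q)k+nq$ and, crucially, variance $nq(1-q)$ \emph{independent of $k$}. Writing $\hat Y=\hat f(x^{k+1})-\hat f(x^k)$ for the sampled gap and using that the two re-evaluations are independent, $\hat Y$ has mean $\mu=1-2q$ and variance $2nq(1-q)/m$. The whole argument rests on a single observation: in both regimes the standardized bias $\mu/\sqrt{\mathrm{Var}(\hat Y)}$ is $o(1)$. For $q\ge 1/2$ we have $\mu\le 0$; for $q=1/2-1/n^{\omega(1)}$ we have $\mu=2/n^{\omega(1)}$, which is super-polynomially small, whereas for \emph{any} $m=O(poly(n))$ the standard deviation $\sqrt{2nq(1-q)/m}$ is only polynomially small (here $q(1-q)=\Theta(1)$), so $\mu/\sqrt{\mathrm{Var}(\hat Y)}=1/n^{\omega(1)}\to 0$. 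This is the precise sense in which a polynomial sample size cannot amplify a super-polynomially small bias.

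To convert this into a lower bound on $\mathrm{P}(\hat Y\le 0)$ I would invoke a quantitative central limit theorem. Viewing $m\hat Y$ as a sum of $2mn$ independent Bernoulli-type bit contributions with total variance $2mnq(1-q)=\Theta(mn)$, the Berry--Esseen theorem gives a normal approximation with error $O(1/\sqrt{mn})=o(1)$, valid even for $m=1$ since $n\to\infty$. Hence $\mathrm{P}(\hat Y\le 0)\ge \Phi\!\left(-\mu/\sqrt{\mathrm{Var}(\hat Y)}\right)-o(1)\ge \tfrac12-o(1)\ge \tfrac14$ for large $n$, because the standardized bias tends to $0$ (case $q=1/2-1/n^{\omega(1)}$) or is non-positive (case $q\ge 1/2$). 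Feeding $\mathrm{P}(\hat f(x^k)\ge\hat f(x^{k+1}))\ge 1/4$ into Lemma~\ref{onemax-noise-lower} with the above choice of $l$ and $c$ then finishes the proof.

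The main obstacle is making this anti-concentration estimate rigorous uniformly over the whole parameter range, since Berry--Esseen is clean only while $q(1-q)=\Theta(1)$, i.e.\ $q$ bounded away from both $0$ and $1$. This already covers the genuinely hard regime $q=1/2-1/n^{\omega(1)}$ and the boundary $q$ just above $1/2$. For $q$ close to $1$ the per-bit variance degenerates, but there $\mu=1-2q$ is a strictly negative constant, so $\hat Y$ concentrates well below $0$ and a direct Chebyshev argument (or the complementation identity $f^n_q(x^k)\overset{d}{=}f^n_{1-q}(x^{n-k})$, which reflects the problem to $q'=1-q<1/2$) gives $\mathrm{P}(\hat Y\le 0)$ close to $1$. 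Stitching these sub-regimes together so that a single $l$ and $c$ serve for \emph{every} polynomial $m$ is where the care lies; the conceptual content, however, is entirely in the $o(1)$ signal-to-noise estimate of the second paragraph, which is exactly the mirror image of the positive-gap mechanism that made sampling succeed in Theorems~\ref{theo-onemax-sampling-upper1}--\ref{theo-onemax-sampling-upper2}.
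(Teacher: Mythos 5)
Your reduction is identical to the paper's: both replace $f^n$ by $\hat f$ in Lemma~\ref{onemax-noise-lower}, reduce everything to a constant lower bound on $\mathrm{P}(\hat f(x^k)\ge \hat f(x^{k+1}))$ uniform in $k$, and then choose $l=\Theta(n)$, $c=16$. Where you differ is the mechanism for that constant bound, and there your write-up has a concrete hole. Your Berry--Esseen step with error $O(1/\sqrt{mn})$ is valid only when $q(1-q)=\Theta(1)$ (the general error is $O(1/\sqrt{mnq(1-q)})$), as you yourself note; and your fallback for $q$ near $1$, Chebyshev, needs $\mathrm{Var}(\hat Y)=2nq(1-q)/m$ to be dominated by $\mu^2=(1-2q)^2\le 1$, i.e.\ $1-q=O(m/n)$. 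These two regimes do not meet when $m$ is a small constant, and the theorem quantifies over \emph{every} $m=O(poly(n))$, including $m=1,2,3$. Concretely, take $m=1$ and $q=1-2/n$: then $\mathrm{Var}(\hat Y)\approx 4$ while $\mu^2\approx 1$, so Chebyshev (and even one-sided Cantelli with your target $1/4$) is vacuous, and the total variance of $m\hat Y$ is $\Theta(1)$, so any normal approximation carries constant error and also misses the target $1/4$. The reflection identity $f^n_q(x^k)\overset{d}{=}f^n_{1-q}(x^{n-k})$ does not rescue this corner: it converts the question into an anti-concentration statement at $q'=1-q=\Theta(1/n)$, with mean gap $+1$ but standard deviation $\Theta(1)$, which is exactly the same difficulty. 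So the stitching you defer in your last paragraph is not a routine verification; as written, the proof does not cover part of the stated parameter range.

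The paper achieves uniformity with a purely combinatorial argument that needs no case distinction and no limit theorem. Writing each bit's noise outcome via i.i.d.\ indicators $Z_i$ with $\mathrm{P}(Z_i=1)=1-q$, it derives the exact identity $m(\hat f(x^{k+1})-\hat f(x^k))=S+Z^*-m$, where $S\sim \mathrm{Bin}(2m,1-q)$ and $Z^*$ is a difference of two sums of \emph{equally many} i.i.d.\ terms, hence symmetric about $0$, so $\mathrm{P}(Z^*\le 0)\ge 1/2$. It then shows $\mathrm{P}(S\le m)>1/4$ by pairing the binomial terms $t$ and $2m-t$: their ratio is $(q/(1-q))^{2m-2t}$, which is $\ge 1$ for $q\ge 1/2$ and $\ge 1/e$ for $q=1/2-1/n^{\omega(1)}$ and $m=O(poly(n))$ --- this last estimate being the paper's formalization of your ``polynomial sampling cannot amplify a super-polynomially small bias'' observation. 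By independence of $S$ and $Z^*$ this gives the bound $1/8$ simultaneously for all $q$ in the statement and all $m\ge 1$, which is precisely the uniformity your approach lacks. Your route is repairable: apply Berry--Esseen in the total-variance form whenever $q(1-q)\ge C/(mn)$, and Cantelli's inequality with a relaxed target constant (say $1/8$, shrinking $l$ to $n/128$) otherwise; with those constants the two regimes overlap for every $m\ge 1$. But that repair is exactly the missing content, so as submitted this is a genuine gap rather than a complete alternative proof.
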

\begin{myproof}
We use Lemma~\ref{onemax-noise-lower} to prove it. Note that for the (1+1)-EA using sampling, we have to analyze $\mathrm{P}(\hat{f}(x^k)\!<\! \hat{f}(x^{k+1}))$ instead of $\mathrm{P}(f^n(x^k)\!<\! f^{n}(x^{k+1}))$.

Let $Z$ denote a random variable which satisfies that $\mathrm{P}(Z=0)=q$ and $\mathrm{P}(Z=1)=1-q$. In the following proof, each $Z_i$ is an independent random variable, which has the same distribution as $Z$. We have $
f^n(x^k)=\sum^k_{i=1}Z_i+\sum^n_{i=k+1}(1-Z_i)$, and then,
\begin{align}
&f^n(x^{k+1})-f^n(x^k)\\
& =\sum_{i=1}^{k+1}Z_i+\sum_{i=k+2}^n(1-Z_i) -\sum_{i=n+1}^{n+k}Z_i-\sum_{i=n+k+1}^{2n}(1-Z_i) \\
& =\sum_{i=1}^{n+1}Z_i-\sum_{i=n+2}^{2n}Z_i-1.
\end{align}
Since $\hat{f}(x^k)=\frac{1}{m}\sum^{m}_{i=1} f^n_i(x^k)$, which is the average of $m$ independent evaluations, we have
\begin{align}
&m(\hat{f}(x^{k+1})-\hat{f}(x^k))
=\sum^{m-1}_{j=0}\sum_{i=2nj+1}^{2nj+n+1}Z_i-\sum^{m-1}_{j=0}\sum_{i=2nj+n+2}^{2n(j+1)}Z_i-m\\
&=\sum^{m-1}_{j=0}\sum_{i=2nj+1}^{2nj+2}Z_i +\left(\sum^{m-1}_{j=0}\sum_{i=2nj+3}^{2nj+n+1}Z_i-\sum^{m-1}_{j=0}\sum_{i=2nj+n+2}^{2n(j+1)}Z_i\right)-m\\
&=\sum^{m-1}_{j=0}\sum_{i=2nj+1}^{2nj+2}Z_i +Z^*-m,
\end{align}
where $Z^*=\sum^{m-1}_{j=0}\sum_{i=2nj+3}^{2nj+n+1}Z_i-\sum^{m-1}_{j=0}\sum_{i=2nj+n+2}^{2n(j+1)}Z_i$. To make $\hat{f}(x^k) \geq \hat{f}(x^{k+1})$, it is sufficient that $Z^* \leq 0$ and $\sum^{m-1}_{j=0}\sum_{i=2nj+1}^{2nj+2}Z_i \leq m$. That is,
\begin{align}\label{mid-eq3}
\mathrm{P}(\hat{f}(x^k) \geq \hat{f}(x^{k+1})) \geq \mathrm{P}(Z^* \leq 0) \cdot \mathrm{P}\left(\sum^{m-1}_{j=0}\sum_{i=2nj+1}^{2nj+2}Z_i \leq m\right).
\end{align}
Since $Z^*$ is the difference between the sum of the same number of $Z_i$, $Z^*$ has the same distribution as $-Z^*$. Thus, $\mathrm{P}(Z^*\leq 0)+\mathrm{P}(Z^*\geq 0)=\mathrm{P}(Z^*\leq 0)+\mathrm{P}(-Z^*\leq 0)=2\mathrm{P}(Z^*\leq 0)\geq 1$, which implies that
\begin{align}\label{mid-eq1}
\mathrm{P}(Z^*\leq 0)\geq 1/2.
\end{align}
We then investigate $\mathrm{P}(\sum^{m-1}_{j=0}\sum_{i=2nj+1}^{2nj+2}Z_i\leq m)$. Since $\sum^{m-1}_{j=0}\sum_{i=2nj+1}^{2nj+2}Z_i$ is the sum of $2m$ independent random variables which have the same distribution as $Z$, we have
$$
\mathrm{P}\left(\sum^{m-1}_{j=0}\sum_{i=2nj+1}^{2nj+2}Z_i \leq m\right)=\sum_{t=0}^m\binom{2m}{t}(1-q)^tq^{2m-t},
$$
$$
\mathrm{P}\left(\sum^{m-1}_{j=0}\sum_{i=2nj+1}^{2nj+2}Z_i > m\right)\!=\!\!\sum_{t=m+1}^{2m}\!\!\binom{2m}{t}(1-q)^tq^{2m-t}
\!=\!\sum_{t=0}^{m-1}\!\!\binom{2m}{t}(1-q)^{2m-t}q^t.
$$
For any $t<m$, let $r= \frac{(1-q)^tq^{2m-t}}{(1-q)^{2m-t}q^t} =(\frac{q}{1-q})^{2m-2t}$. If $q\geq 1/2$, we have $r\geq1$. If $q=1/2-1/n^{\omega(1)}$, we have
\begin{align}
r&\geq \left(\frac{q}{1-q}\right)^{2m}=\left(1-\frac{1-2q}{1-q}\right)^{2m}\\
&\geq \left(1-\frac{4}{n^{\omega(1)}}\right)^{2m}\geq \left(\frac{1}{e}\right)^{2m/(n^{\omega(1)}/4-1)}\geq \frac{1}{e},
\end{align}
where the first inequality is by $q\leq 1/2$, the second inequality is by $1-2q=2/n^{\omega(1)}$ and $1-q \geq 1/2$, and the last is by $m=O(poly(n))$. Thus, $\mathrm{P}(\sum^{m-1}_{j=0}\sum_{i=2nj+1}^{2nj+2}Z_i \leq m)>1/3\cdot \mathrm{P}(\sum^{m-1}_{j=0}\sum_{i=2nj+1}^{2nj+2}Z_i > m)$, which implies that
\begin{align}\label{mid-eq2}
\mathrm{P}\left(\sum^{m-1}_{j=0}\sum_{i=2nj+1}^{2nj+2}Z_i \leq m\right)>1/4.
\end{align}
By applying Eqs.~(\refeq{mid-eq1}) and~(\refeq{mid-eq2}) to Eq.~(\refeq{mid-eq3}), we get
$$
\mathrm{P}(\hat{f}(x^k) \geq \hat{f}(x^{k+1}))\geq  1/8.
$$

Let $c=16$ and $l=n/128$. For any $n-l\leq k<n$,
$$
\mathrm{P}(\hat{f}(x^{k})< \hat{f}(x^{k+1}))=1-\mathrm{P}(\hat{f}(x^{k})\geq \hat{f}(x^{k+1})) \leq 1- \frac{cl}{n}\leq 1- \frac{c(n-k)}{n},
$$
i.e., the condition of Lemma~\ref{onemax-noise-lower} holds. Thus, the expected number of iterations is $2^{\Omega(n/128)}$, and the expected running time is exponential.\vspace{0.8em}
\end{myproof}

For one-bit noise, we show that using sampling with $m =4n^3$ is sufficient to make the (1+1)-EA solve OneMax in polynomial time.

\begin{theorem}\label{theo-onemax-sampling-upper3}
For the (1+1)-EA on OneMax under one-bit noise, if using sampling with $m =4n^3$, the expected running time is polynomial.
\end{theorem}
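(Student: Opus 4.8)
The plan is to invoke Lemma~\ref{onemax-upper-midtheo}, exactly as was done for bit-wise noise $(p,\frac{1}{n})$ in Theorem~\ref{theo-onemax-sampling-upper1}. That lemma reduces everything to establishing a constant lower bound $\delta$ on the expected fitness gap $\mathrm{E}(f^n(x^{k+1})-f^n(x^j))$ for all $j\le k<n$, after which sampling with $m=n^3/\delta^2$ guarantees a polynomial number of iterations. So the only real work is computing $\mathrm{E}(f^n(x^j))$ under one-bit noise and checking that the resulting gap is at least a positive constant.

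First I would compute $\mathrm{E}(f^n(x^j))$ by conditioning on whether noise occurs. With probability $1-p$ the fitness is the true value $j$; with probability $p$ a uniformly random bit is flipped, which lowers the fitness to $j-1$ if a $1$-bit is chosen (probability $j/n$) and raises it to $j+1$ if a $0$-bit is chosen (probability $(n-j)/n$). Collecting terms gives the closed form $\mathrm{E}(f^n(x^j))=j+p\bigl(1-\tfrac{2j}{n}\bigr)$, which is the \emph{same} expression as for bit-wise noise $(p,\frac{1}{n})$, namely $(1-\tfrac{2p}{n})j+p$. Consequently, for any $j\le k<n$,
\begin{equation}
\mathrm{E}(f^n(x^{k+1})-f^n(x^j))=\left(1-\frac{2p}{n}\right)(k+1-j)\ge 1-\frac{2p}{n}\ge \frac{1}{2},
\end{equation}
where the first inequality uses $k+1-j\ge 1$ and the last holds for $n\ge 4$ since $p\le 1$.

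With $\delta=1/2$ established, Lemma~\ref{onemax-upper-midtheo} applies directly: the prescribed sample size is $m=n^3/\delta^2=4n^3$, matching the statement, and the expected number of iterations is polynomial. Since each iteration performs $2m=8n^3$ fitness evaluations, the expected running time is also polynomial, completing the argument.

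Frankly, I do not anticipate a genuine obstacle here: the proof is structurally identical to that of Theorem~\ref{theo-onemax-sampling-upper1}, and the one place demanding a little care is the expectation computation, because under one-bit noise exactly one bit flips rather than each bit flipping independently. I would therefore perform the computation via the explicit conditional split above (flip a $1$-bit with probability $j/n$, a $0$-bit with probability $(n-j)/n$) rather than a naive per-bit linearity shortcut, and then simply observe that the closed form coincides with the bit-wise $(p,\frac{1}{n})$ case so that the remainder of the argument transfers verbatim.
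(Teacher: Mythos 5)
Your proposal is correct and follows exactly the paper's route: the paper also proves this theorem by observing that $\mathrm{E}(f^n(x^j))$ under one-bit noise coincides with the bit-wise noise $(p,\frac{1}{n})$ expression $(1-\frac{2p}{n})j+p$, and then reuses the argument of Theorem~\ref{theo-onemax-sampling-upper1} via Lemma~\ref{onemax-upper-midtheo} with $\delta=1/2$ and $m=4n^3$. The only difference is that you spell out the conditional computation of the expectation, which the paper dismisses as ``easy to verify''; your computation is accurate.
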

\begin{myproof}
It is easy to verify that the expectation of $f^n(x^j)$ (i.e., $\mathrm{E}(f^n(x^j))$) under one-bit noise is the same as that under bit-wise noise $(p,\frac{1}{n})$. Thus, the proof can be finished in the same way as that of Theorem~\ref{theo-onemax-sampling-upper1}.\vspace{0.8em}
\end{myproof}

From the above analysis, we can intuitively explain why sampling is always effective for bit-wise noise $(p,\frac{1}{n})$ and one-bit noise, while it fails for bit-wise noise $(1,q)$ when $q=1/2-1/n^{\omega(1)}$ or $q\geq 1/2$. For two solutions $x$ and $y$ with $f(x) > f(y)$, if under bit-wise noise $(p,\frac{1}{n})$ and one-bit noise, the noisy fitness $f^n(x)$ is larger than $f^n(y)$ in expectation, and using sampling will increase this trend and make the probability of accepting the true worse solution $y$ sufficiently small. If under bit-wise noise $(1,q)$, when $q=1/2-1/n^{\omega(1)}$, although the noisy fitness $f^n(x)$ is still larger in expectation, the gap is very small (in the order of $1/n^{\omega(1)}$) and a polynomial sample size is not sufficient to make the probability of accepting the true worse solution $y$ small enough; when $q\geq 1/2$, the noisy fitness $f^n(x)$ is smaller in expectation, and using sampling will increase this trend and it obviously does not work.

\subsection{The LeadingOnes Problem}

The bit-wise noise $(p,\frac{1}{n})$ model is first considered. We prove in Theorem~\ref{theo-leadingones-sampling-upper1} that the (1+1)-EA using sampling can solve the LeadingOnes problem in polynomial time, regardless of the value of $p$. The proof idea is similar to that of Theorem~\ref{leadingones-noise-2-poly}. The main difference is the probability of accepting the offspring solution $x'$, which is changed from $\mathrm{P}(f^n(x')\geq f^n(x))$ to $\mathrm{P}(\hat{f}(x')\geq \hat{f}(x))$ due to sampling. Lemma~\ref{lemma-leadingones-sampling-prob} gives some bounds on this probability, which will be used in the proof of Theorem~\ref{theo-leadingones-sampling-upper1}.

\begin{lemma}\label{lemma-leadingones-sampling-prob}
For the LeadingOnes problem under bit-wise noise $(p,\frac{1}{n})$, if using sampling with $m=144n^6$, it holds that
\begin{enumerate}[{(1)}]
  \item for any $x$ with $\mathrm{LO}(x)=i<n$ and $y$ with $\mathrm{LO}(y) \leq i-2$ or $\mathrm{LO}(y)=i-1 \wedge y_{i+1}=0$, $\mathrm{P}(\hat{f}(x) \leq \hat{f}(y))\leq 1/n^2$.
  \item for any $y$ with $\mathrm{LO}(y) <n$, $\mathrm{P}(\hat{f}(1^n) \leq \hat{f}(y))\leq 1/(4n^4)$.
\end{enumerate}
\end{lemma}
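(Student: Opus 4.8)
The plan is to treat both parts by the same two-step recipe used in the proof of Lemma~\ref{onemax-upper-midtheo}: first lower-bound the expected gap between the two noisy fitnesses, then convert it into the desired probability bound via Chebyshev's inequality. Fix the two solutions to be compared and let $D_i$ be i.i.d.\ copies of the single-sample difference $f^n(x)-f^n(y)$ (resp.\ $f^n(1^n)-f^n(y)$), so that $\hat f(x)-\hat f(y)=\frac1m\sum_{i=1}^m D_i$ has mean $\mu:=\mathrm{E}(D_1)$ and, since $|D_1|\le n$, variance at most $n^2/m$. Whenever $\mu>0$,
\[
\mathrm{P}(\hat f(x)\le \hat f(y))\le \mathrm{P}(|\hat f(x)-\hat f(y)-\mu|\ge \mu)\le \frac{n^2}{m\mu^2}=\frac{1}{144\,n^4\mu^2},
\]
using $m=144n^6$. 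Hence part~(1) follows once I show $\mu\ge 1/(12n)$ and part~(2) follows once I show $\mu\ge 1/6$; the entire work is in establishing these lower bounds on $\mu$.

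To compute $\mu$ I would first record the expected noisy fitness of a solution $z$ with $\mathrm{LO}(z)=j<n$, namely $\mathrm{E}(f^n(z))=(1-p)j+p\,h(z)$ with $h(z)=\sum_{k=1}^n \mathrm{P}(\text{the first } k \text{ bits of the noisy } z \text{ are all } 1)$. The first $j$ tail terms equal $(1-1/n)^k$ and sum to a \emph{prefix part} $A(j):=(n-1)\big(1-(1-1/n)^j\big)$ depending only on $j$, while the terms with $k\ge j+1$ (which require flipping the first $0$-bit of $z$) form a \emph{suffix part} $B(z)$ obeying $(1-1/n)^j/n\le B(z)\le (1-1/n)^j$. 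The useful fact is the telescoping identity $A(j)-A(j-1)=(1-1/n)^j$: although $A(j)$ itself is $\Theta(n)$ and cannot be bounded per solution, consecutive differences are controlled exactly, so the gap must be extracted from the difference of prefix parts corrected by the suffix parts.

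I would then apply this to each case. When $\mathrm{LO}(y)\le i-2$ the prefix contributes at least $A(i)-A(i-2)=(1-1/n)^i+(1-1/n)^{i-1}$, which dominates the largest admissible $B(y)\le (1-1/n)^{\mathrm{LO}(y)}$ and yields $h(x)-h(y)\ge (1-1/n)^{i-2}(1-3/n+1/n^2)=\Omega(1)$. When $\mathrm{LO}(y)=i-1$ with $y_{i+1}=0$, the prefix gives only one power $A(i)-A(i-1)=(1-1/n)^i$, but the hypothesis $y_{i+1}=0$ forces the suffix to be small, $B(y)\le 2(1-1/n)^{i-1}/n$, leaving $h(x)-h(y)\ge (1-1/n)^{i-1}(1-3/n)=\Omega(1)$. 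Since $\mu=(1-p)\big(\mathrm{LO}(x)-\mathrm{LO}(y)\big)+p\big(h(x)-h(y)\big)$ with both factors nonnegative, $\mu\ge\min\{\mathrm{LO}(x)-\mathrm{LO}(y),\,h(x)-h(y)\}=\Omega(1)$, far exceeding $1/(12n)$, so part~(1) holds. For part~(2) the solution $1^n$ has no suffix part, and the binding case is $y=1^{n-1}0$, for which $B(y)=(1-1/n)^{n-1}/n$ exactly and $h(1^n)-h(y)=(1-1/n)^{n-1}(1-2/n)=(1-o(1))/e$; smaller $\mathrm{LO}(y)$ only enlarges the gap. Taking again the worst case over $p$ gives $\mu\ge(1-o(1))/e>1/6$ for large $n$, which is exactly the constant the factor $144$ is calibrated against.

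The main obstacle is the pair of near-tie subcases---part~(2) with $\mathrm{LO}(y)=n-1$ and part~(1) with $\mathrm{LO}(y)=i-1$---where the true-fitness advantage of $x$ over $y$ is only one leading one and is therefore completely masked when $p=1$, so the entire constant gap has to come from the noisy comparison alone. The delicate point is that $y$ can recover expected leading ones under noise by having its first $0$-bit flipped and a subsequent run of $1$s revealed; if that run were long (i.e.\ $y_{i+1}=1,y_{i+2}=1,\dots$) the suffix part $B(y)$ would be of the same order $(1-1/n)^{i-1}$ as the single-power prefix advantage and the gap could collapse to zero or turn negative. This is precisely why the hypothesis $y_{i+1}=0$ must be imposed in the $\mathrm{LO}(y)=i-1$ case, whereas $\mathrm{LO}(y)\le i-2$ needs no restriction because two prefix powers already absorb the worst-case suffix. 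Making this quantitative---showing $B(y)$ stays a factor $\Theta(1/n)$ below the prefix advantage under $y_{i+1}=0$, uniformly in $p$ and in the leading-ones position $i$---is the crux of the argument; once the constant gap is secured, the Chebyshev step closes both parts immediately.
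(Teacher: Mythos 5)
Your proposal is correct, and at the top level it follows the same recipe as the paper's proof: lower-bound $\mu=\mathrm{E}(f^n(x)-f^n(y))$, then convert this into the stated probability bounds via Chebyshev's inequality with $m=144n^6$, targeting $\mu\ge 1/(12n)$ for part~(1) and $\mu\ge 1/6$ for part~(2). Where you genuinely depart from the paper is in how $\mu$ is bounded. The paper computes $\mathrm{E}(f^n(\cdot))$ by conditioning on the noise's first action (which leading $1$-bit is flipped first, whether the first $0$-bit is flipped), obtains for consecutive levels only $\mathrm{E}(f^n(x)-f^n(y))\ge 1-p+p(1-\tfrac1n)^{i-1}(\tfrac{i-1}{n}-\tfrac1{n^2})$ --- which is merely $\Theta(1/n)$ when $p$ is close to $1$ and degenerates to $-1/n^2$ at $i=1$ --- and then telescopes through intermediate levels when $\mathrm{LO}(y)\le i-2$, absorbing the one possibly negative step; this is why its unified bound for part~(1) is only $1/(12n)$. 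Your layer-cake split $h(z)=A(\mathrm{LO}(z))+B(z)$, with the exact identity $A(j)-A(j-1)=(1-\tfrac1n)^j$ and the uniform suffix bounds, compares the two solutions directly and yields $\Omega(1)$ gaps in \emph{every} subcase of part~(1); it is thus strictly sharper than the paper's estimate (the $1/(12n)$ there is an artifact of the lossy telescoping, not of the problem), and it isolates cleanly why $y_{i+1}=0$ is indispensable in the $\mathrm{LO}(y)=i-1$ case (otherwise $B(y)$ can reach order $(1-\tfrac1n)^{i-1}$ and swallow the single prefix power) while $\mathrm{LO}(y)\le i-2$ needs no side condition --- a point the paper handles only implicitly through its ad hoc tighter bound on $\mathrm{E}(f^n(y))$. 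One sentence of yours should be repaired, though it does not affect your conclusions: for $\mathrm{LO}(y)\le i-2$ the literal claim that $A(i)-A(i-2)=(1-\tfrac1n)^i+(1-\tfrac1n)^{i-1}$ dominates $B(y)\le(1-\tfrac1n)^{\mathrm{LO}(y)}$ is false when $\mathrm{LO}(y)$ is far below $i$ (for $\mathrm{LO}(y)=0$ and $i$ near $n$ the left side is about $2/e<1$); the correct pairing is $A(i)-A(\mathrm{LO}(y))$ against $(1-\tfrac1n)^{\mathrm{LO}(y)}$, whose first two terms $(1-\tfrac1n)^{\mathrm{LO}(y)+1}+(1-\tfrac1n)^{\mathrm{LO}(y)+2}$ already exceed $(1-\tfrac1n)^{\mathrm{LO}(y)}$ with surplus $(1-\tfrac1n)^{\mathrm{LO}(y)}(1-\tfrac3n+\tfrac1{n^2})\ge(1-\tfrac1n)^{i-2}(1-\tfrac3n+\tfrac1{n^2})$; equivalently, the gap is increasing as $\mathrm{LO}(y)$ decreases, so $\mathrm{LO}(y)=i-2$ is the binding case and your stated bound holds.
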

\begin{myproof}
The proof is finished by deriving a lower bound on the expectation of $f^n(x)-f^n(y)$ (which is equal to the expectation of $\hat{f}(x)-\hat{f}(y)$) and then applying Chebyshev's inequality. We first consider case~(1). For any $x$ with $\mathrm{LO}(x)=i<n$,
\begin{align}\label{LO,(p,1/n):E_uppercase1}
\mathrm{E}(f^n(x))& \geq (1-p)\cdot i+
\sum_{j=1}^i p\left(1-\frac{1}{n}\right)^{j-1}\frac{1}{n}\cdot (j-1)\nonumber\\
& \quad+p\left(1-\frac{1}{n}\right)^{i}\frac{1}{n}\cdot (i+1)+
p\left(1-\frac{1}{n}\right)^{i+1}\cdot i,\\
\mathrm{E}(f^n(x)) &\leq (1-p)\cdot i+
\sum_{j=1}^{i}p\left(1-\frac{1}{n}\right)^{j-1}\frac{1}{n}\cdot (j-1) \nonumber\\
& \quad+p\left(1-\frac{1}{n}\right)^{i}\frac{1}{n}\cdot n
+p\left(1-\frac{1}{n}\right)^{i+1}\cdot i.
\end{align}
Note that when flipping the first 0-bit of $x$ and keeping the $i$ leading 1-bits unchanged, the fitness is at least $i+1$ and at most $n$. Then for any $1\leq i<n$, we have
\begin{align}\label{LO,(p,1/n):Diff_E}
&\mathrm{E}(f^n(x)-f^n(y)\mid \mathrm{LO}(x)=i \wedge \mathrm{LO}(y)=i-1)\\
& \geq 1-p+p\left(1-\frac{1}{n}\right)^{i-1}\frac{1}{n}\cdot (i-1) +\frac{p}{n}\left(1-\frac{1}{n}\right)^{i-1}\cdot \left(\left(1-\frac{1}{n}\right)(i+1)-n\right)\nonumber \\
& \quad+p\left(1-\frac{1}{n}\right)^{i}\left(\left(1-\frac{1}{n}\right)i-(i-1)\right) \nonumber \\
& =1-p+p\left(1-\frac{1}{n}\right)^{i-1}\left(\frac{i-1}{n}-\frac{1}{n^2}\right) \geq
\left\{\begin{array}{lcl}
-1/n^2, \; & \text{if} \;\;\; i=1 \\
1/(6n), \; & \text{if} \;\;\; i\geq 2\\
\end{array} \right..
\end{align}
Thus, for any $x$ with $\mathrm{LO}(x)=i$ and $y$ with $\mathrm{LO}(y) \leq i-2$ (where $2 \leq i<n$), letting $z$ be any solution with $\mathrm{LO}(z)=\mathrm{LO}(y)+1$, we have
\begin{align}\label{lower-bound-case1}
\mathrm{E}(f^n(x)-f^n(y))&=\mathrm{E}(f^n(x)-f^n(z))+\mathrm{E}(f^n(z)-f^n(y))\\
&\geq \frac{1}{6n} \cdot (i-\mathrm{LO}(y)-1)-\frac{1}{n^2} \geq \frac{1}{6n} -\frac{1}{n^2} \geq \frac{1}{12n},
\end{align}
where the first inequality is by repeatedly applying Eq.~(\refeq{LO,(p,1/n):Diff_E}), the second inequality is by $\mathrm{LO}(y)\leq i-2$, and the last holds with $n \geq 12$.

When $\mathrm{LO}(y)=i-1 \wedge y_{i+1}=0$, we have to re-analyze Eq.~(\ref{LO,(p,1/n):Diff_E}) to derive a tighter lower bound, because applying Eq.~(\ref{LO,(p,1/n):Diff_E}) directly will lead to a negative lower bound for $i=1$. We first derive a tighter upper bound on $\mathrm{E}(f^n(y))$. When flipping the $i$-th bit of $y$ and keeping the $i-1$ leading 1-bits unchanged, we can now further consider the flipping of the $(i+1)$-th bit since we know that $y_{i+1}=0$, rather than directly using a trivial upper bound $n$ on the noisy fitness. If $y_{i+1}$ is not flipped, $f^n(y)= i$; otherwise, $f^n(y)\leq n$. Thus, we get
\begin{align} \label{LO,(p,1/n):E_uppercase2}
&\mathrm{E}(f^n(y) \mid \mathrm{LO}(y)=i-1 \wedge y_{i+1}=0)\\
& \leq (1-p)\cdot (i-1)+
\sum_{j=1}^{i-1}p\left(1-\frac{1}{n}\right)^{j-1}\frac{1}{n}\cdot (j-1) \nonumber\\
& \quad+p\left(1-\frac{1}{n}\right)^{i-1}\frac{1}{n}\cdot \left(\frac{1}{n}n+\left(1-\frac{1}{n}\right)i\right)
+p\left(1-\frac{1}{n}\right)^{i}\cdot (i-1).
\end{align}
By combining this inequality and the lower bound in Eq.~(\refeq{LO,(p,1/n):E_uppercase1}), we have that, for any $x$ with $\mathrm{LO}(x)=i$ and $y$ with $\mathrm{LO}(y) = i-1 \wedge y_{i+1}=0$ (where $1 \leq i<n$),
\begin{align}\label{lower-bound-case2}
&\mathrm{E}(f^n(x)-f^n(y)) \\
&\geq 1-p+p\left(1-\frac{1}{n}\right)^{i-1}\frac{1}{n}\cdot (i-1) +p\left(1-\frac{1}{n}\right)^{i}\left(\left(1-\frac{1}{n}\right)i-(i-1)\right)\\
& \quad+\frac{p}{n}\left(1-\frac{1}{n}\right)^{i-1}\cdot \left(\left(1-\frac{1}{n}\right)(i+1)-\left(i+1-\frac{i}{n}\right)
\right) \\
& =1-p+p\left(1-\frac{1}{n}\right)^{i-1}\left(1-\frac{2}{n}+\frac{i-1}{n^2}\right) \\
& \geq 1-p+p\cdot \frac{1}{e}\cdot \frac{1}{2} \geq 1-\frac{5}{6}p \geq \frac{1}{6},
\end{align}
where the second inequality holds with $n \geq 4$.

According to Eqs.~(\refeq{lower-bound-case1}) and~(\refeq{lower-bound-case2}), we have a unified lower bound $1/(12n)$ on $\mathrm{E}(f^n(x)-f^n(y))$. Denote $\mathrm{E}(f^n(x)-f^n(y))$ as $\mu$ and $\mathrm{Var}(f^n(x)-f^n(y))$ as $\sigma^2$. We have $\mu \geq 1/(12n)$ and $\sigma^2 \leq n^2$ since $|f^n(x)-f^n(y)| \leq n$. As $\hat{f}(x)$ is the average of $m=144n^6$ independent evaluations, it is easy to verify that $\mathrm{E}(\hat{f}(x)-\hat{f}(y))=\mu$ and $\mathrm{Var}(\hat{f}(x)-\hat{f}(y))=\sigma^2/m$. By Chebyshev's inequality,
\begin{align}\label{LO,(p,1/n):Chebyshev}
\mathrm{P}(\hat{f}(x) \leq \hat{f}(y))&\leq \mathrm{P}(|(\hat{f}(x)-\hat{f}(y))-\mu|\geq \mu)\leq \sigma^2/(m\mu^2)\leq 1/n^2.
\end{align}
Thus, case~(1) holds.

For case~(2), we first analyze $\mathrm{E}(f^n(1^n)-f^n(1^{n-1}0))$. The expectation on $f^n(1^n)$ can be easily calculated as follows:
\begin{align} \label{LO,(p,1/n):E-n}
\mathrm{E}(f^n(1^n))=(1-p)\cdot n+\sum_{j=1}^{n}p\left(1-\frac{1}{n}\right)^{j-1}\frac{1}{n}\cdot(j-1)
+p\left(1-\frac{1}{n}\right)^n\cdot n.
\end{align}
Combining this equality with the upper bound in Eq.~(\ref{LO,(p,1/n):E_uppercase1}), we get
\begin{align}\label{LO,(p,1/n):Diff_Es}
&\mathrm{E}(f^n(1^n)-f^n(1^{n-1}0))\\
& \geq 1-p+p\left(1-\frac{1}{n}\right)^{n-1}\frac{1}{n}\cdot (n-1) +p\left(1-\frac{1}{n}\right)^{n}-p\left(1-\frac{1}{n}\right)^{n-1} \nonumber\\
& =1-p+p\left(1-\frac{1}{n}\right)^{n-1}\left(1-\frac{2}{n}\right) \geq \frac{1}{6}.
\end{align}
Then, for any $y$ with $\mathrm{LO}(y)<n$, we have
$$
\mathrm{E}(f^n(1^n)-f^n(y))\!=\!\mathrm{E}(f^n(1^n)-f^n(1^{n-1}0))+\!\!\!
\sum^{n-1}_{k=\mathrm{LO}(y)+1}\!\!\!\!\! \mathrm{E}(f^n(z^{k})-f^n(z^{k-1}))\!\geq\! \frac{1}{6},
$$
where $z^{n-1}=1^{n-1}0$, $z^k \; (\mathrm{LO}(y)<k<n-1)$ denotes one solution $z$ with $\mathrm{LO}(z)=k$, $z^{\mathrm{LO}(y)}=y$, and the inequality is by applying Eqs.~(\refeq{LO,(p,1/n):Diff_Es}) and~(\ref{LO,(p,1/n):Diff_E}). As the analysis for $\mathrm{P}(\hat{f}(x)\leq \hat{f}(y))$ in case~(1) (i.e., Eq.~(\refeq{LO,(p,1/n):Chebyshev})), we can similarly use Chebyshev's inequality to derive that, noting that $\mu \geq 1/6$ here,
$$
\mathrm{P}(\hat{f}(1^{n})\leq \hat{f}(y)) \leq \sigma^2/(m\mu^2) \leq 1/(4n^4).
$$
Thus, case~(2) holds.\vspace{0.8em}
\end{myproof}

The following theorem shows that for bit-wise noise $(p,\frac{1}{n})$, using sampling with $m=144n^6$ is sufficient to make the (1+1)-EA solve LeadingOnes in polynomial time.

\begin{theorem}\label{theo-leadingones-sampling-upper1}
For the (1+1)-EA on LeadingOnes under bit-wise noise $(p,\frac{1}{n})$,if using sampling with $m=144n^6$, the expected running time is polynomial.
\end{theorem}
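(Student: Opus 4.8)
The plan is to apply the additive drift theorem (Lemma~\ref{additive-drift}), reusing the exponential distance function and the positive/negative drift decomposition from the proof of Theorem~\ref{leadingones-noise-2-poly}, but replacing every acceptance probability $\mathrm{P}(f^n(x')\ge f^n(x))$ by its sampled counterpart $\mathrm{P}(\hat f(x')\ge \hat f(x))$ and controlling the latter through Lemma~\ref{lemma-leadingones-sampling-prob}. Concretely, for $x$ with $\mathrm{LO}(x)=i$ I would set $V(x)=(1+\frac{c}{n})^n-(1+\frac{c}{n})^i$ and, crucially, take $c$ to be a \emph{constant} rather than the $\Theta(\log n)$ value used in Theorem~\ref{leadingones-noise-2-poly}; this is exactly what sampling buys, since the acceptance probability of a worsening step no longer scales with $p$. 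I then write $\mathrm{E}(V(\xi_t)-V(\xi_{t+1}) \mid \xi_t=x)=\mathrm{E}^+-\mathrm{E}^-$ with the same split.

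For the lower bound on $\mathrm{E}^+$ I would restrict attention to the single event that flips only the $(i+1)$-th bit and keeps all other bits fixed, which has probability $\frac{1}{n}(1-\frac{1}{n})^{n-1}\ge \frac{1}{en}$ and produces $x'$ with $\mathrm{LO}(x')\ge i+1$ and $V(x)-V(x')\ge \frac{c}{n}(1+\frac{c}{n})^i$. The point is that for this restricted offspring Lemma~\ref{lemma-leadingones-sampling-prob} always gives acceptance probability at least $1-1/n^2$: if $\mathrm{LO}(x')\ge i+2$ invoke case~(1) with the pair $(x',x)$ since $\mathrm{LO}(x)\le \mathrm{LO}(x')-2$; if $\mathrm{LO}(x')=i+1<n$ then, because only bit $i+1$ was flipped, $x_{i+2}=x'_{i+2}=0$, so the side condition of case~(1) holds; and if $x'=1^n$ use case~(2). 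Discarding the remaining (nonnegative) positive contributions then yields $\mathrm{E}^+\ge \frac{c}{2en^2}(1+\frac{c}{n})^i$ up to a constant.

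For the upper bound on $\mathrm{E}^-$ I would classify a worsening offspring $x'$ with $\mathrm{LO}(x')=i'<i$ into a ``clean'' type where Lemma~\ref{lemma-leadingones-sampling-prob}(1) applies (namely $i'\le i-2$, or $i'=i-1$ with $x'_{i+1}=0$) and a single ``exceptional'' type, $i'=i-1$ with $x'_{i+1}=1$. For the clean type the acceptance probability is at most $1/n^2$, so bounding $V(x')-V(x)\le (1+\frac{c}{n})^i$ and $\sum \mathrm{P}_{mut}\le 1$ gives a contribution at most $\frac{1}{n^2}(1+\frac{c}{n})^i$. The exceptional type needs to flip both bit $i$ and bit $i+1$, hence has total mutation probability at most $1/n^2$ and distance only $\frac{c}{n}(1+\frac{c}{n})^{i-1}$, so even with the trivial acceptance bound $1$ it contributes a lower-order $O(\frac{c}{n^3}(1+\frac{c}{n})^i)$. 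Thus $\mathrm{E}^-\le \frac{1}{n^2}(1+\frac{c}{n})^i(1+o(1))$.

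Subtracting, the net drift is at least $\frac{(1+c/n)^i}{n^2}\big(\frac{c}{2e}-1-o(1)\big)$; choosing $c$ to be a sufficiently large constant (any $c>2e$ works) makes this at least $c'/n^2$ for a positive constant $c'$, uniformly over all $i\le n-1$ and, decisively, over all $p\in[0,1]$. Since $V(\xi_0)\le (1+\frac{c}{n})^n\le e^c=O(1)$, Lemma~\ref{additive-drift} gives $\mathrm{E}(\tau)=O(n^2)$ iterations, and the per-iteration cost $2m=288n^6$ (plus the initial $m=144n^6$) yields a polynomial running time. The step I expect to be most delicate is the bookkeeping at the boundary mutations: verifying that the side conditions $x_{i+2}=0$ (positive move) and $x'_{i+1}=0$ (negative move) required by Lemma~\ref{lemma-leadingones-sampling-prob}(1) hold precisely because only a single relevant bit is flipped, and confirming that the leftover ``exceptional'' events are genuinely lower order. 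Everything else is the drift computation of Theorem~\ref{leadingones-noise-2-poly}, now driven by the much more favourable acceptance probabilities that sampling provides.
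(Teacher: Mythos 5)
Your proposal is correct and follows essentially the same route as the paper's proof: the additive drift theorem with the distance function $V(x)=(1+\frac{c}{n})^n-(1+\frac{c}{n})^i$ for a constant $c$ (the paper fixes $c=13$), the same restriction of the positive drift to the single-bit flip of position $i+1$ with acceptance probability controlled via cases~(1) and~(2) of Lemma~\ref{lemma-leadingones-sampling-prob}, and the same split of worsening offspring into the ``clean'' type (acceptance probability at most $1/n^2$) and the ``exceptional'' type $\mathrm{LO}(x')=i-1 \wedge x'_{i+1}=1$ (mutation probability at most $1/n^2$). Your bookkeeping of the side conditions and your slightly sharper distance bound for the exceptional type are both consistent with, and no weaker than, the paper's estimates, so the argument goes through as you describe.
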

\begin{myproof}
We use Theorem~\ref{additive-drift} to prove it. We first construct a distance function $V(x)$ as, for any $x$ with $\mathrm{LO}(x)=i$, $$V(x)=\left(1+\frac{c}{n}\right)^n-\left(1+\frac{c}{n}\right)^i,$$ where $c=13$. Then, we investigate $\mathrm{E}(V(\xi_t)-V(\xi_{t+1}) \mid \xi_t=x)$ for any $x$ with $\mathrm{LO}(x)<n$. Assume that currently $\mathrm{LO}(x)=i$, where $0\leq i \leq n-1$. We divide the drift into two parts: positive $\mathrm{E}^+$ and negative $\mathrm{E}^-$. That is, $$\mathrm{E}(V(\xi_t)-V(\xi_{t+1}) \mid \xi_t=x)=\mathrm{E}^+-\mathrm{E}^-.$$

The positive drift $\mathrm{E}^+$ can be expressed as Eq.~(\refeq{eq-positive-drift}), except that $\mathrm{P}(f^n(x')\geq f^n(x))$ changes to $\mathrm{P}(\hat{f}(x')\geq \hat{f}(x))$ due to sampling. To derive a lower bound on $\mathrm{E}^+$, we only consider that the $(i+1)$-th bit of $x$ is flipped and the other bits keep unchanged, the probability of which is $\frac{1}{n}(1-\frac{1}{n})^{n-1}$. The only difference between $x'$ and $x$ is the $(i+1)$-th bit and $\mathrm{LO}(x')>\mathrm{LO}(x)=i$. If $\mathrm{LO}(x')=n$, $\mathrm{P}(\hat{f}(x') \leq \hat{f}(x)) \leq 1/(4n^4)$ by case~(2) of Lemma~\ref{lemma-leadingones-sampling-prob}, and then $\mathrm{P}(\hat{f}(x') \geq \hat{f}(x)) \geq 1-1/(4n^4)$. If $\mathrm{LO}(x')<n$, it must hold that $\mathrm{LO}(x)\leq \mathrm{LO}(x')-1 \wedge x_{\mathrm{LO}(x')+1}=x'_{\mathrm{LO}(x')+1}=0$. By case~(1) of Lemma~\ref{lemma-leadingones-sampling-prob}, $\mathrm{P}(\hat{f}(x') \leq \hat{f}(x)) \leq 1/n^2$, and then $\mathrm{P}(\hat{f}(x') \geq \hat{f}(x)) \geq 1-1/n^2$. Thus, the probability of accepting the offspring solution $x'$ is at least $1/2$. Since $\mathrm{LO}(x')>i$, $V(x)-V(x') \geq (1+\frac{c}{n})^{i+1}-(1+\frac{c}{n})^i=\frac{c}{n}(1+\frac{c}{n})^i$. Then, $\mathrm{E}^+$ can be lower bounded as follows:
\begin{align} \label{LO,(p,1/n):E+}
\mathrm{E}^+ \geq  \left(1-\frac{1}{n}\right)^{n-1}\frac{1}{n}\cdot \frac{1}{2} \cdot \frac{c}{n}\left(1+\frac{c}{n}\right)^i\geq \frac{c}{6n^2}\left(1+\frac{c}{n}\right)^i.
\end{align}

For the negative drift $\mathrm{E}^{-}$, we need to consider $\mathrm{LO}(x')<\mathrm{LO}(x)=i$. Since $V(x')-V(x) \leq V(0^n)-V(x)= (1+\frac{c}{n})^i-1$, Eq.~(\refeq{eq-negative-drift}) becomes
$$\mathrm{E}^-\leq \left(\left(1+\frac{c}{n}\right)^i-1\right)\sum_{x': \mathrm{LO}(x')<i}\mathrm{P}_{mut}(x,x')\cdot \mathrm{P}(\hat{f}(x') \geq \hat{f}(x)).
$$
We further divide $\mathrm{LO}(x')<i$ into two cases. If $\mathrm{LO}(x')\leq i-2$ or $\mathrm{LO}(x')= i-1 \wedge x'_{i+1}=0$, then $\mathrm{P}(\hat{f}(x') \geq \hat{f}(x)) \leq 1/n^2$ by case~(1) of Lemma~\ref{lemma-leadingones-sampling-prob}. If $\mathrm{LO}(x')= i-1 \wedge x'_{i+1}=1$, then $\sum_{x': \mathrm{LO}(x')= i-1 \wedge x'_{i+1}=1}\mathrm{P}_{mut}(x,x') \leq 1/n^2$ since it is necessary to flip the $i$-th and the $(i+1)$-th bits of $x$ in mutation. Then, we get
\begin{align}\label{LO,(p,1/n):E-}
& \mathrm{E}^-\leq \left(\left(1+\frac{c}{n}\right)^i-1\right)\cdot \left(1\cdot \frac{1}{n^2}
 +\frac{1}{n^2}\cdot 1\right) \leq \frac{2}{n^2}\left(1+\frac{c}{n}\right)^i.
\end{align}

By subtracting $\mathrm{E}^-$ from $\mathrm{E}^+$, we have, noting that $c=13$,
\begin{align}\label{LO,(p,1/n):E}
\mathrm{E}(V(\xi_t)-V(\xi_{t+1}) \mid \xi_t=x)
&\ge \left(1+\frac{c}{n}\right)^i \cdot \left(\frac{c}{6n^2}-\frac{2}{n^2}\right) \geq \frac{1}{6n^2}.
\end{align}
Since $V(x) \!\le\! (1\!+\!\frac{13}{n})^n \!\le\! e^{13}=O(1)$, we have $\mathrm{E}(\tau\mid \xi_0)=O(n^{2})$ by Theorem~\ref{additive-drift}. Each iteration of the (1+1)-EA using sampling takes $2m=288n^6$ number of fitness evaluations, thus the expected running time is polynomial.\vspace{0.8em}
\end{myproof}

For bit-wise noise $(1,q)$, we prove in Theorems~\ref{theo-leadingones-sampling-upper2} and~\ref{theo-leadingones-sampling-lower1} that the expected running time is polynomial if and only if $q = O(\log n/n)$. The proof of Theorem~\ref{theo-leadingones-sampling-upper2} is similar to that of Theorem~\ref{theo-leadingones-sampling-upper1}, which considers bit-wise noise $(p,\frac{1}{n})$. The main difference is the probability of accepting the offspring solution $x'$ (i.e., $\mathrm{P}(\hat{f}(x') \!\geq\! \hat{f}(x))$), due to the change of noise. Lemma~\ref{lemma-leadingones-sampling-prob2} gives some bounds on this probability, which will be used in the proof of Theorem~\ref{theo-leadingones-sampling-upper2}.

\begin{lemma}\label{lemma-leadingones-sampling-prob2}
For the LeadingOnes problem under bit-wise noise $(1,q)$ with $q\leq c_0\log n/n$ (where $c_0$ is a positive constant), if using sampling with $m\!=\!36n^{2c_0+4}$, it holds that
\begin{enumerate}[{(1)}]
  \item for any $x$ with $\mathrm{LO}(x)=i<n$ and $y$ with $\mathrm{LO}(y) \leq i-5c_0\log n$ or $i-5c_0\log n<\mathrm{LO}(y)\leq i-1 \wedge y_{i+1}=0$, $\mathrm{P}(\hat{f}(x) \leq \hat{f}(y))\leq 1/n^2$.
  \item for any $y$ with $\mathrm{LO}(y) <n$, $\mathrm{P}(\hat{f}(1^n) \leq \hat{f}(y))\leq 1/n^2$.
\end{enumerate}
\end{lemma}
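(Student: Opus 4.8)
The plan is to mirror the proof of Lemma~\ref{lemma-leadingones-sampling-prob}: for each pair $(x,y)$ occurring in the two cases I would first establish a lower bound on the expected gap $\mu:=\mathrm{E}(f^n(x)-f^n(y))$ (which equals $\mathrm{E}(\hat f(x)-\hat f(y))$), and then, using that $|f^n(x)-f^n(y)|\le n$ gives a variance bound $\sigma^2:=\mathrm{Var}(f^n(x)-f^n(y))\le n^2$, invoke Chebyshev's inequality to get $\mathrm{P}(\hat f(x)\le\hat f(y))\le\sigma^2/(m\mu^2)$. With $m=36n^{2c_0+4}$ and $\sigma^2\le n^2$, a uniform lower bound $\mu\ge 1/(6n^{c_0})$ gives $\sigma^2/(m\mu^2)\le n^2/n^4=1/n^2$, exactly the claimed bound in both parts. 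So the whole task reduces to proving $\mu\ge 1/(6n^{c_0})$ in each case.

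The computational engine is $\mathrm{E}(f^n(z))=\sum_{j\ge1}\mathrm{P}(f^n(z)\ge j)$ together with the elementary facts that, under bit-wise noise $(1,q)$, a solution $z$ with $\mathrm{LO}(z)=m'$ satisfies $\mathrm{P}(f^n(z)\ge j)=(1-q)^j$ for $1\le j\le m'$ (the first $j$ one-bits must survive) and $\mathrm{P}(f^n(z)\ge j)\le(1-q)^{m'}q$ for $j>m'$ (the $0$-bit at position $m'+1$ must additionally flip). This yields the clean lower bound $\mathrm{E}(f^n(x))\ge\sum_{j=1}^{i}(1-q)^j$ and, writing $d=i-\mathrm{LO}(y)$, the decomposition $\mu\ge(1-q)^{\mathrm{LO}(y)}\big[\sum_{l=1}^{d}(1-q)^l-(\text{leakage of }y\text{ above level }i)\big]$. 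I would use throughout that $(1-q)^{\mathrm{LO}(y)}\ge(1-q)^n\ge n^{-c_0}(1-o(1))$, which holds because $nq^2=o(1)$ when $q\le c_0\log n/n$.

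For part~(1) I would split on the size of $d$. In the large-gap regime $d\ge 5c_0\log n$, the crude leakage bound $\sum_{j>i}\mathrm{P}(f^n(y)\ge j)\le n(1-q)^{\mathrm{LO}(y)}q$ already suffices, since $\sum_{l=1}^{d}(1-q)^l\ge 2.5c_0\log n$ (the first $5c_0\log n$ summands each exceed $1/2$) comfortably beats $nq\le c_0\log n$. In the small-gap regime $d<5c_0\log n$ with the extra hypothesis $y_{i+1}=0$, the crude leakage $\approx c_0\log n$ would swamp the positive term $\approx d$; the fix is that reaching level $i+1$ now forces \emph{both} the defining $0$-bit of $y$ at position $\mathrm{LO}(y)+1$ and the $0$-bit at position $i+1$ to flip, so $\mathrm{P}(f^n(y)\ge i+1)\le(1-q)^{\mathrm{LO}(y)}q^2$ and the leakage above level $i$ shrinks to $n(1-q)^{\mathrm{LO}(y)}q^2=o(1)\,(1-q)^{\mathrm{LO}(y)}$. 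The bracket is then at least $\sum_{l=1}^{d}(1-q)^l-o(1)\ge(1-q)^d-o(1)\ge 1/2-o(1)$, giving $\mu\ge\tfrac12 n^{-c_0}(1-o(1))\ge 1/(6n^{c_0})$. Part~(2) is analogous but simpler: with $x=1^n$ (so $i=n$) there is no level above $n$, hence no leakage of $y$ to control, and the same decomposition gives $\mu\ge(1-q)^{\mathrm{LO}(y)}[\sum_{l=1}^{n-\mathrm{LO}(y)}(1-q)^l-(n-\mathrm{LO}(y))q]$, whose bracket is $\ge 1/2-o(1)$ for small $n-\mathrm{LO}(y)$ and $\ge 1.5c_0\log n$ for large $n-\mathrm{LO}(y)$.

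I expect the main obstacle to be the book-keeping of the \emph{leakage} of $y$'s noisy fitness above the level $i$ of $x$. Securing the extra factor of $q$ in the small-gap case — recognising that both the defining $0$-bit of $y$ and the bit at position $i+1$ must flip — is precisely what rescues that regime, and it is also what pins down the threshold $5c_0\log n$ separating the two cases and the sample size $m=36n^{2c_0+4}$. The remaining steps, namely verifying $(1-q)^n\ge n^{-c_0}(1-o(1))$ and choosing the numerical constants so that $(1-q)^n$ times the bracket clears $1/(6n^{c_0})$, are routine estimation that I would not belabour.
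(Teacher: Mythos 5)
Your proposal is correct, and it reaches the paper's conclusion by a genuinely different computational route. The outer shell is the same as the paper's (and as Lemma~\ref{onemax-upper-midtheo}/Lemma~\ref{lemma-leadingones-sampling-prob}): lower-bound $\mu=\mathrm{E}(f^n(x)-f^n(y))$, use $\sigma^2\le n^2$, and apply Chebyshev with $m=36n^{2c_0+4}$; and your two structural ingredients are exactly the ones the paper relies on, namely the case split at gap $5c_0\log n$ and the observation that $y_{i+1}=0$ forces a second bit-flip (an extra factor $q$) for $y$'s noisy fitness to exceed level $i$. The difference is in how $\mu$ is bounded. The paper computes near-closed-form expressions for $\mathrm{E}(f^n(\cdot))$ by conditioning on the position of the first leading 1-bit flipped by noise, and then in the small-gap case of~(1) and in case~(2) it telescopes through a chain of intermediate solutions $z^k$ (each with $z_{i+1}=0$, respectively through $1^{n-1}0$), proving a per-step lower bound via a tightened upper bound on $\mathrm{E}(f^n(z^{k-1}))$. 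You instead use the tail-sum representation $\mathrm{E}(f^n(z))=\sum_{j\ge1}\mathrm{P}(f^n(z)\ge j)$ with per-level survival bounds, which lets you compare $x$ and $y$ level by level, dispenses with telescoping entirely, and makes case~(2) immediate (no leakage above level $n$, no detour through $1^{n-1}0$). Your route is more modular and arguably cleaner; the paper's route yields explicit expectation formulas that it reuses across subcases. One piece of bookkeeping to fix when writing this up: your bracket $\sum_{l=1}^{d}(1-q)^l-(\text{leakage of }y\text{ above level }i)$ is not literally a valid lower bound for $\mu/(1-q)^{\mathrm{LO}(y)}$, because $y$ also leaks probability mass at the levels $\mathrm{LO}(y)+1,\dots,i$; this contributes an additional $-dq(1-q)^{\mathrm{LO}(y)}$ (crude bound $(1-q)^{\mathrm{LO}(y)}q$ per level), which in the large-gap regime is already absorbed by your $nq\le c_0\log n$ estimate and in the small-gap regime is $o(1)(1-q)^{\mathrm{LO}(y)}$ since $d<5c_0\log n$ and $q\le c_0\log n/n$ — so all your constants still clear, but the term must appear.
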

\begin{myproof}
The proof is finished by deriving a lower bound on the expectation of $f^n(x)-f^n(y)$ and then applying Chebyshev's inequality. We first consider case~(1). For any $x$ with $\mathrm{LO}(x)=i<n$,
\begin{equation}
\begin{aligned}\label{LO,(1,q):E-uppercase1}
&\mathrm{E}(f^n(x))
\geq \sum_{j=1}^i (1-q)^{j-1}q\cdot (j-1) +(1-q)^{i}q\cdot (i+1)+(1-q)^{i+1}\cdot i,\\
&\mathrm{E}(f^n(x))\leq \sum_{j=1}^i (1-q)^{j-1}q\cdot (j-1)+(1-q)^{i}q\cdot n+(1-q)^{i+1}\cdot i.
\end{aligned}
\end{equation}
By applying these two inequalities, we get, for any $k<i<n$,
\begin{align} \label{LO,(1,q):Diff_E}
&\mathrm{E}(f^n(x)-f^n(y) \mid \mathrm{LO}(x)=i \wedge \mathrm{LO}(y)=k)\\
& \geq\frac{1}{q}((i-1)(1-q)^{i+1}-i(1-q)^i) +q(1-q)^i\cdot (i+1)+(1-q)^{i+1}\cdot i\\
& \quad -\frac{1}{q}((k-1)(1-q)^{k+1}-k(1-q)^k)-q(1-q)^k\cdot n -(1-q)^{k+1}\cdot k \nonumber\\
& =(1-q)^i\left(q+1-\frac{1}{q}\right)-(1-q)^k\left(1-\frac{1}{q}+qn-qk\right) \nonumber\\
& =(1-q)^k\left(\left(\frac{1}{q}-q-1\right)(1-(1-q)^{i-k})+q+qk-qn\right).
\end{align}
Thus, for any $x$ with $\mathrm{LO}(x)=i$ and $y$ with $\mathrm{LO}(y)=k\leq i-5c_0\log n$ (where $5c_0\log n\leq i<n$),
\begin{align}\label{mid-eq6}
\mathrm{E}(f^n(x)-f^n(y))
& \geq (1-q)^k\left(\left(\frac{1}{q}-2\right)(1-(1-q)^{5c_0\log n})-qn\right) \nonumber\\
& \geq (1-q)^k\left(\left(\frac{1}{q}-2\right)\left(1-\frac{1}{e^{5c_0q\log n}}\right)-qn\right) \nonumber\\
& \geq (1-q)^k\left(\left(\frac{1}{q}-2\right)\left(1-\frac{1}{1+5c_0q\log n}\right)-qn\right) \nonumber\\
& \geq (1-q)^k\left(\left(\frac{1}{q}-2\right)\frac{5c_0q\log n}{2}-qn\right) \nonumber\\
& \geq (1-q)^k\left(\frac{5}{2}c_0\log n-5\frac{(c_0\log n)^2}{n}-c_0\log n\right) \nonumber\\
& =(1-q)^k\left(\frac{3}{2}c_0\log n-o(1)\right) \geq \frac{c_0\log n}{en^{c_0}},
\end{align}
where the third inequality is by $e^x\geq 1+x$, the fourth is by $5c_0q \log n \leq 5(c_0\log n)^2/n \leq 1$ for sufficiently large $n$, the fifth is by $q \leq c_0\log n /n$, and the last is by
\begin{equation}\label{mid-eq4}
(1-q)^k\geq \left(1-\frac{c_0\log n}{n}\right)^{\!n-1}\!\geq \left(\frac{1}{e}\right)^{\!\frac{n-1}{n/(c_0\log n)-1}}\!\geq \left(\frac{1}{e}\right)^{\!c_0\log n+1} \!= \frac{1}{en^{c_0}}.
\end{equation}

When $i-5c_0\log n<\mathrm{LO}(y)\leq i-1 \wedge y_{i+1}=0$ (where $i\geq 1$), we calculate $\mathrm{E}(f^n(x)-f^n(y))$ by
$$
\mathrm{E}(f^n(x)-f^n(y))=\sum^{i}_{k=\mathrm{LO}(y)+1}\mathrm{E}(f^n(z^k)-f^n(z^{k-1})),
$$
where $z^i=x$, $z^k \; (\mathrm{LO}(y)<k<i)$ denotes one solution $z$ with $\mathrm{LO}(z)=k \wedge z_{i+1}=0$, and $z^{\mathrm{LO}(y)}=y$. We then give a lower bound on $\mathrm{E}(f^n(z^k)-f^n(z^{k-1}))$, where $\mathrm{LO}(y)+1\leq k\leq i$. For $\mathrm{E}(f^n(z^k))$, we directly use the lower bound in Eq.~(\refeq{LO,(1,q):E-uppercase1}) to get that
$$
\mathrm{E}(f^n(z^{k}))
\geq \sum_{j=1}^k (1-q)^{j-1}q\cdot (j-1) +(1-q)^{k}q\cdot (k+1)+(1-q)^{k+1}\cdot k.
$$
For $\mathrm{E}(f^n(z^{k-1}))$, instead of directly using the upper bound in Eq.~(\refeq{LO,(1,q):E-uppercase1}), we derive a tighter one:
\begin{equation}\label{LO,(1,q):E_upper}
\mathrm{E}(f^n(z^{k-1}))\leq \sum_{j=1}^{k-1} (1-q)^{j-1}q\cdot (j-1)+(1-q)^{k-1}q(q\cdot n+(1-q)\cdot i)+(1-q)^{k}\cdot (k-1).
\end{equation}
Note that the inequality is because when the $k-1$ leading 1-bits of $z^{k-1}$ keep unchanged and its $k$-th bit (which must be 0) is flipped, if the $(i+1)$-th bit (which is 0) is flipped, $f^n(z^{k-1})\leq n$; otherwise, $f^n(z^{k-1})\leq i$. By combining the above two inequalities, we get
\begin{align}
\mathrm{E}(f^n(z^{k})-f^n(z^{k-1}))
& \geq  (1-q)^{k-1}(1+q(k-i-1)-q^2(1+n-i)) \\
& =(1-q)^{k-1}(1-o(1)) \geq 1/(2en^{c_0}),
\end{align}
where the equality by $k \geq \mathrm{LO}(y)+1 >i-5c_0\log n+1$ and $q =O(\log n/n)$, and the last is by Eq.~(\refeq{mid-eq4}). Thus, we have
\begin{align}\label{mid-eq5}
\mathrm{E}(f^n(x)-f^n(y)) \geq (i-\mathrm{LO}(y))\cdot \frac{1}{2en^{c_0}} \geq \frac{1}{2en^{c_0}}.
\end{align}

For case~(1), by combining Eqs.~(\refeq{mid-eq6}) and~(\refeq{mid-eq5}), we get a unified lower bound $1/(2en^{c_0})$ on $\mathrm{E}(f^n(x)-f^n(y))$. As the analysis for $\mathrm{P}(\hat{f}(x) \leq \hat{f}(y))$ (i.e., Eq.~(\ref{LO,(p,1/n):Chebyshev})) in the proof of Lemma~\ref{lemma-leadingones-sampling-prob}, we can similarly use Chebyshev's inequality to derive that, noting that $m=36n^{2c_0+4}$ here,
\begin{align}\label{mid-eq7}
\mathrm{P}(\hat{f}(x) \leq \hat{f}(y))\leq \sigma^2/(m\mu^2) \leq 1/n^2,
\end{align}
where $\mu$ and $\sigma^2$ are the expectation and variance of $f^n(x)-f^n(y)$, respectively. Thus, case~(1) holds.

Then, we consider case~(2), that is, we are to analyze $\mathrm{P}(\hat{f}(1^n)\leq \hat{f}(y))$ with $\mathrm{LO}(y)<n$. We calculate $\mathrm{E}(f^n(1^n)-f^n(y))$ as follows:
$$
\mathrm{E}(f^n(1^n)-f^n(y))=\mathrm{E}(f^n(1^n)-f^n(1^{n-1}0))+\mathrm{E}(f^n(1^{n-1}0)-f^n(y)).
$$
It is easy to derive that
\begin{align}
&\mathrm{E}(f^n(1^n))=\sum_{j=1}^{n}(1-q)^{j-1}q\cdot(j-1)+(1-q)^n\cdot n,\\
&\mathrm{E}(f^n(1^{n-1}0))=\!\sum_{j=1}^{n-1}(1\!-\!q)^{j-1}q\cdot (j\!-\!1)+(1\!-\!q)^{n-1}q\cdot n+(1\!-\!q)^{n}\cdot (n\!-\!1).
\end{align}
Then, we have
\begin{align}
\mathrm{E}(f^n(1^n)-f^n(1^{n-1}0))=(1-q)^{n-1}(1-2q) \geq 1/(2en^{c_0}),
\end{align}
where the inequality is by Eq.~(\refeq{mid-eq4}) and $q=O(\log n/n)$. If $\mathrm{LO}(y)\leq n-1-5c_0\log n$, by Eq.~(\ref{mid-eq6}), we directly have $\mathrm{E}(f^n(1^{n-1}0)-f^n(y))\geq \frac{c_0\log n}{en^{c_0}} \geq 0$. If $\mathrm{LO}(y)\geq n-5c_0\log n$, $\mathrm{E}(f^n(1^{n-1}0)-f^n(y))$ is calculated as follows:
$$
\mathrm{E}(f^n(1^{n-1}0)-f^n(y)) =\sum^{n-1}_{k=\mathrm{LO}(y)+1} \mathrm{E}(f^n(z^k)-f^n(z^{k-1})),
$$
where $z^{n-1}=1^{n-1}0$, $z^k \; (\mathrm{LO}(y)<k<n-1)$ denotes one solution $z$ with $\mathrm{LO}(z)=k$, and $z^{\mathrm{LO(y)}}=y$. By Eq.~(\refeq{LO,(1,q):Diff_E}), we have, for any $\mathrm{LO}(y)+1 \leq k<n$,
\begin{align}
&\mathrm{E}(f^n(z^{k})-f^n(z^{k-1}))\geq (1-q)^{k-1}(1-q^2+q(k-1-n)) \\
& \geq (1-q)^{k-1}(1-q^2-5c_0q\log n)=(1-q)^{k-1}(1-o(1)) \geq 0,
\end{align}
which implies that $\mathrm{E}(f^n(1^{n-1}0)-f^n(y))\geq 0$. Then, we get
$$
\mathrm{E}(f^n(1^n)-f^n(y))=\mathrm{E}(f^n(1^n)-f^n(1^{n-1}0))+\mathrm{E}(f^n(1^{n-1}0)-f^n(y))
\ge \frac{1}{2en^{c_0}}.
$$
As the analysis in case~(1) (i.e., Eq.~(\refeq{mid-eq7})), we can get that $$\mathrm{P}(\hat{f}(1^n) \leq \hat{f}(y)) \leq 1/n^2.$$
Thus, case~(2) holds.\vspace{0.8em}
\end{myproof}

The following theorem shows that by using sampling, the (1+1)-EA can solve LeadingOnes under bit-wise noise $(1,q)$ in polynomial time when $q$ is in the range of $O(\log n/n)$.

\begin{theorem}\label{theo-leadingones-sampling-upper2}
 For the (1+1)-EA on LeadingOnes under bit-wise noise $(1,q)$ with $q=O(\log n/n)$, if using sampling, there exists some $m= O(poly(n))$ such that the expected running time is polynomial.
\end{theorem}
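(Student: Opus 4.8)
The plan is to mirror the proof of Theorem~\ref{theo-leadingones-sampling-upper1} and apply the additive drift theorem (Lemma~\ref{additive-drift}), now feeding in the acceptance-probability bounds of Lemma~\ref{lemma-leadingones-sampling-prob2} instead of those of Lemma~\ref{lemma-leadingones-sampling-prob}. Since $q=O(\log n/n)$, I would first fix a positive constant $c_0$ with $q\le c_0\log n/n$ and run the (1+1)-EA with the sample size $m=36n^{2c_0+4}$ demanded by Lemma~\ref{lemma-leadingones-sampling-prob2}; this is polynomial, so it suffices to bound the expected number of iterations. I would use the distance function $V(x)=(1+\tfrac{c}{n})^n-(1+\tfrac{c}{n})^{\mathrm{LO}(x)}$, which vanishes exactly on $1^n$ and is positive elsewhere. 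Crucially, in contrast to the constant $c=13$ that worked for bit-wise noise $(p,\tfrac1n)$, here I expect to need $c=\Theta(\log n)$ (a large enough constant multiple of $c_0\log n$), for the reason made clear in the negative-drift analysis below. As before, at a state $x$ with $\mathrm{LO}(x)=i<n$ I would split the one-step drift as $\mathrm{E}^+-\mathrm{E}^-$.

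For the positive drift I would keep only the event that mutation flips solely the $(i+1)$-th bit, which has probability $\tfrac1n(1-\tfrac1n)^{n-1}$ and produces an $x'$ with $\mathrm{LO}(x')>i$ agreeing with $x$ off position $i+1$. Either $x'=1^n$, whence case~(2) of Lemma~\ref{lemma-leadingones-sampling-prob2} gives $\mathrm{P}(\hat f(x')\ge\hat f(x))\ge 1-1/n^2$, or $\mathrm{LO}(x')=k'<n$; in the latter case $x_{k'+1}=x'_{k'+1}=0$, and $\mathrm{LO}(x)=i$ satisfies either $i\le k'-5c_0\log n$ or $k'-5c_0\log n<i\le k'-1$ with $x_{k'+1}=0$, so case~(1) applies and again yields acceptance probability $\ge 1-1/n^2\ge 1/2$. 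Together with $V(x)-V(x')\ge\tfrac{c}{n}(1+\tfrac{c}{n})^i$ this gives $\mathrm{E}^+\ge\tfrac{c}{6n^2}(1+\tfrac{c}{n})^i$, just as in the proof of Theorem~\ref{theo-leadingones-sampling-upper1}.

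The \emph{main obstacle} lies in the negative drift. Bounding $V(x')-V(x)\le(1+\tfrac{c}{n})^i-1$, I would split the offspring with $\mathrm{LO}(x')<i$ into two groups. For those with $\mathrm{LO}(x')\le i-5c_0\log n$, or with $i-5c_0\log n<\mathrm{LO}(x')\le i-1$ and $x'_{i+1}=0$, case~(1) of Lemma~\ref{lemma-leadingones-sampling-prob2} bounds the acceptance probability by $1/n^2$. The remaining, genuinely problematic group is $i-5c_0\log n<\mathrm{LO}(x')\le i-1$ with $x'_{i+1}=1$, where the lemma is silent; here I would instead bound the total \emph{mutation} probability. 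Producing such an $x'$ forces flipping the $(i+1)$-th bit of $x$ (to set $x'_{i+1}=1$) together with flipping at least one of the at most $5c_0\log n$ leading positions in $\{i-5c_0\log n+1,\dots,i\}$ (to pull $\mathrm{LO}(x')$ into the window), an event of probability at most $\tfrac1n\cdot\tfrac{5c_0\log n}{n}$. Summing both groups gives $\mathrm{E}^-\le(1+\tfrac{c}{n})^i\cdot\tfrac{O(\log n)}{n^2}$.

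This $\log n$ factor in $\mathrm{E}^-$ is precisely what forces $c$ to scale like $\log n$: taking $c$ a sufficiently large constant multiple of $c_0\log n$ makes $\tfrac{c}{6}-O(\log n)>0$, so that $\mathrm{E}^+-\mathrm{E}^-\ge\tfrac{1}{6n^2}$, say. Since $V(x)\le(1+\tfrac{c}{n})^n\le e^{c}=n^{O(1)}$ remains polynomial, Lemma~\ref{additive-drift} yields $\mathrm{E}(\tau\mid\xi_0)=O(n^2 e^{c})=n^{O(1)}$, and multiplying by the per-iteration cost $2m=O(poly(n))$ shows the expected running time is polynomial. The delicate step throughout is the problematic negative-drift group: it is where the analysis genuinely diverges from the $(p,\tfrac1n)$ case, and correctly pairing the $\Theta(\log n)$-wide window with the choice $c=\Theta(\log n)$ to keep the drift positive is the crux of the argument.
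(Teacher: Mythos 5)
Your proposal is correct and follows essentially the same route as the paper's own proof: the same sample size $m=36n^{2c_0+4}$ from Lemma~\ref{lemma-leadingones-sampling-prob2}, the same distance function with $c=\Theta(\log n)$ (the paper takes $c=30c_0\log n+7$), the same positive-drift argument via flipping only the $(i+1)$-th bit, and the same handling of the problematic negative-drift group $i-5c_0\log n<\mathrm{LO}(x')\le i-1 \wedge x'_{i+1}=1$ by bounding its mutation probability by $O(\log n)/n^2$. You also correctly identify the crux that distinguishes this case from bit-wise noise $(p,\frac{1}{n})$, namely that the $\Theta(\log n)$-wide window forces $c$ to scale as $\Theta(\log n)$ while keeping $e^c$ polynomial.
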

\begin{myproof}
Since $q=O(\log n/n)$, there exists a positive constant $c_0$ such that for all $n$ large enough, $q\leq c_0\log n/n$. We prove that if using sampling with $m=36n^{2c_0+4}$, the expected running time is polynomial.

The proof is similar to that of Theorem~\ref{theo-leadingones-sampling-upper1}. The distance function $V(x)$ is defined as, for any $x$ with $\mathrm{LO}(x)=i$, $V(x)=\left(1+\frac{c}{n}\right)^n-\left(1+\frac{c}{n}\right)^i$, where $c=30c_0\log n+7$. Assume that currently $\mathrm{LO}(x)=i$, where $0\leq i \leq n-1$. For the positive drift $\mathrm{E}^+$, we consider that only the $(i+1)$-th bit (i.e., the first 0-bit) of $x$ is flipped in mutation. If $\mathrm{LO}(x')=n$, $\mathrm{P}(\hat{f}(x') \leq \hat{f}(x)) \leq 1/n^2$ by case~(2) of Lemma~\ref{lemma-leadingones-sampling-prob2}. If $\mathrm{LO}(x')<n$, it must hold that $\mathrm{LO}(x)\leq \mathrm{LO}(x')-1 \wedge x_{\mathrm{LO}(x')+1}=x'_{\mathrm{LO}(x')+1}=0$, since $x$ and $x'$ are the same except the $(\mathrm{LO}(x)+1)$-th bit. By case~(1) of Lemma~\ref{lemma-leadingones-sampling-prob2}, $\mathrm{P}(\hat{f}(x') \leq \hat{f}(x)) \leq 1/n^2$. Thus, the probability of accepting the offspring solution $x'$ is $\mathrm{P}(\hat{f}(x') \geq \hat{f}(x)) \geq 1-1/n^2 \geq 1/2$. The positive drift then can be lower bounded by
$$
\mathrm{E}^+ \geq  \left(1-\frac{1}{n}\right)^{n-1}\frac{1}{n}\cdot \frac{1}{2} \cdot \frac{c}{n}\left(1+\frac{c}{n}\right)^i\geq \frac{c}{6n^2}\left(1+\frac{c}{n}\right)^i.
$$

For the negative drift $\mathrm{E}^{-}$, we need to consider $\mathrm{LO}(x')<\mathrm{LO}(x)=i$. We further divide $\mathrm{LO}(x')<i$ into two cases. If $\mathrm{LO}(x')\leq i-5c_0\log n$ or $i-5c_0\log n<\mathrm{LO}(x')\leq i-1 \wedge x'_{i+1}=0$, then $\mathrm{P}(\hat{f}(x') \geq \hat{f}(x)) \leq 1/n^2$ by case~(1) of Lemma~\ref{lemma-leadingones-sampling-prob2}. If $i-5c_0\log n<\mathrm{LO}(x')\leq i-1 \wedge x'_{i+1}=1$, we consider the probability of generating $x'$ by mutation on $x$. Since it is necessary to flip the $(i+1)$-th bit of $x$ and at least one 1-bit in positions $i-5c_0\log n+2$ to $i$ simultaneously, $$\sum_{x': i-5c_0\log n<\mathrm{LO}(x')\leq i-1 \wedge x'_{i+1}=1}\mathrm{P}_{mut}(x,x') \leq \frac{5c_0\log n}{n^2}.$$
Then, we get
\begin{align}\label{LO,(p,1/n):E-}
& \mathrm{E}^-\leq \left(\left(1+\frac{c}{n}\right)^i-1\right)\cdot \left(1\cdot \frac{1}{n^2}
 +\frac{5c_0\log n}{n^2}\cdot 1\right) \leq \frac{5c_0\log n+1}{n^2}\left(1+\frac{c}{n}\right)^i.
\end{align}

By subtracting $\mathrm{E}^-$ from $\mathrm{E}^+$, we have, noting that $c=30c_0\log n+7$,
\begin{align}\label{LO,(p,1/n):E}
\mathrm{E}(V(\xi_t)-V(\xi_{t+1}) \mid \xi_t=x)
&\geq \left(1+\frac{c}{n}\right)^i \cdot \left(\frac{c}{6n^2}-\frac{5c_0\log n+1}{n^2}\right) \geq \frac{1}{6n^2}.
\end{align}
Note that $V(x) \le (1+\frac{c}{n})^n \le e^c = e^{30c_0\log n+7}=
n^{O(1)}$. By Theorem~\ref{additive-drift}, we have $\mathrm{E}(\tau\mid \xi_0)\le 6n^2\cdot n^{O(1)}$. Since each iteration of the (1+1)-EA using sampling takes $2m=72n^{2c_0+4}$ number of fitness evaluations, the expected running time is polynomial.\vspace{0.8em}
\end{myproof}

For bit-wise noise $(1,q)$ with $q=\omega (\log n/n)$, we apply the negative drift theorem (i.e., Theorem~\ref{simplified-drift}) to prove that using sampling still cannot guarantee a polynomial running time.

\begin{theorem} \label{theo-leadingones-sampling-lower1}
For the (1+1)-EA on LeadingOnes under bit-wise noise $(1,q)$ with $q=\omega (\log n/n)$, if using sampling with any $m= O(poly(n))$, the expected running time is exponential.
\end{theorem}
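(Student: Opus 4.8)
The plan is to apply the simplified drift theorem (Lemma~\ref{simplified-drift}), following the template of Theorem~\ref{leadingones-noise-1-exp}. I take $X_t=|x|_0$, the number of $0$-bits of the current solution after $t$ iterations, and consider the interval $[a,b]=[0,\sqrt{n}]$. As in the proof of Theorem~\ref{leadingones-noise-2-superpoly}, I split the one-step change as $\mathrm{E}(X_t-X_{t+1}\mid X_t=i)=\mathrm{E}^+-\mathrm{E}^-$. The positive part is handled exactly as there: optimistically accepting every offspring with fewer $0$-bits gives $\mathrm{E}^+\le i/n\le 1/\sqrt{n}=o(1)$, independently of the noise. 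Thus the entire task reduces to showing $\mathrm{E}^-=\Omega(1)$, which yields a negative constant drift.

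\textbf{Late one-bit flips.} Since $m=O(poly(n))$, fix a constant $c$ with $m\le n^c$ for large $n$, and set the threshold $\tau=(c+4)\log n/q+\sqrt{n}$. Because $q=\omega(\log n/n)$ we have $(c+4)\log n/q=o(n)$, so $\tau=o(n)$. I will lower bound $\mathrm{E}^-$ using only the moves that flip \emph{exactly one} $1$-bit located at a position $t\ge \tau$; each such move raises the number of $0$-bits by exactly $1$ and has mutation probability $\tfrac1n(1-\tfrac1n)^{n-1}\ge \tfrac1{en}$. Since $x$ has $n-i\ge n-\sqrt{n}$ one-bits in total and at most $\tau$ of them sit before position $\tau$, the number of such ``late'' one-bits is at least $(n-i)-\tau\ge n-\sqrt{n}-\tau=\Omega(n)$.

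\textbf{Acceptance probability via coupling.} The crux is to show each late move is accepted with probability at least $1/2$, i.e. $\mathrm{P}(\hat{f}(x')\ge \hat{f}(x))\ge 1/2$, uniformly in $q$. I couple the $l$-th evaluation of $x$ with the $l$-th evaluation of $x'$ by drawing one noise vector and applying it to both. As $x$ and $x'$ differ only at the flipped position $t\ge \tau$, the two noisy leading-ones values coincide \emph{unless} the common noisy prefix of length $t-1$ is all-ones. Writing $a'$ and $b'$ for the numbers of $1$-bits and $0$-bits of $x$ among positions $1,\dots,t-1$, that probability is $(1-q)^{a'}q^{b'}\le (1-q)^{a'}$; since $b'\le i\le \sqrt{n}$ we get $a'=(t-1)-b'\ge \tau-1-\sqrt{n}=(c+4)\log n/q-1$, hence $(1-q)^{a'}\le e^{-qa'}\le e\cdot n^{-(c+4)}$. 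A union bound over the $m$ coupled pairs gives $\mathrm{P}(\hat{f}(x)\ne \hat{f}(x'))\le m\cdot e n^{-(c+4)}\le e n^{-4}=o(1)$, and on the complementary event $\hat{f}(x')=\hat{f}(x)$, so the move is accepted. This coupling is precisely what lets me avoid an anti-concentration estimate and makes the bound uniform in $q$, including $q$ close to $1$, where the prefix-probability bound only improves. Combining, $\mathrm{E}^-\ge \Omega(n)\cdot\tfrac1{en}\cdot\tfrac12=\Omega(1)$, so condition~(1) of Lemma~\ref{simplified-drift} holds with a constant $\epsilon$ for all $1\le i<\sqrt{n}$. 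Condition~(2) is the same bounded-jump estimate as in Theorem~\ref{leadingones-noise-1-exp}: flipping at least $j$ bits has probability $\le \binom{n}{j}n^{-j}\le 2/2^j$, so $\delta=1$, $r(l)=2$. With $l=\sqrt{n}$, Lemma~\ref{simplified-drift} yields a first hitting time of $2^{\Omega(\sqrt{n})}$ with overwhelming probability; as a uniform initial solution satisfies $X_0\ge \sqrt{n}$ with probability $1-2^{-\Omega(n)}$ and each iteration costs $2m=poly(n)$ evaluations, the expected running time is exponential.

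\textbf{Main obstacle.} The delicate step is the bound $\mathrm{P}(\hat{f}(x')\ge \hat{f}(x))\ge 1/2$. Naively the expected gap $\mathrm{E}(f^n(x)-f^n(x'))$ can be $\Theta(1)$ when the flipped bit is near the front, so sampling would reject there; the argument must therefore both discard front flips (through the threshold $\tau$) and replace the usual Chebyshev/anti-concentration reasoning by the coupling, which forces \emph{equality} of $\hat{f}(x)$ and $\hat{f}(x')$ with probability $1-o(1)$. Verifying $\tau=o(n)$, so that $\Omega(n)$ late one-bits survive, is exactly where the hypothesis $q=\omega(\log n/n)$ is used, and tying $\tau$ to the polynomial degree $c$ of $m$ is what defeats an arbitrary polynomial sample size.
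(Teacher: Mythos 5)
Your proof has the same skeleton as the paper's: Lemma~\ref{simplified-drift} applied to $X_t=|x|_0$, the bound $\mathrm{E}^+\le i/n$ obtained by optimistically accepting every offspring with fewer 0-bits, and a lower bound $\mathrm{E}^-=\Omega(1)$ coming from single flips of 1-bits far from the front, which are accepted with constant probability because strong noise ($q=\omega(\log n/n)$) destroys the common prefix and makes parent and offspring statistically indistinguishable. (The paper fixes the cutoff at position $n/3$, works on the interval $[0,n/50]$ and gets $2^{\Omega(n)}$; your $q$-dependent cutoff $\tau$ and interval $[0,\sqrt n]$ would give $2^{\Omega(\sqrt n)}$, which by the paper's own conventions still counts as exponential.)

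The genuine gap is in the crux step, the claim $\mathrm{P}(\hat{f}(x')\ge\hat{f}(x))\ge 1/2$. You evaluate this probability in a coupled space in which the $l$-th noisy evaluation of $x$ and the $l$-th noisy evaluation of $x'$ use the \emph{same} noise vector. But in the algorithm the $2m$ evaluations are mutually independent, and $\mathrm{P}(\hat{f}(x')\ge\hat{f}(x))$ is a functional of the joint law, not of the marginals alone, so a bound proved under a different coupling does not transfer. Concretely, your assertion that the coupling ``forces equality of $\hat{f}(x)$ and $\hat{f}(x')$ with probability $1-o(1)$'' is false for the actual algorithm: for, say, $q=\log^2 n/n$ the single-evaluation fitness $f^n(x)$ has non-trivial spread, and two \emph{independent} $m$-sample averages coincide with probability $o(1)$, not $1-o(1)$. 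The repair is standard and keeps your threshold construction intact: the shared-noise coupling shows $d_{\mathrm{TV}}(\mathcal{L}(f^n(x)),\mathcal{L}(f^n(x')))\le e n^{-(c+4)}$; sub-additivity of total variation over the $m$ independent evaluations gives $d_{\mathrm{TV}}(\mathcal{L}(\hat{f}(x)),\mathcal{L}(\hat{f}(x')))\le e n^{-4}$; and then, replacing $\hat{f}(x')$ by an independent copy of $\hat{f}(x)$ at a cost of $e n^{-4}$ and using that $\mathrm{P}(A'\ge A)\ge 1/2$ for i.i.d.\ $A,A'$, you obtain $\mathrm{P}(\hat{f}(x')\ge\hat{f}(x))\ge 1/2-e n^{-4}$, which suffices for $\mathrm{E}^-=\Omega(1)$. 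This detour through marginal laws plus symmetry is exactly what the paper's proof does natively inside the algorithm's own probability space: it conditions on the event that every one of the $2m$ independent evaluations flips some 1-bit among the first $n/3$ positions (an event of probability at least $(1-(1-q)^{n/4})^{2m}\ge 1/e$ for $q=\omega(\log n/n)$ and polynomial $m$), under which $f^n(x)$ and $f^n(x^j)$ are independent and identically distributed, giving acceptance probability at least $1/(2e)$. With this correction your argument goes through.
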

\begin{myproof}
We use Theorem~\ref{simplified-drift} to prove it. Let $X_t=|x|_0$ be the number of 0-bits of the solution $x$ after $t$ iterations of the algorithm. We consider the interval $[0,n/50]$, that is, the parameters $a=0$ and $b=n/50$ in Theorem~\ref{simplified-drift}. Then, we analyze the drift $\mathrm{E}(X_t-X_{t+1}\mid X_t=i)$ for $1\leq i<n/50$. As in the proof of Theorem~\ref{leadingones-noise-2-superpoly}, we divide the drift into two parts: positive $\mathrm{E}^+$ and negative $\mathrm{E}^-$. That is,
$$
\mathrm{E}(X_t-X_{t+1}\mid X_t=i)=\mathrm{E}^+ - \mathrm{E}^-.
$$
For the positive drift, we can use the same analysis as that (i,e., Eq.~(\refeq{eq-positive-drift-1})) in the proof of Theorem~\ref{leadingones-noise-2-superpoly} to derive that $\mathrm{E}^+\leq i/n<1/50$. This is because the offspring solution $x'$ is optimistically assumed to be always accepted in the analysis of Eq.~(\refeq{eq-positive-drift-1}), and thus the change of noise and the use of sampling will not affect the analysis.

For the negative drift, we need to consider that the number of 0-bits is increased. To derive a lower bound on $\mathrm{E}^-$, we only consider the $n-i$ cases where only one 1-bit of $x$ is flipped, which happens with probability $\frac{1}{n}(1-\frac{1}{n})^{n-1}$. Let $x^j$ denote the solution that is generated by flipping only the $j$-th bit of $x$. Then, we have
\begin{align}
\mathrm{E}^-\geq \sum_{j: x_j=1}\frac{1}{n}\left(1-\frac{1}{n}\right)^{n-1}\cdot \mathrm{P}(\hat{f}(x^j)\geq \hat{f}(x)) \cdot (i+1-i).
\end{align}
We then investigate $\mathrm{P}(\hat{f}(x^j)\ge \hat{f}(x))$. Let $\mathrm{F}(y)$ denote the event that when evaluating the noisy fitness of a solution $y$, at least one 1-bit in its first $n/3$ positions is flipped by noise. Note that there must exist 1-bits in the first $n/3$ positions of $x$, since $|x|_0=i<n/50$. For any $x^j$ with $j>n/3$, its first $n/3$ bits are the same as that of $x$. If both the events $\mathrm{F}(x)$ and $\mathrm{F}(x^j)$ happen, $f^n(x)<n/3 \wedge f^n(x^j)<n/3$, and the last $(2n)/3$ bits of $x$ and $x^j$ will not affect their noisy fitness. Thus, for any $x^j$ with $j>n/3$, $f^n(x^j)$ has the same distribution as $f^n(x)$ conditioned on $\mathrm{F}(x)\cap \mathrm{F}(x^j)$. When estimating $\hat{f}(y)$ of a solution $y$ by sampling, let $\mathrm{F}_t(y)$ denote the event $\mathrm{F}(y)$ in the $t$-th independent noisy evaluation of $y$. Thus, for all $j>n/3$, $\sum_{t=1}^mf^n_t(x)$ and $\sum_{t=1}^mf^n_t(x^j)$ have the same distribution conditioned on $(\cap_{t=1}^m\mathrm{F}_t(x))\cap(\cap_{t=1}^m\mathrm{F}_t(x^j))$. Since $\hat{f}(x)=\frac{1}{m}\sum_{t=1}^mf^n_t(x)$ and $\hat{f}(x^j)=\frac{1}{m}\sum_{t=1}^mf^n_t(x^j)$, we have
$$\mathrm{P}(\hat{f}(x^j)\geq \hat{f}(x) \mid (\cap_{t=1}^m\mathrm{F}_t(x))\cap(\cap_{t=1}^m\mathrm{F}_t(x^j)))\geq 1/2.$$
Since $|x|_0=i<n/50$, there are at least $n/3-n/50 \geq n/4$ number of 1-bits in the first $n/3$ positions of $x$, which implies that the probability $\mathrm{P}(\mathrm{F}_t(x))$ of the event $\mathrm{F}_t(x)$ happening is at least $1-(1-q)^{n/4}$. Furthermore, $\mathrm{P}(\mathrm{F}_t(x^j))=\mathrm{P}(\mathrm{F}_t(x))$ for $j>n/3$, since $x$ and $x^j$ have the same first $n/3$ bits. Thus,
\begin{align}
&\mathrm{P}((\cap_{t=1}^m\mathrm{F}_t(x))\cap(\cap_{t=1}^m\mathrm{F}_t(x^j)))\geq (1-(1-q)^{n/4})^{2m} \\
&\ge \left(1-\frac{1}{e^{nq/4}}\right)^{2m} \ge \left(\frac{1}{e}\right)^{2m/(e^{nq/4}-1)} = \left(\frac{1}{e}\right)^{2m/n^{\omega(1)}} \ge \frac{1}{e},
\end{align}
where the equality is by $q=\omega(\log n/n)$ and the last inequality is by $m=O(poly(n))$. By the law of total probability, we have, for all $j>n/3$,
\begin{align*}
\mathrm{P}(\hat{f}(x^j)\ge \hat{f}(x))&\ge \mathrm{P}(\hat{f}(x^j)\ge \hat{f}(x) \mid (\cap_{t=1}^m\mathrm{F}_t(x))\cap(\cap_{t=1}^m\mathrm{F}_t(x^j)))\\
&\quad \cdot \mathrm{P}((\cap_{t=1}^m\mathrm{F}_t(x))\cap(\cap_{t=1}^m\mathrm{F}_t(x^j)))\ge \frac{1}{2e}.
\end{align*}
Then, we can get a lower bound on the negative drift:
\begin{align*}
\mathrm{E}^-
&\ge \sum_{j: j>n/3 \wedge x_j=1}\frac{1}{n}\left(1-\frac{1}{n}\right)^{n-1}\cdot P(\hat{f}(x^j)\ge \hat{f}(x)) \cdot 1 \\
&\ge \left(\frac{2n}{3}-\frac{n}{50}\right)\cdot \frac{1}{en}\cdot \frac{1}{2e} \ge \frac{1}{36}.
\end{align*}

By subtracting $\mathrm{E}^-$ from $\mathrm{E}^+$, we have, for $1 \leq i< n/50$,
$$
\mathrm{E}(X_t-X_{t+1}\mid X_t=i)=\mathrm{E}^+-\mathrm{E}^-\leq \frac{1}{50}-\frac{1}{36}.
$$
Thus, condition~(1) of Theorem~\ref{simplified-drift} holds. It is easy to verify that condition (2) of Theorem~\ref{simplified-drift} holds with $\delta =1$ and $r(l)=2$. Note that $l=b-a=n/50$. By Theorem~\ref{simplified-drift}, we get that the expected number of iterations is exponential, and then the expected running time is also exponential.\vspace{0.8em}
\end{myproof}

For the one-bit noise model, we prove in Theorem~\ref{theo-leadingones-sampling-upper3} that the (1+1)-EA using sampling can always solve the LeadingOnes problem in polynomial time. The proof is finished by applying the additive drift theorem. Lemma~\ref{lemma-leadingones-sampling-prob3} gives some bounds on the probability $\mathrm{P}(\hat{f}(x')-\hat{f}(x))$ of accepting the offspring solution $x'$, which will be used in the proof of Theorem~\ref{theo-leadingones-sampling-upper3}.

From case~(2) of Lemma~\ref{lemma-leadingones-sampling-prob3}, we can observe that when the solution is close to the optimum $1^n$, the probability of accepting $1^n$ is small, which is different from the situation in bit-wise noise (as shown in case~(2) of Lemmas~\ref{lemma-leadingones-sampling-prob} and~\ref{lemma-leadingones-sampling-prob2}). If directly using the distance function constructed in the proof of Theorems~\ref{theo-leadingones-sampling-upper1} and~\ref{theo-leadingones-sampling-upper2}, this small acceptance probability will make the positive drift $\mathrm{E}^+$ not large enough, and then the condition of the additive drift theorem is unsatisfied. To address this issue, our idea is to re-design the distance function such that the distance from non-optimal solutions to the optimum $1^n$ is much larger than that between non-optimal solutions. Then, the small probability of accepting $1^n$ can be compensated by the significant decrease on the distance after accepting $1^n$; thus the positive drift can still be large enough to make the condition of the additive drift theorem hold.

Note that in the proof of Lemma~\ref{lemma-leadingones-sampling-prob3}, we use Berry-Esseen inequality~\cite{shevtsova2007sharpening} and Hoeffding's inequality, instead of Chebyshev's inequality used in the proof of Lemmas~\ref{lemma-leadingones-sampling-prob} and~\ref{lemma-leadingones-sampling-prob2}. When the solution $x$ is close to the optimum $1^n$, the expectation of $f^n(1^n)-f^n(x)$ is lower bounded by a negative value. Thus, for deriving a lower bound on the probability $\mathrm{P}(\hat{f}(1^n) \geq \hat{f}(x))$, Chebyshev's inequality fails, while we apply Berry-Esseen inequality~\cite{shevtsova2007sharpening}. The analysis shows that a moderate sample size $m=4n^4\log n/15$ can make this probability not too small. With this sample size, to derive a small enough upper bound on the probability $\mathrm{P}(\hat{f}(x)\leq \hat{f}(y))$ for two solutions $x$ and $y$ with $\mathrm{E}(f^n(x)-f^n(y))>0$, we have to use Hoeffding's inequality, which is tighter than Chebyshev's inequality.

\begin{lemma}\label{lemma-leadingones-sampling-prob3}
For the LeadingOnes problem under one-bit noise, if using sampling with $m=4n^4\log n/15$, it holds that
\begin{enumerate}[{(1)}]
  \item for any $x$ with $\mathrm{LO}(x)=i<n$ and $y$ with $\mathrm{LO}(y) \leq i-2$ or $\mathrm{LO}(y)=i-1 \wedge y_{i+1}=0$, $\mathrm{P}(\hat{f}(x) \leq \hat{f}(y))\leq 2n^{-\frac{2}{15}}$; furthermore, if $i \leq n-4$, $\mathrm{P}(\hat{f}(x) \leq \hat{f}(y))\leq 2n^{-\frac{32}{15}}$.
  \item for any $y$, if $\mathrm{LO}(y) \leq n-4$, $\mathrm{P}(\hat{f}(1^n) \leq \hat{f}(y))\leq 2n^{-\frac{8}{15}}$; if $\mathrm{LO}(y) \in \{n-3,n-2,n-1\}$, $\mathrm{P}(\hat{f}(1^n) \geq \hat{f}(y))\geq 1/n^2$.
\end{enumerate}
\end{lemma}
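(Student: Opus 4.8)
\emph{Overall approach.} The plan is to follow the same two-step template as Lemmas~\ref{lemma-leadingones-sampling-prob} and~\ref{lemma-leadingones-sampling-prob2}: for each comparison I first compute the expectation $\mu$ of the one-step noisy-fitness difference under one-bit noise, and then turn a bound on $\mu$ into a bound on the sampled comparison probability for the $m$-fold average. The twist, already announced before the statement, is that near the optimum the expected difference $\mathrm{E}(f^n(1^n)-f^n(y))$ can be negative, so I would no longer use Chebyshev's inequality throughout; instead I use Hoeffding's inequality whenever the expected gap is positive (all of case~(1) and the part of case~(2) with $\mathrm{LO}(y)\le n-4$), and Berry--Esseen~\cite{shevtsova2007sharpening} to produce the lower bound $1/n^2$ in the negative-gap regime $\mathrm{LO}(y)\in\{n-3,n-2,n-1\}$.

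\emph{One-bit-noise expectations.} First I would record $\mathrm{E}(f^n(x))$ for $\mathrm{LO}(x)=i$ by conditioning on which bit noise flips: one of the $i$ leading $1$-bits (giving a smaller fitness), the first $0$-bit (giving a larger fitness between $i+1$ and $n$), or a later bit (leaving the fitness at $i$). Subtracting two such expressions for adjacent levels gives a clean per-level gap; a short calculation shows that, away from the top, this gap is positive and at least $(1-p)+p(n-i)/n$ for the transition into level $i$, whereas the top transition satisfies $\mathrm{E}(f^n(1^n)-f^n(1^{n-1}0))=1-p-p/n$, which becomes negative for $p$ close to $1$. This single negative contribution, which cannot be offset until enough clean positive levels accumulate, is exactly what forces the thresholds $i\le n-4$ and $\mathrm{LO}(y)\in\{n-3,n-2,n-1\}$ to appear in the statement.

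\emph{Positive-gap cases via Hoeffding.} For every comparison with a positive expected gap I would sum the per-level bounds to obtain an explicit lower bound $\mu\ge c/n$, with $c$ growing as the compared solutions move away from the top. Writing $\hat f(x)-\hat f(y)$ as the difference of two averages of $m$ independent evaluations, each valued in $[0,n]$, and splitting the bad event $\{\hat f(x)\le \hat f(y)\}$ according to whether $\hat f(x)$ drops below or $\hat f(y)$ rises above its mean by $\mu/2$, Hoeffding's inequality gives $\mathrm{P}(\hat f(x)\le \hat f(y))\le 2\exp(-m\mu^2/(2n^2))$. Substituting $m=4n^4\log n/15$ converts $\mu\ge c/n$ into $2n^{-2c^2/15}$, which reproduces the stated bounds $2n^{-2/15}$, $2n^{-8/15}$ and $2n^{-32/15}$ for $c=1,2,4$ respectively; the leading factor $2$ is precisely the union-bound factor.

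\emph{The hard case: Berry--Esseen near the optimum.} The main obstacle is the sub-case $\mathrm{LO}(y)\in\{n-3,n-2,n-1\}$ of case~(2), where $\mu=\mathrm{E}(f^n(1^n)-f^n(y))$ is negative of order $-\Theta(1/n)$ (for $p$ close to $1$; otherwise the gap is nonnegative and the lower bound is immediate), and I must lower-bound the \emph{unfavourable} event $\mathrm{P}(\hat f(1^n)\ge \hat f(y))$ by $1/n^2$. Setting $S=\sum_{t=1}^m(f^n_t(1^n)-f^n_t(y))$, with mean $m\mu$, variance $m\sigma^2$ where $\sigma=\Theta(n)$, and third absolute moment $\rho=O(n^3)$, I would normalise to $\mathrm{P}(S\ge0)=\mathrm{P}(Z_m\ge -\sqrt{m}\,\mu/\sigma)$ and apply Berry--Esseen: $\mathrm{P}(Z_m\ge t)\ge (1-\Phi(t))-C\rho/(\sigma^3\sqrt{m})$, where $\Phi$ is the standard normal distribution function. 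The entire calibration of the sample size is designed so that the normalised threshold $t=-\sqrt{m}\,\mu/\sigma=\Theta(\sqrt{\log n})$ stays strictly below the level at which the Gaussian tail $1-\Phi(t)$ falls under roughly $2/n^2$, while the Berry--Esseen error $C\rho/(\sigma^3\sqrt{m})=O(1/(n^2\sqrt{\log n}))$ is of strictly smaller order; subtracting then leaves $\mathrm{P}(\hat f(1^n)\ge\hat f(y))\ge 1/n^2$. I expect the delicate part to be pinning $\mu$, $\sigma$ and $\rho$ down tightly enough that the Gaussian tail provably dominates the Berry--Esseen error with the specific constant $4/15$; this is also why the sample size must be kept at the moderate value $m=4n^4\log n/15$ rather than a polynomially larger one, since a larger $m$ would push $t$ up and drive $\mathrm{P}(\hat f(1^n)\ge\hat f(y))$ below $1/n^2$, invalidating the distance-function argument of Theorem~\ref{theo-leadingones-sampling-upper3}.
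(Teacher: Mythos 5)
Your proposal is correct and follows essentially the same route as the paper's proof: the same one-bit-noise expectation computations (including the exact top-level gap $1-p-p/n$, the refined per-level bound $(1-p)+p(n-i)/n$ that exploits $y_{i+1}=0$, and the unified lower bound of order $(n-i)/n$), the same Hoeffding step giving $2\exp(-m\mu^2/(2n^2))$ and hence $2n^{-2/15}$, $2n^{-8/15}$, $2n^{-32/15}$ for $\mu\ge 1/n, 2/n, 4/n$, the same case split on $p$ near $1$ versus not, and the same Berry--Esseen argument with $\sigma^2=\Theta(n^2)$, $\rho=O(n^3)$ and normalised threshold $\Theta(\sqrt{\log n})$ calibrated so the Gaussian tail dominates the $O(1/(n^2\sqrt{\log n}))$ error. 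The only cosmetic deviations are that you telescope per-level gaps where the paper bounds the two-level difference directly, and you union-bound two one-sided deviations where the paper applies Hoeffding to the difference $\hat f(x)-\hat f(y)$; both yield identical exponents.
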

\begin{myproof}
The proof is finished by deriving a lower bound on the expectation of $f^n(x)-f^n(y)$ and then applying Hoeffding's inequality or Berry-Esseen inequality~\cite{shevtsova2007sharpening}. We first consider case~(1). For any $x$ with $\mathrm{LO}(x)=i<n$,
\begin{align}\label{LO,Onebit:E_lower}
&\mathrm{E}(f^n(x))\ge (1-p)\cdot i+\sum_{j=1}^i\frac{p}{n}\cdot(j-1)+\frac{p}{n}\cdot (i+1)+p\frac{n-i-1}{n}\cdot i,\\
&\mathrm{E}(f^n(x))\le (1-p)\cdot i+\sum_{j=1}^{i}\frac{p}{n}\cdot(j-1)+\frac{p}{n}\cdot n+p\frac{n-i-1}{n}\cdot i.
\end{align}
By applying these two inequalities, we have, for any $k< i<n$,
\begin{align}\label{mid-eq11}
&\mathrm{E}(f^n(x)-f^n(y)\mid \mathrm{LO}(x)=i \wedge \mathrm{LO}(y)=k)\\
&\ge (1-p)(i-k)+\frac{p}{n}\left((i-k)\left(n-\frac{i+k+3}{2}\right)+i+1-n\right).
\end{align}
Thus, for any $x$ with $\mathrm{LO}(x)=i$ and $y$ with $\mathrm{LO}(y)=k \leq i-2$ (where $2 \leq i<n$), we have
\begin{align}
\mathrm{E}(f^n(x)-f^n(y))\geq 2(1-p)+\frac{p}{n}(n-i) \geq \frac{n-i}{n}.
\end{align}
When $\mathrm{LO}(y)=i-1 \wedge y_{i+1}=0$, if we directly use Eq.~(\refeq{mid-eq11}), we will get a lower bound $1-p$, which is 0 for $p=1$. Since $y_{i+1}=0$, we actually can get an exact value of $\mathrm{E}(f^n(y))$:
\begin{align}
\mathrm{E}(f^n(y))= (1-p)\cdot (i-1)+\sum_{j=1}^{i-1}\frac{p}{n}\cdot(j-1)+\frac{p}{n}\cdot i+p\frac{n-i}{n}\cdot (i-1).
\end{align}
By combining this equality and the lower bound in Eq.~(\refeq{LO,Onebit:E_lower}), we have that, for any $x$ with $\mathrm{LO}(x)=i$ and $y$ with $\mathrm{LO}(y) = i-1 \wedge y_{i+1}=0$ (where $1 \leq i<n$),
\begin{align} \label{LO,Onebit:Diff_E-A}
\mathrm{E}(f^n(x)-f^n(y))
&\ge 1-p+\frac{p}{n}(i-1)+\frac{p}{n}+\frac{p}{n}(n-2i) \geq \frac{n-i}{n}.
\end{align}
Thus, we have a unified lower bound $(n-i)/n$ on $\mathrm{E}(f^n(x)-f^n(y))$ for case~(1). Denote $\mathrm{E}(f^n(x)-f^n(y))$ as $\mu$. We have $\mu \geq (n-i)/n\geq 1/n$. Since $\hat{f}(x)$ is the average of $m=4n^4\log n/15$ independent evaluations, it is easy to verify that $\mathrm{E}(\hat{f}(x)-\hat{f}(y))=\mu$. Furthermore, $|f^n(x)-f^n(y)| \leq n$. By Hoeffding's inequality, we get
\begin{equation} \label{LO,Onebit:Hoeffding}
\mathrm{P}(\hat{f}(x)\leq \hat{f}(y)) \le \mathrm{P}(|\hat{f}(x)-\hat{f}(y) -\mu|\ge \mu)
\le 2e^{-\frac{2m^2\mu^2}{m(2n)^2}}
\le 2n^{-\frac{2}{15}}.
\end{equation}
When $i \leq n-4$, $\mu \geq (n-i)/n\geq 4/n$. By applying this lower bound to the above inequality, we get $$\mathrm{P}(\hat{f}(x)\leq \hat{f}(y)) \leq 2n^{-\frac{32}{15}}.$$ Thus, case~(1) holds.

For case~(2), we are to analyze $\mathrm{P}(\hat{f}(1^n) \leq \hat{f}(y))$ or $\mathrm{P}(\hat{f}(1^n) \geq \hat{f}(y))$, where $\mathrm{LO}(y)<n$. $\mathrm{E}(f^n(1^n))$ can be calculated as follows:
\begin{align} \label{LO,Onebit:E(n)}
\mathrm{E}(f^n(1^n))= (1-p)\cdot n+\sum_{j=1}^n\frac{p}{n}\cdot(j-1).
\end{align}
By combining this equality and the upper bound in Eq.~(\ref{LO,Onebit:E_lower}), we get, for any $y$ with $\mathrm{LO}(y)=k<n$,
\begin{align} \label{LO,Onebit:E(n-k)}
\mathrm{E}(f^n(1^n)-f^n(y)) \geq (n-k)\left(1-p+\frac{p}{n}\frac{n-k-3}{2}\right).
\end{align}
When $k \leq n-4$, $\mathrm{E}(f^n(1^n)-f^n(y))\geq 2/n$. By Hoeffding's inequality, we get $$\mathrm{P}(\hat{f}(1^n) \leq \hat{f}(y)) \leq 2n^{-\frac{8}{15}}.$$ When $k \in \{n-3,n-2,n-1\}$,
$$
\mu=\mathrm{E}(f^n(1^n)-f^n(y))\geq 1-p-p/n.
$$
If $p\leq 1-2/n$, we have $\mu\geq 1/n$. By Hoeffding's inequality,
\begin{align}\label{mid-eq10}
\mathrm{P}(\hat{f}(1^n) \geq \hat{f}(y)) \geq 1-2n^{-\frac{2}{15}}.
\end{align}
If $p> 1-2/n$, $\mu \geq -1/n$. We then use Berry-Esseen inequality~\cite{shevtsova2007sharpening} to derive a lower bound on $\mathrm{P}(\hat{f}(1^n) \geq \hat{f}(y))$. Let $Y=f^n(1^n)-f^n(y)-\mu$. Note that $\mathrm{E}(Y)=0$. Denote the variance $\mathrm{Var}(Y)$ as $\sigma^2$. Then, $\sigma^2=\mathrm{Var}(f^n(1^n))+\mathrm{Var}(f^n(y))$. For $\mathrm{Var}(f^n(1^n))$, it can be calculated as follows:
\begin{align*}
\mathrm{Var}(f^n(1^n))
& =\mathrm{E}((f^n(1^n))^2)-(\mathrm{E}(f^n(1^n))^2 \\
& =(1-p)n^2+\frac{p}{n}\frac{(n-1)n(2n-1)}{6}-\left((1-p)n+\frac{p}{n}\frac{n(n-1)}{2}\right)^2 \\
& \ge \frac{p(n-1)(n-1/2)}{3}-\left(1+\frac{p(n-1)}{2}\right)^2 \ge \frac{n^2}{14},
\end{align*}
where the last inequality holds because for $p>1-2/n$ and $n$ being large enough, the minimum is reached when $p=1$. Thus, we get $\sigma^2\geq n^2/14$. Since $-2n\le Y\le 2n$, $\rho=\mathrm{E}(|Y|^3)\le 8n^3$. Note that $\hat{f}(1^n)-\hat{f}(y)-\mu$ is the average of $m$ independent random variables, which have the same distribution as $Y$. By Berry-Esseen inequality~\cite{shevtsova2007sharpening}, we have
\begin{align}\label{mid-eq8}
\mathrm{P}\left(\frac{(\hat{f}(1^n)-\hat{f}(y)-\mu)\sqrt{m}}{\sigma}\le x\right)-\Phi(x)\le \frac{\rho}{\sigma^3\sqrt{m}},
\end{align}
where $\Phi(x)$ is the cumulative distribution function of the standard normal distribution. Thus, for $p>1-2/n$,
\begin{align}\label{mid-eq9}
\mathrm{P}(\hat{f}(1^n) \geq \hat{f}(y))&=P(\hat{f}(1^n)- \hat{f}(y)-\mu\ge -\mu)\\
&\geq 1-P(\hat{f}(1^n)- \hat{f}(y)-\mu\leq -\mu)\\
& \ge 1-\Phi\left(\frac{-\mu\sqrt{m}}{\sigma}\right)-\frac{\rho}{\sigma^3\sqrt{m}} \\
& =\Phi\left(\frac{\mu\sqrt{m}}{\sigma}\right)-\frac{\rho}{\sigma^3\sqrt{m}} \ge\Phi\left(-\frac{\sqrt{m}}{n\sigma}\right)-\frac{\rho}{\sigma^3\sqrt{m}} \\
& \ge\Phi\left(-\sqrt{\frac{56}{15}\log n}\right)-O\left(\frac{1}{n^2\sqrt{\log n}}\right) \\
& \ge \frac{1}{\sqrt{2\pi}}\int_{-2\sqrt{\log n}}^{-\sqrt{\frac{56}{15}\log n}}e^{-\frac{t^2}{2}}\mathrm{d}t -O\left(\frac{1}{n^2\sqrt{\log n}}\right) \ge\frac{1}{n^2},
\end{align}
where the second inequality is by Eq.~(\refeq{mid-eq8}), the third inequality is $\mu \geq -1/n$, the fourth is by $m=4n^4\log n/15$, $\sigma^2 \geq n^2/14$ and $\rho \leq 8n^3$, and the last holds with sufficiently large $n$. According to Eqs.~(\refeq{mid-eq10}) and~(\refeq{mid-eq9}), we get that, when $\mathrm{LO}(y) \in \{n-3,n-2,n-1\}$,
$$
\mathrm{P}(\hat{f}(1^n) \geq \hat{f}(y)) \geq 1/n^2.
$$
Thus, case~(2) holds.\vspace{0.8em}
\end{myproof}

The following theorem shows that for one-bit noise, using sampling with $m=4n^4\log n/15$ is sufficient to make the (1+1)-EA solve LeadingOnes in polynomial time.

\begin{theorem}\label{theo-leadingones-sampling-upper3}
For the (1+1)-EA on LeadingOnes under one-bit noise, if using sampling with $m=4n^4\log n/15$, the expected running time is polynomial.
\end{theorem}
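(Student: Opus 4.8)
The plan is to apply the additive drift theorem (Lemma~\ref{additive-drift}) with a distance function tailored to the difficulty flagged just before Lemma~\ref{lemma-leadingones-sampling-prob3}: in contrast to bit-wise noise, the optimum $1^n$ is accepted only with probability $\geq 1/n^2$ once the parent already has $\mathrm{LO}(x)\in\{n-3,n-2,n-1\}$ (case~(2) of Lemma~\ref{lemma-leadingones-sampling-prob3}). I would therefore keep the shape $(1+c/n)^n-(1+c/n)^i$ used in Theorems~\ref{theo-leadingones-sampling-upper1} and~\ref{theo-leadingones-sampling-upper2}, but add a large polynomial bonus $D=n^{\Theta(1)}$ to every non-optimal state, i.e. set $V(1^n)=0$ and
$$V(x)=\left(1+\frac c n\right)^n-\left(1+\frac c n\right)^i + D \quad\text{for } \mathrm{LO}(x)=i<n,$$
with $c$ a suitable constant. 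The bonus cancels in every transition between two non-optimal states, so it inflates only the distance dropped when $1^n$ is actually reached; this is precisely the mechanism needed to offset the tiny acceptance probability of $1^n$ near the optimum.

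The drift analysis then splits on how close the parent is to the optimum. For a \emph{generic} parent with $\mathrm{LO}(x)=i\leq n-4$ I would reuse the computation of Theorem~\ref{theo-leadingones-sampling-upper1} almost verbatim: flipping only the first $0$-bit yields an offspring accepted with probability $\geq 1/2$ (by case~(1) of Lemma~\ref{lemma-leadingones-sampling-prob3}, or by case~(2) if that offspring is $1^n$, which is harmless since $i\le n-4$), giving $\mathrm{E}^+\geq \frac{c}{6n^2}(1+c/n)^i$; the negative drift is controlled with the stronger estimate $\mathrm{P}(\hat f(x')\ge \hat f(x))\le 2n^{-32/15}$ (available exactly because the parent has $i\le n-4$) together with the $1/n^2$ mutation cost of the ``$\mathrm{LO}(x')=i-1\wedge x'_{i+1}=1$'' case, so $\mathrm{E}^-\le (2n^{-32/15}+1/n^2)(1+c/n)^i$. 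Picking $c$ a large enough constant makes the net drift at least $\frac{1}{6n^2}$, and here $D$ plays no role.

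The delicate regime is the near-optimal one, $\mathrm{LO}(x)\in\{n-3,n-2,n-1\}$, where the construction earns its keep. Here the negative drift can only be bounded with the weaker estimate $2n^{-2/15}$ from case~(1), so $\mathrm{E}^-$ is a vanishing-but-not-tiny $O(n^{-2/15})$ rather than $O(1/n^2)$; merely increasing the number of leading ones by one does not beat this. Instead I would lower-bound $\mathrm{E}^+$ by the single event that flips the remaining $0$-bits (at most three, since all $0$-bits now sit in the last three positions) to jump directly to $1^n$: this mutation has probability $\Omega(1/n^3)$, is accepted with probability $\geq 1/n^2$ by case~(2), and reduces the distance by at least $D$, so $\mathrm{E}^+\geq \Omega(D/n^5)$. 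Taking $D$ a sufficiently large constant power of $n$ (say $n^5$) makes this dominate the $O(n^{-2/15})$ negative drift with room to spare, keeping the net drift positive and in fact far above $\frac{1}{6n^2}$.

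Combining the two regimes gives a uniform drift bound of $\frac{1}{6n^2}$, while $V(\xi_0)\le (1+c/n)^n+D=n^{\Theta(1)}$; Lemma~\ref{additive-drift} then yields $\mathrm{E}(\tau\mid\xi_0)=n^{O(1)}$, and multiplying by the per-iteration cost $2m=\Theta(n^4\log n)$ delivers a polynomial expected running time. I expect the main obstacle to be exactly the near-optimal regime: one must (i) check that from every configuration with $\mathrm{LO}(x)\in\{n-3,n-2,n-1\}$ the optimum is reached by flipping only the at most three trailing $0$-bits, so that case~(2) of Lemma~\ref{lemma-leadingones-sampling-prob3} is applicable, and (ii) calibrate $D$ against the weaker $n^{-2/15}$ acceptance estimate so that the engineered distance gap genuinely compensates for the $1/n^2$ acceptance probability of $1^n$ while still leaving $V(\xi_0)$ polynomial.
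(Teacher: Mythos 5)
Your proposal is correct and follows essentially the same route as the paper's proof: additive drift with a distance function engineered so that the drop upon accepting $1^n$ is polynomially large compared to the distances between non-optimal states, the same two-regime split ($i\le n-4$ versus $i\in\{n-3,n-2,n-1\}$ using the two tiers of bounds in Lemma~\ref{lemma-leadingones-sampling-prob3}), and the same jump-to-optimum event of probability $\Omega(1/n^3)\cdot \frac{1}{n^2}$ compensated by the large distance drop. The paper realizes the identical mechanism with $V(x)=n-i/n^6$ for $i<n$ (tiny inter-level gaps, gap $\approx n$ to the optimum) rather than your exponential shape plus bonus $D$; this is an inessential difference in parametrization.
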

\begin{myproof}
We use Theorem~\ref{additive-drift} to prove it. We first construct a distance function $V(x)$ as, for any $x$ with $\mathrm{LO}(x)=i$,
 \[
 V(x)=\left\{\begin{array}{ll}
 n-i/n^6, & \quad i\leq n-1; \\
 0,  & \quad i=n.
\end{array}\right.
 \]
Then, we investigate $\mathrm{E}(V(\xi_t)-V(\xi_{t+1}) \mid \xi_t=x)$ for any $x$ with $\mathrm{LO}(x)=i<n$. We divide the drift into two parts: positive $\mathrm{E}^+$ and negative $\mathrm{E}^-$. That is, $$\mathrm{E}(V(\xi_t)-V(\xi_{t+1}) \mid \xi_t=x)=\mathrm{E}^+-\mathrm{E}^-.$$

For $i\leq n-4$, the lower bound analysis on $\mathrm{E}^+$ is similar to that in the proof of Theorem~\ref{theo-leadingones-sampling-upper1}. We only consider that the $(i+1)$-th bit of $x$ is flipped and the other bits keep unchanged, whose probability is $\frac{1}{n}(1-\frac{1}{n})^{n-1}$. The offspring solution $x'$ is the same as $x$ except the $(i+1)$-th bit, and $\mathrm{LO}(x')>\mathrm{LO}(x)=i$. According to definition of $V(x)$, we know that the decrease on the distance is at least $1/n^6$. If $\mathrm{LO}(x')=n$, $\mathrm{P}(\hat{f}(x') \leq \hat{f}(x)) \leq 2n^{-\frac{8}{15}}$ by case~(2) of Lemma~\ref{lemma-leadingones-sampling-prob3}. If $\mathrm{LO}(x')<n$, it must hold that $\mathrm{LO}(x)\leq \mathrm{LO}(x')-1 \wedge x_{\mathrm{LO}(x')+1}=x'_{\mathrm{LO}(x')+1}=0$. By case~(1) of Lemma~\ref{lemma-leadingones-sampling-prob3}, $\mathrm{P}(\hat{f}(x') \leq \hat{f}(x)) \leq 2n^{-\frac{2}{15}}$. Thus, the probability $\mathrm{P}(\hat{f}(x') \geq \hat{f}(x))$ of accepting the offspring solution $x'$ is at least $1-\max\{2n^{-\frac{8}{15}},2n^{-\frac{2}{15}}\} \geq 1/2$, where the inequality holds with sufficiently large $n$. Then, $\mathrm{E}^+$ can be lower bounded as follows:
\begin{align} \label{LO,(p,1/n):E+}
\mathrm{E}^+ \geq  \left(1-\frac{1}{n}\right)^{n-1}\frac{1}{n}\cdot \frac{1}{2} \cdot \frac{1}{n^6}\geq \frac{1}{6n^7}.
\end{align}
For the negative drift $\mathrm{E}^{-}$, we need to consider $\mathrm{LO}(x')<\mathrm{LO}(x)=i$. We further divide $\mathrm{LO}(x')<i$ into two cases. If $\mathrm{LO}(x')\leq i-2$ or $\mathrm{LO}(x')= i-1 \wedge x'_{i+1}=0$, then $\mathrm{P}(\hat{f}(x') \geq \hat{f}(x)) \leq 2n^{-\frac{32}{15}}$ by case~(1) of Lemma~\ref{lemma-leadingones-sampling-prob3} (note that $i\leq n-4$ here), and $V(x')-V(x) \leq V(0^n)-V(x)= i/n^6$. If $\mathrm{LO}(x')= i-1 \wedge x'_{i+1}=1$, then $\sum_{x': \mathrm{LO}(x')= i-1 \wedge x'_{i+1}=1}\mathrm{P}_{mut}(x,x') \leq 1/n^2$ since it is necessary to flip the $i$-th and the $(i+1)$-th bits of $x$ in mutation, and $V(x')-V(x)=1/n^6$. Then, $\mathrm{E}^{-}$ can be upper bounded by as follows:
\begin{align*}
\mathrm{E}^- \le\frac{i}{n^6}\cdot 1\cdot \frac{2}{n^{\frac{32}{15}}}+\frac{1}{n^6}\cdot\frac{1}{n^2}\cdot 1=o\left(\frac{1}{n^7}\right).
\end{align*}
By subtracting $\mathrm{E}^-$ from $\mathrm{E}^+$, we have
\begin{align}\label{LO,(p,1/n):E}
\mathrm{E}(V(\xi_t)-V(\xi_{t+1}) \mid \xi_t=x)=\mathrm{E}^+-\mathrm{E}^-\geq \frac{1}{12n^7}.
\end{align}

For $i\in \{n-3,n-2,n-1\}$, we use a trivial upper bound $i/n^6$ on $\mathrm{E}^-$. For the positive drift, we consider that the offspring solution $x'$ is the optimal solution $1^n$, whose probability is at least $\frac{1}{n^3}(1-\frac{1}{n})^{n-3} \geq \frac{1}{en^3}$ since at most three bits of $x$ need to be flipped. The probability of accepting $1^n$ is at least $\frac{1}{n^2}$ by case~(2) of Lemma~\ref{lemma-leadingones-sampling-prob3}. The distance decrease is at least $n-\frac{n-1}{n^6} \geq \frac{n}{2}$. Thus, $\mathrm{E}^+ \geq \frac{1}{en^3} \cdot \frac{1}{n^2} \cdot \frac{n}{2} \geq \frac{1}{2en^4}$. By subtracting $\mathrm{E}^-$ from $\mathrm{E}^+$, we have
$$
\mathrm{E}(V(\xi_t)-V(\xi_{t+1}) \mid \xi_t=x)\geq \frac{1}{2en^4}-\frac{i}{n^6}= \frac{1}{2en^4}-O\left(\frac{1}{n^5}\right) \geq \frac{1}{4en^4}.
$$

By combing the above analyses for $i\leq n-4$ and $i\in \{n-3,n-2,n-1\}$, we get a unified lower bound $1/(12n^7)$ on $\mathrm{E}(V(\xi_t)-V(\xi_{t+1}) \mid \xi_t=x)$. Since $V(x)\leq n$, we have $\mathrm{E}(\tau\mid \xi_0)=O(n^{8})$ by Theorem~\ref{additive-drift}. Each iteration of the (1+1)-EA using sampling takes $2m=8n^4\log n/15$ number of fitness evaluations, thus the expected running time is polynomial.\vspace{0.8em}
\end{myproof}

Therefore, we have shown that the (1+1)-EA using sampling can always solve LeadingOnes in polynomial time under bit-wise noise $(p,\frac{1}{n})$ (i.e., Theorem~\ref{theo-leadingones-sampling-upper1}) or one-bit noise (i.e., Theorem~\ref{theo-leadingones-sampling-upper3}); while under bit-wise noise $(1,q)$, the tight range of $q$ allowing a polynomial time is $O(\log n/n)$ (i.e., Theorems~\ref{theo-leadingones-sampling-upper2} and~\ref{theo-leadingones-sampling-lower1}). The reason why sampling is ineffective under bit-wise noise $(1,q)$ with $q=\omega(\log n/n)$ is similar to that observed in the analysis of OneMax under bit-wise noise $(1,q)$ with $q=1/2-1/n^{\omega(1)}$ or $q \geq 1/2$. For two solutions $x$ and $y$ with $f(x) > f(y)$, we can find from the calculation of $\mathrm{E}(f^n(x)-f^n(y))$ in the proof of Lemma~\ref{lemma-leadingones-sampling-prob2} that when $q=\omega(\log n/n)$, $\mathrm{E}(f^n(x)-f^n(y))$ will be very small (since $(1-q)^{n-1}=1/n^{\omega(1)}$) or even negative; thus a polynomial sample size is not sufficient to make the probability of accepting the true worse solution $y$ small enough, or it will increase the probability. For the situation where sampling is effective, the analysis on LeadingOnes is a little different from that on OneMax. On OneMax, $\mathrm{E}(f^n(x)-f^n(y))$ is always sufficiently large when $f(x)>f(y)$; thus sampling can make the probability of accepting the true worse solution $y$ small enough and then work. While on LeadingOnes, $\mathrm{E}(f^n(x)-f^n(y))$ is sufficiently large in most cases instead of all cases when $f(x)>f(y)$, but a few exceptions do not affect the effectiveness of sampling.

\section{Conclusion}

In this paper, we theoretically study the (1+1)-EA solving the OneMax and LeadingOnes problems under bit-wise noise, which is characterized by a pair $(p,q)$ of parameters. We derive the ranges of $p$ and $q$ for the running time being polynomial and super-polynomial, respectively. The previously known parameter ranges for the (1+1)-EA solving LeadingOnes under one-bit noise are also improved. Considering that the (1+1)-EA is efficient only under low noise levels, we further analyze the robustness of sampling to noise. We prove that for both bit-wise noise and one-bit noise, using sampling can significantly enlarge the range of noise parameters allowing a polynomial running time. In the future, we shall improve the currently derived bounds on LeadingOnes, as they do not cover the whole range of noise parameters. For proving polynomial upper bounds on the expected running time by using sampling, we only give a sufficiently large sample size, the tightness of which will be studied in our future work. In our analysis, we consider the bit-wise noise model with one parameter fixed. Thus, to analyze the running time under general bit-wise noise is also an interesting future work. Note that our analysis has shown that the performance of the (1+1)-EA solving OneMax under bit-wise noise $(1,\frac{\log n}{30n})$ and $(\frac{\log n}{30n},1)$ is significantly different, which implies that $p \cdot q$ may not be the only deciding factor for the analysis of general bit-wise noise.

\begin{acknowledgements}
We want to thank the reviewers for their valuable comments. This work was supported by the NSFC (61603367, 61672478), the YESS (2016QNRC001), the Science and Technology Innovation Committee Foundation of Shenzhen (ZDSYS201703031748284), and the Royal Society Newton Advanced Fellowship (NA150123).
\end{acknowledgements}

\bibliographystyle{spmpsci}      
\bibliography{ectheory}   

\begin{thebibliography}{10}
\providecommand{\url}[1]{{#1}}
\providecommand{\urlprefix}{URL }
\expandafter\ifx\csname urlstyle\endcsname\relax
  \providecommand{\doi}[1]{DOI~\discretionary{}{}{}#1}\else
  \providecommand{\doi}{DOI~\discretionary{}{}{}\begingroup
  \urlstyle{rm}\Url}\fi

\bibitem{aizawa1994scheduling}
Aizawa, A.N., Wah, B.W.: Scheduling of genetic algorithms in a noisy
  environment.
\newblock Evolutionary Computation \textbf{2}(2), 97--122 (1994)

\bibitem{akimoto2015analysis}
Akimoto, Y., Astete-Morales, S., Teytaud, O.: Analysis of runtime of
  optimization algorithms for noisy functions over discrete codomains.
\newblock Theoretical Computer Science \textbf{605}, 42--50 (2015)

\bibitem{arnold2006general}
Arnold, D.V., Beyer, H.G.: A general noise model and its effects on evolution
  strategy performance.
\newblock IEEE Transactions on Evolutionary Computation \textbf{10}(4),
  380--391 (2006)

\bibitem{auger2011theory}
Auger, A., Doerr, B.: Theory of Randomized Search Heuristics: Foundations and
  Recent Developments.
\newblock World Scientific, Singapore (2011)

\bibitem{back:96}
B{\"{a}}ck, T.: Evolutionary Algorithms in Theory and Practice: Evolution
  Strategies, Evolutionary Programming, Genetic Algorithms.
\newblock Oxford University Press, Oxford, UK (1996)

\bibitem{branke2003selection}
Branke, J., Schmidt, C.: Selection in the presence of noise.
\newblock In: Proceedings of the 5th ACM Conference on Genetic and Evolutionary
  Computation (GECCO'03), pp. 766--777. Chicago, IL (2003)

\bibitem{chang2006new}
Chang, Y., Chen, S.: A new query reweighting method for document retrieval
  based on genetic algorithms.
\newblock IEEE Transactions on Evolutionary Computation \textbf{10}(5),
  617--622 (2006)

\bibitem{corus2014level}
Corus, D., Dang, D.C., Eremeev, A.V., Lehre, P.K.: Level-based analysis of
  genetic algorithms and other search processes.
\newblock In: Proceedings of 13th International Conference on Parallel Problem
  Solving from Nature (PPSN'14), pp. 912--921. Ljubljana, Slovenia (2014)

\bibitem{dang2015efficient}
Dang, D.C., Lehre, P.K.: Efficient optimisation of noisy fitness functions with
  population-based evolutionary algorithms.
\newblock In: Proceedings of the 13th ACM Conference on Foundations of Genetic
  Algorithms (FOGA'15), pp. 62--68. Aberystwyth, UK (2015)

\bibitem{doerr:goldberg:11}
Doerr, B., Goldberg, L.A.: Adaptive drift analysis.
\newblock Algorithmica \textbf{65}(1), 224--250 (2013)

\bibitem{doerr2012ants}
Doerr, B., Hota, A., K{\"o}tzing, T.: Ants easily solve stochastic shortest
  path problems.
\newblock In: Proceedings of the 14th ACM Conference on Genetic and
  Evolutionary Computation (GECCO'12), pp. 17--24. Philadelphia, PA (2012)

\bibitem{doerr:etal:GECCO10}
Doerr, B., Johannsen, D., Winzen, C.: Multiplicative drift analysis.
\newblock Algorithmica \textbf{64}(4), 673--697 (2012)

\bibitem{droste2004analysis}
Droste, S.: Analysis of the (1+1) {EA} for a noisy {O}ne{M}ax.
\newblock In: Proceedings of the 6th ACM Conference on Genetic and Evolutionary
  Computation (GECCO'04), pp. 1088--1099. Seattle, WA (2004)

\bibitem{droste2002analysis}
Droste, S., Jansen, T., Wegener, I.: {On the analysis of the (1+1) evolutionary
  algorithm}.
\newblock Theoretical Computer Science \textbf{276}(1-2), 51--81 (2002)

\bibitem{feldmann2013optimizing}
Feldmann, M., K{\"o}tzing, T.: Optimizing expected path lengths with ant colony
  optimization using fitness proportional update.
\newblock In: Proceedings of the 12th ACM Conference on Foundations of Genetic
  Algorithms (FOGA'13), pp. 65--74. Adelaide, Australia (2013)

\bibitem{friedrich2015robustness}
Friedrich, T., K{\"o}tzing, T., Krejca, M., Sutton, A.: Robustness of ant
  colony optimization to noise.
\newblock Evolutionary Computation \textbf{24}(2), 237--254 (2016)

\bibitem{friedrich2015benefit}
Friedrich, T., K{\"o}tzing, T., Krejca, M., Sutton, A.: The compact genetic
  algorithm is efficient under extreme gaussian noise.
\newblock IEEE Transactions on Evolutionary Computation \textbf{21}(3),
  477--490 (2017)

\bibitem{friedrich2017resampling}
Friedrich, T., K{\"o}tzing, T., Quinzan, F., Sutton, A.: Resampling vs
  recombination: {A} statistical run time estimation.
\newblock In: Proceedings of 14th ACM Conference on Foundations of Genetic
  Algorithms (FOGA'17), pp. 25--35. Copenhagen, Denmark (2017)

\bibitem{giessen2014robustness}
Gie{\ss}en, C., K{\"o}tzing, T.: Robustness of populations in stochastic
  environments.
\newblock Algorithmica \textbf{75}(3), 462--489 (2016)

\bibitem{he2001drift}
He, J., Yao, X.: {Drift analysis and average time complexity of evolutionary
  algorithms}.
\newblock Artificial Intelligence \textbf{127}(1), 57--85 (2001)

\bibitem{jin2005evolutionary}
Jin, Y., Branke, J.: Evolutionary optimization in uncertain environments-a
  survey.
\newblock IEEE Transactions on Evolutionary Computation \textbf{9}(3), 303--317
  (2005)

\bibitem{ma2006evolutionary}
Ma, P., Chan, K., Yao, X., Chiu, D.: An evolutionary clustering algorithm for
  gene expression microarray data analysis.
\newblock IEEE Transactions on Evolutionary Computation \textbf{10}(3),
  296--314 (2006)

\bibitem{neumann2010bioinspired}
Neumann, F., Witt, C.: Bioinspired Computation in Combinatorial Optimization:
  Algorithms and Their Computational Complexity.
\newblock Springer-Verlag, Berlin, Germany (2010)

\bibitem{oliveto2011simplified}
Oliveto, P., Witt, C.: Simplified drift analysis for proving lower bounds in
  evolutionary computation.
\newblock Algorithmica \textbf{59}(3), 369--386 (2011)

\bibitem{oliveto2011simplifiedErratum}
Oliveto, P., Witt, C.: Erratum: {S}implified drift analysis for proving lower
  bounds in evolutionary computation.
\newblock CORR abs/1211.7184  (2012)

\bibitem{prugel2015run}
Pr\"ugel-Bennett, A., Rowe, J., Shapiro, J.: Run-time analysis of
  population-based evolutionary algorithm in noisy environments.
\newblock In: Proceedings of the 13th ACM Conference on Foundations of Genetic
  Algorithms (FOGA'15), pp. 69--75. Aberystwyth, UK (2015)

\bibitem{qian2017noise}
Qian, C., Bian, C., Jiang, W., Tang, K.: Running time analysis of the
  (1+1)-{EA} for {O}ne{M}ax and {L}eading{O}nes under bit-wise noise.
\newblock In: Proceedings of the 19th ACM Conference on Genetic and
  Evolutionary Computation (GECCO'17), pp. 1399--1406. Berlin, Germany (2017)

\bibitem{qian2016sampling}
Qian, C., Yu, Y., Tang, K., Jin, Y., Yao, X., Zhou, Z.H.: On the effectiveness
  of sampling for evolutionary optimization in noisy environments.
\newblock Evolutionary Computation \textbf{26}(2), 237--267 (2018)

\bibitem{qian2015noise}
Qian, C., Yu, Y., Zhou, Z.H.: Analyzing evolutionary optimization in noisy
  environments.
\newblock Evolutionary Computation \textbf{26}(1), 1--41 (2018)

\bibitem{jonathan2014offspring}
Rowe, J.E., Sudholt, D.: The choice of the offspring population size in the
  (1,$\lambda$) evolutionary algorithm.
\newblock Theoretical Computer Science \textbf{545}, 20--38 (2014)

\bibitem{shevtsova2007sharpening}
Shevtsova, I.G.: Sharpening of the upper bound of the absolute constant in the
  berry--esseen inequality.
\newblock Theory of Probability \& Its Applications \textbf{51}(3), 549--553
  (2007)

\bibitem{sudholt2011general}
Sudholt, D.: A new method for lower bounds on the running time of evolutionary
  algorithms.
\newblock IEEE Transactions on Evolutionary Computation \textbf{17}(3),
  418--435 (2013)

\bibitem{sudholt2012simple}
Sudholt, D., Thyssen, C.: A simple ant colony optimizer for stochastic shortest
  path problems.
\newblock Algorithmica \textbf{64}(4), 643--672 (2012)

\bibitem{yu2014switch}
Yu, Y., Qian, C., Zhou, Z.H.: Switch analysis for running time analysis of
  evolutionary algorithms.
\newblock IEEE Transactions on Evolutionary Computation \textbf{19}(6),
  777--792 (2015)

\end{thebibliography}

\end{document}